\pgfplotsset{compat=newest}
\newcommand{\R}{\mathbb{R}}
\DeclareMathOperator*{\Expect}{\mathbb{E}}
\DeclareMathOperator*{\Var}{\mathbb{V}}
\DeclareMathOperator*{\Prob}{\mathbb{P}}
\newcommand{\M}{\mathcal{M}}
\newcommand{\TMix}{\tau_{\rm mix}}
\newcommand{\TRel}{\tau_{\rm rx}}
\newcommand{\EigTwo}{\lambda}
\newcommand{\EigTwoBound}{\Lambda}
\newcommand{\SVar}{v_{\pi}} 
\newcommand{\ITRelVar}{v_{\tau \mathrm{rel}}} 
\newcommand{\ITVn}[1]{\mathrm{trv}^{(#1)}}
\newcommand{\ITRelVar}{\ITVn{\tau \mathrm{rx}}}
\newcommand{\RelITRelVar}[2]{\mathrm{Reltrv}^{#2}_{#1}}
\newcommand{\frange}{R}
\newcommand{\NIterations}{I}
\newcommand{\cyrus}[1]{\textcolor{green!50!black}{\textsc{Cyrus:} \emph{#1}}}
\newcommand{\shahrzad}[1]{\textcolor{blue!60!black}{\textsc{Shahrzad:} \emph{#1}}}
\newcommand{\sophia}[1]{\textcolor{orange!50!black}{\textsc{sophia:} \emph{#1}}}
\newcommand{\todo}[1]{\textcolor{red!50!black}{\textsc{ToDo:} \emph{#1}}}
\newcommand{\eli}[1]{\textcolor{red!50!black}{\textsc{Eli:} \emph{#1}}}
\renewcommand{\cyrus}[1]{}
\renewcommand{\shahrzad}[1]{}
\renewcommand{\sophia}[1]{}
\renewcommand{\todo}[1]{}
\renewcommand{\eli}[1]{}
\newcommand\longleftrightarrowfill@{%
  \arrowfill@\leftarrow\relbar\rightarrow}
\crefname{algorithm}{alg.}{algs.}
\Crefname{algorithm}{Algorithm}{Algorithms}
\crefname{appendix}{appx.}{appcs.}
\Crefname{appendix}{Appendix}{Appendices}
\crefname{corollary}{coro.}{coros.}
\Crefname{corollary}{Corollary}{Corollaries}
\crefname{conjecture}{conjecture}{conjectures}
\Crefname{conjecture}{Conjecture}{Conjectures}
\crefname{definition}{def.}{defs.}
\Crefname{definition}{Definition}{Definition}
\crefname{figure}{fig.}{figs.}
\Crefname{figure}{Figure}{Figures}
\crefname{lemma}{lemma}{lemmas}
\Crefname{lemma}{Lemma}{Lemmas}
\crefname{proposition}{prop.}{props.}
\Crefname{proposition}{Proposition}{Propositions}
\Crefname{section}{Section}{Sections}
\crefname{section}{sect.}{sect.}
\crefname{subsection}{sect.}{sect.}
\Crefname{subsection}{Section}{Sections}
\crefname{subsubsection}{sect.}{sect.}
\Crefname{subsubsection}{Section}{Sections}
\crefname{table}{table}{tables}
\Crefname{table}{Table}{Tables}
\crefname{theorem}{thm.}{thms.}
\Crefname{theorem}{Theorem}{Theorems}
\Crefname{theorem}{Thm.}{Thms.}
\crefname{section}{\S}{\S}
\crefname{subsection}{\S}{\S}
\Crefname{subsection}{\S}{\S}
\crefname{subsubsection}{\S}{\S}
\Crefname{subsubsection}{\S}{\S}
\newtheorem{defin}{Definition}[section]
\newtheorem{theorem}{Theorem}[section]
\newtheorem*{theorem*}{Theorem}
\newtheorem{lemma}[theorem]{Lemma}
\newtheorem{coro}[theorem]{Corollary}
\newtheorem*{problemn}{Problem}
\newtheorem{problem}{Problem}
\newcommand{\paralgo}{\scalebox{0.92}[0.98]{\textsc{ParallelTraceGibbs}}}
\newcommand{\superalgo}{\scalebox{0.92}[0.98]{\textsc{SuperChainTraceGibbs}}}
\newcommand{\proc}{\scalebox{0.92}[0.98]{\textsc{RelMeanEst}}}
\newcommand{\tpa}{\scalebox{0.92}[0.98]{\textsc{Tpa}}}
\newcommand{\TPA}{\tpa}
\newcommand{\ceil}[1]{\left\lceil #1 \right\rceil}
\newcommand{\Zi}[1]{Z{({#1})}}
\newcommand{\gibbs}[1]{\pi_{#1}}
\newcommand{\gibbschain}[2]{{\mathcal G}_{#1,#2}}
\newcommand{\LandauTheta}{\Theta}
\title{
Fast Doubly-Adaptive MCMC 
to 
Estimate 
the Gibbs Partition Function 
with Weak Mixing Time Bounds

}
\author[1]{Shahrzad Haddadan\textcircled{r}}
\author[2]{Yue Zhuang\textcircled{r}}
\author[3]{Cyrus Cousins\textcircled{r}}
\author[4]{Eli Upfal}
\affil[3,4]{  Department of Computer Science, Brown University }
\affil[1,2]{  Data Science Initiative, Brown University }
\affil[1]{
\footnotesize{\texttt{shahrzad\_haddadan@brown.edu}}}
{\affil[2]{
\footnotesize{\texttt{yue\_zhuang1@brown.edu}}}}
{\affil[3]{
\footnotesize{\texttt{cyrus\_cousins@brown.edu}}}
\affil[4]{
\footnotesize{\texttt{eli\_upfal@brown.edu}}}}
\date{}
\begin{document}
\maketitle

\begin{abstract}

We present a novel method for reducing the computational complexity of rigorously estimating the \emph{partition functions} (normalizing constants) of Gibbs (Boltzmann) distributions,
which arise ubiquitously in probabilistic graphical models.
A major obstacle to practical applications of Gibbs distributions is the need to estimate their {partition functions}.
The state of the art in addressing this problem is multi-stage algorithms, which consist of a cooling schedule, and a mean estimator in each step of the schedule. 
While the cooling schedule in these algorithms is adaptive, the mean estimation computations use MCMC as a black-box to draw approximate samples.
We develop a \emph{doubly adaptive} approach, combining the adaptive cooling schedule with an adaptive MCMC mean estimator, whose number of Markov chain steps adapts dynamically to the underlying chain.
Through rigorous theoretical analysis, we prove that our method outperforms the state of the art algorithms in several factors: (1) The computational complexity of our method is smaller; (2) Our method is less sensitive to loose bounds on mixing times, an inherent component in these algorithms; and (3) The improvement obtained by our method is particularly significant in the most challenging regime of high-precision estimation.
We demonstrate the advantage of our method in experiments run on classic factor graphs, such as voting models and Ising models. 

\end{abstract}

\section{Introduction}\label{sec:1}

The Gibbs (Boltzmann) distribution is a family of probability distributions of exponential form. First introduced in the context of statistical mechanics~\cite{Gibbs1902}, Gibbs distributions are now ubiquitous in a variety of other disciplines, such as chemistry~\cite{hellweg2017chm,gibbs1976chm}, economics~\cite{econ2,econ2012} and machine learning. Gibbs distributions are typically used to model the global state of a system as a function of a collection of interdependent random variables, each representing local states in the system. The dependencies in the system are modeled by a \emph{Hamiltonian} function, and the probability distribution is inversely proportional to exponent of the Hamiltonian scaled by the \emph{temperature} (see \cref{eq:gibbsdist} \cref{sc:prelim}). 

Gibbs distributions provide potent statistical inference tools in many machine learning applications. They appear in probabilistic graphical models~\cite{GraphicalmodelsBook,Factorie,libDAI}, including restricted Boltzmann machines \cite{Tosh2016MixingRF,RBMKrause2019}, Markov random fields \cite{kemeny2012denumerable,MRFref2014}, and Bayes networks  \cite{BayesianNet}, and are applied in the analysis of images and graphical data \cite{image,imageGibbs2017,imageGibbs,imageGibbs1984},  topic modeling (LDA) \cite{Griffiths02gibbssamplingLDA,LDA2008,NIPS2007_LDA,topicmodel},
and more~\cite{Cheng2015Gibbs,Plummer2003JAGSAP,Afshar2016ClosedFormGS,DeSaICML2016,GibbsPLMRdesa18a,Desa2019Gibbs,Gibbsredpmlr-v15-gonzalez11a,PGM2016scan,Openbugs,prasad2020learning}.

A major obstacle in applying the Gibbs distribution in practice is the need to compute, or estimate, its  \emph{partition function} (normalizing constant), henceforth written GPF. 
The partition function
 is defined over the Cartesian product of supports of a (typically large) number of variables, making exact computation intractable. Furthermore, due to interdependence of variables, exact sampling is not practically feasible, thus
Markov-chain Monte-Carlo (MCMC) solutions for this problem have been extensively studied ~\cite{neal2001annealed, countingsamplingreduction1986jurrumvaziranivalient,Fishman1994ChoosingSP,Stefankovich2009simulateannealing,gibbsvigoda2007,hubergibbs,Bezkov2006AcceleratingSA,Kolmogorovgibbs,KolmoHarrisGibbs, karagiannis2013annealed}.

Like other MCMC methods, here various heuristics are used. The most well-known  heuristics are the \emph{annealed importance sampling} \cite{neal2001annealed,stordal2015iterative,karagiannis2013annealed}  or \emph{convergence diagnostics methods}  \cite{diag1,diag3,diag4,diag5}. 
Unfortunately, these methods are often error-prone, as their correctness is only proven asymptotically, without rigorous mathematical analysis to bound their estimation error with \emph{finite samples}. In fact, theoretical findings have shown that with no prior knowledge of relevant measures, such as the variance of importance weights in annealed importance sampling, or upper bounds on mixing or relaxation times for diagnostic methods, these methods are either unreliable or computationally intractable 
(see \cite[section~4]{neal2001annealed} or \cite{bhatnagar2011computational,trelestHsuPeres2019}).

On the other hand, theoreticians study this problem by designing Fully Polynomial Randomized Approximation Schemes (FPRAS) (see \cref{prob:fpras}).  
The state of the art FPRAS for estimating the GPF is a multi-stage algorithm involving a sequence of functions at various temperatures, such that the expectation of the product of these functions, or the product of the expectations of said functions, is the GPF.  
FPRAS's are proven to produce (approximate) 
solutions w.h.p., but their performance guarantees  rely  on available upper-bounds on various measures such as variances of estimators or mixing times of Markov chains. In static algorithms, these upper-bounds are given \emph{a priori}, and adaptive\footnote{The usage of the word ``adaptive'' here refers to algorithms which
draw samples progressively and 
adapt their sample complexity  based on empirical estimates until desired conditions are met, as it has been used in \cite{hubergibbs,Kolmogorovgibbs} (see \cref{sc:prelim}), and should not confused with the work of \cite{roberts2009examples}.} algorithms estimate them dynamically, while increasing the sample size until desired properties are mathematically guaranteed.
Thus, adaptive algorithms are less sensitive to looseness of known upper-bounds, more robust, often faster, and more easily applied to various settings.


Most of the research on designing FPRAS's for the GPF is focused on designing   adaptive algorithms to produce  sequences (\emph{cooling schedules}) with minimum  length   while keeping the {variances} of estimators small (thus removing the need to have a-priori known bounds on variances).   
In contrast, the computation of the sequence of mean estimates, which dominates the total computation cost, 
is done by black-box MCMC estimators, with \emph{a priori} known upper bounds on the mixing times of the chains. These upper bounds are
 often loose, and improving them for particular models
is a challenging active area of research \cite{anari2020spectral,gibbsvigoda2007,GibbsVigoda2021,SodaColoring,matchingsanari,haddadan2019mixing}. 
In order to complement the adaptive cooling schedule and reduce dependence on \emph{a priori} bounds on Markov chains' mixing times, it seems necessary to  design an \emph{adaptive}  procedure with theoretical guarantees for MCMC-mean estimation.

\if 0
\cyrus{Sophisticated adaptive cooling schedules have reduced the total number of near-independent samples required in FPRAS's to near-optimal levels \citep{k}.
However, in practice, during mean estimation, we often require on the order of ??? Markov-chain steps to produce a single (approximate) sample, thus it stands to reason that most of the room for improvement to this problem lies in more efficiently estimating these means.}\shahrzad{said later}
\fi


In this work we develop a \emph{doubly adaptive} FPRAS, combining the adaptive cooling schedule with adaptive MCMC mean estimator that dynamically adapts the number of Markov chain steps to the observed underlying chain.
Through rigorous theoretical analysis, we prove that our method outperforms the state of the art algorithms in several factors: (1) The computational complexity of our method is smaller; (2) Our method is less sensitive to loose bounds on mixing times, an inherent component in these algorithms; and (3) The improvement obtained by our method is particularly significant in the most challenging regime of high precision estimates.
We demonstrate the advantage of our method in experiments run on classic factor graphs, such as voting and Ising models~\cite{cipra1987introduction,anari2020spectral,bhatnagar2010reconstruction}. 
\subsection{Preliminaries and Prior Work}\label{sc:prelim}

Assume a \emph{sample space} $\Omega$, \emph{Hamiltonian function} $H:\Omega\rightarrow \{0\}\cup [1,\infty)$, and \emph{inverse temperature} parameter $\beta \in \R$, referred to as \emph{inverse temperature}. The \emph{Gibbs distribution} on $\Omega$, $H(\cdot)$, and $\beta$ is then characterized by probability law
\begin{equation}\label{eq:gibbsdist}
    \forall x\in \Omega: \ \gibbs{\beta}(x) \doteq {\frac{1}{\Zi{\beta}}}\exp\bigl(-\beta H(x)\bigr) \enspace .
\end{equation}
Here $\Zi{\beta}$ is the \emph{normalizing constant} or \emph{Gibbs partition function} (GPF) of the distribution, with 
\begin{equation}\label{eq:normalizing}
\Zi{\beta} \doteq {\sum_{x\in \Omega}} \exp\bigl(-\beta H(x)\bigr) \enspace .\end{equation}


Estimating the GPF $\Zi{\beta}$, is computationally challenging, since typically the size of $\Omega$ is exponential in the number of
variables, and the values of random terms in the sum have large variance (due to the exponential). The following problem has been extensively studied, and is the focus of this paper.


\begin{problem}\label{prob:fpras} Given a domain $\Omega$, a Hamiltonian function $H$, and a parameter $\beta$, design a Fully Polynomial Randomized Approximation Scheme (FPRAS) for estimating  the partition function $\Zi{\beta} \doteq \sum_{x\in \Omega} \exp\bigl(-\beta H(x)\bigr)$.
In other words, for user-supplied $\varepsilon$, the task is to produce an estimate $\hat{Z}(\beta)$, such that with probability at least $1-\delta$, we have  $(1 - \varepsilon){Z}(\beta) \leq \hat{Z}(\beta) \leq (1 + \varepsilon){Z}(\beta)$, in time polynomial in $\nicefrac{1}{\varepsilon}$, $\ln(\nicefrac{1}{\delta})$, and all other problem parameters (e.g., the number of vertices in an Ising model, or neurons in an RBM).
\end{problem}


All known scalable solutions to this problem rely on Monte-Carlo Markov-chain (MCMC) methods, and their execution cost is dominated by the total number of Markov chain steps they execute.
We therefore follow past work, and analyze our 
algorithms in terms of number of the Markov chain steps. 

 \paragraph{TPA-Based Adaptive Cooling Schedules}
 Building on extensive earlier work~\cite{Fishman1994ChoosingSP,Bezkov2006AcceleratingSA,Stefankovich2009simulateannealing,Bezkov2006AcceleratingSA}, the current state of the art is due to
Huber and Schott~\cite{TPAhuber_schott_2014}, with  Kolmogorov's sharper analysis~\cite{Kolmogorovgibbs}.
They introduce the \emph{paired product estimator} (PPE), see \cref{defin:pairproduct}, and apply the \emph{tootsie-pop algorithm} (TPA) to adaptively compute a near-optimal \emph{cooling schedule}, i.e., a sequence of inverse temperatures  $\beta_0< \beta_1 < \dots < \beta_{\ell-1} < \beta_{\ell}$ satisfying $\beta_{\ell}=\beta$, and that $\Zi{\beta_0}$ is easy to compute, e.g., $\beta_{0} = 0$ is often convenient, since $\Zi{0}= \abs{\Omega}$. We thus define $Q\doteq\nicefrac{\Zi{\beta}}{\Zi{\beta_0}}$ and estimate it using the paired product estimator. 
\begin{defin}[PPE \cite{hubergibbs}]\label{defin:pairproduct}
Assume a \emph{cooling schedule} $\beta_{0}, \beta_{1}, \dots, \beta_{\ell}$. 
For each pair $(\beta_i,\beta_{i+1})$ in the schedule, we define two random variables, $X_i\sim \gibbs{\beta_i}$ and $Y_i\sim \gibbs{\beta_{i+1}}$, all independent, and we then define $f_{\beta_i,\beta_{i+1}}\doteq\exp\bigl(-\frac{\beta_{i+1}-\beta_{i}}{2}H(X_i)\bigr)$ and $g_{\beta_i,\beta_{i+1}}\doteq\exp\bigl(\frac{\beta_{i+1}-\beta_{i}}{2}H(Y_i)\bigr)$. 
It is easy to verify that 
$\mathbb{E}[f_{\beta_i,\beta_{i+1}}]= \Zi{\frac{\beta_i+\beta_{i+1}}{2}}/\Zi{\beta_i}$, and
$\mathbb{E}[g_{\beta_i,\beta_{i+1}}]= \Zi{\frac{\beta_i+\beta_{i+1}}{2}}/\Zi{\beta_{i+1}}$.
We then define $F\doteq\prod_{i=1}^k f_{\beta_i,\beta_{i+1}}$, 
$G\doteq\prod_{i=1}^k g_{\beta_i,\beta_{i+1}}
$. Letting $\hat{\mu}$ and $\hat{\nu}$ denote 
empirical estimates  of $\mathbb{E}[F]$ and $\mathbb{E}[G]$, respectively, the \emph{paired product estimator} (PPE) is $\hat{Q}\doteq \nicefrac{\hat{\mu}}{\hat{\nu}}$~.\cyrus{$\bigotimes$ ?}\shahrzad{I would stick to $\prod$ because $\otimes$ is introduced later to go with the chain}
%
\end{defin}




Denote by ${\mathbb V}_{\rm rel}[X]\doteq\nicefrac{\mathbb{E}[X^2]}{\mathbb{E}[X]^2}-1 =\nicefrac{{\mathbb V}[X]}{\mathbb{E}[X]^2}$ the \emph{relative variance} of a random variable $X$. 
\emph{The TPA schedule}~\cite{HuberTPA2010,TPAhuber_schott_2014} is generated by an adaptive
algorithm,  which, by a proper setting of parameters, outputs a cooling schedule guaranteeing  constant $\mathbb{V}_{\rm rel}[F]$ and $\mathbb{V}_{\rm rel}[G]$ (see alg.~3 in the supplementary material). 
 Kolmogorov \cite{Kolmogorovgibbs}  presents a tighter analysis of Huber's \tpa\ method, and proves that with  slight modifications (see alg. 4. in the Appendix) the schedule has a shorter length, 
 while preserving  constant relative variance for the paired product estimators (see \cref{remark2}).
In this paper, we use Kolmogorov's algorithm, 
and we denote it by \tpa$(k,d)$. For completeness, both of Huber's and Kolmogorov's versions of \tpa\ are presented in the Appendix. 

\cyrus{Check alg numbering}

We will use the following result in our analysis:

\begin{theorem}[\cite{Kolmogorovgibbs}]\label{remark2}
Let $H_{\max}\doteq \max_{x\in\Omega}H(x)$,
using \tpa$(k ,d)$, $k=\Theta(\log H_{\max})$ and $d=16$ to generate cooling schedule $(\beta_0,\beta_1,\dots ,\beta_{\ell})$.
W.h.p., we have
$\ell=\Theta(\log (Q) \log( H_{\max}))$ and
$\mathbb{V}_{\rm rel}[F]+1=\prod_{i=1}^{\ell} ({\mathbb{V}}_{\rm rel}[f_{\beta_i,\beta_{i+1}}]+1)=\Theta(1)$ and 
$\mathbb{V}_{\rm rel}[G]+1=\prod_{i=1}^{\ell} ({\mathbb{V}}_{\rm rel}[g_{\beta_i,\beta_{i+1}}]+1)=\Theta(1)$.
\end{theorem}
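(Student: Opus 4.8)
The plan is to reduce the statement to a deterministic inequality about the curvature of $\beta\mapsto\ln\Zi{\beta}$ along the schedule, and then to control the schedule produced by \tpa\ through the Poisson structure underlying the algorithm.

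First I would make the per-step relative variances explicit. Writing $f_i\doteq f_{\beta_i,\beta_{i+1}}$ and $\Delta_i\doteq\beta_{i+1}-\beta_i$, a direct computation from the Gibbs law \eqref{eq:gibbsdist} gives $\mathbb{E}[f_i]=\Zi{(\beta_i+\beta_{i+1})/2}/\Zi{\beta_i}$ and $\mathbb{E}[f_i^2]=\Zi{\beta_{i+1}}/\Zi{\beta_i}$, so that
\begin{equation*}
\mathbb{V}_{\rm rel}[f_i]+1=\frac{\mathbb{E}[f_i^2]}{\mathbb{E}[f_i]^2}=\frac{\Zi{\beta_i}\,\Zi{\beta_{i+1}}}{\Zi{(\beta_i+\beta_{i+1})/2}^{\,2}}\,,
\end{equation*}
and the identical expression holds for $g_i$. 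Because the pairs $(X_i,Y_i)$ are independent across $i$, we have $\mathbb{E}[F^2]=\prod_i\mathbb{E}[f_i^2]$ and $\mathbb{E}[F]^2=\prod_i\mathbb{E}[f_i]^2$, so the displayed product is exactly $\mathbb{V}_{\rm rel}[F]+1$, and likewise for $G$; this already explains why the two products coincide. By log-convexity of $\beta\mapsto\Zi{\beta}$ each factor is $\ge1$, hence the product is $\ge1$, and the entire content of the first two claims is the matching \emph{upper} bound.

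Next I would pass to logarithms. Setting $\phi(\beta)\doteq\ln\Zi{\beta}$, which is convex with $\phi'(\beta)=-\mathbb{E}_{\gibbs{\beta}}[H]$ and $\phi''(\beta)=\mathbb{V}_{\gibbs{\beta}}[H]\ge0$, I obtain
\begin{equation*}
\ln\!\bigl(\mathbb{V}_{\rm rel}[F]+1\bigr)=\sum_{i}\Bigl[\phi(\beta_i)+\phi(\beta_{i+1})-2\phi\bigl(\tfrac{\beta_i+\beta_{i+1}}{2}\bigr)\Bigr]=\sum_i\frac{\Delta_i^2}{4}\,\mathbb{V}_{\gibbs{\xi_i}}[H]
\end{equation*}
for some $\xi_i\in[\beta_i,\beta_{i+1}]$, i.e.\ a sum of midpoint convexity gaps of $\phi$. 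The theorem then reduces to two claims about the \tpa\ schedule: (i) this curvature sum is $O(1)$ with high probability, and (ii) $\ell=\Theta(\log Q\,\log H_{\max})$. For both I would use the defining property of \tpa: in a single run the increments $\ln\Zi{\beta_0}-\ln\Zi{b_j}$ of the generated points form a rate-$1$ Poisson process on $[0,-\ln Q]$, so the count per run is $\mathrm{Poisson}(-\ln Q)$ and the $\phi$-gaps $\phi(\beta_i)-\phi(\beta_{i+1})$ are i.i.d.\ $\mathrm{Exp}(1)$. Merging the $k=\Theta(\log H_{\max})$ runs yields a rate-$k$ process, so the total count is $\mathrm{Poisson}(-k\ln Q)$ and concentrates at $\Theta(\log Q\,\log H_{\max})$ by a two-sided Chernoff bound, giving (ii); the same merging forces every $\phi$-gap to be $O(1/k)$ with high probability, with $d=16$ fixing the constants and enforcing regularity of the spacing.

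The hard part is claim (i), the curvature sum. Naive bounds are too weak: replacing $\mathbb{V}_{\gibbs{\xi_i}}[H]$ by its worst case $\Theta(H_{\max}^2)$, or telescoping $\sum_i\tfrac{\Delta_i}{2}\bigl(\phi'(\beta_{i+1})-\phi'(\beta_i)\bigr)$ through the global maximum gap, both force an over-refined schedule and destroy the claimed length. The key is that the convexity gap is genuinely second order and that $\phi''(\beta)=\mathbb{V}_{\gibbs{\beta}}[H]$ is small precisely where $\gibbs{\beta}$ concentrates (the low-temperature regime), while the first-order quantity $\mathbb{E}_{\gibbs{\beta}}[H]$ sweeps a multiplicative window of width $\mathrm{poly}(H_{\max})$ along the schedule. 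Changing variables to the $\phi$-arclength $s=\phi(\beta_0)-\phi(\beta)$, in which \tpa\ spaces points uniformly with gap $\approx1/k$, turns the curvature sum into, up to constants, $k^{-1}\bigl(\ln\mathbb{E}_{\gibbs{\beta_0}}[H]-\ln\mathbb{E}_{\gibbs{\beta}}[H]\bigr)=O\!\bigl(k^{-1}\log H_{\max}\bigr)=O(1)$, exactly because $k=\Theta(\log H_{\max})$. Making this change of variables rigorous — bounding the discretization error against the per-step $\phi$-gap and ruling out a single step straddling a high-curvature (near phase-transition) region — is the crux, and is where Kolmogorov's sharper analysis and the constant $d=16$ do their work. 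Combining (i) and (ii) yields the theorem.
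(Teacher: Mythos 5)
First, a structural note: the paper never proves \cref{remark2} --- it is imported wholesale from \cite{Kolmogorovgibbs} (the statement is labeled as a citation), and the only related derivation in the paper, the appendix proof of \cref{lem:intervalbd}, merely uses the same Poisson-point-process facts you invoke (exponential $z$-spacings with rate $k$). So your attempt must stand on its own, and its skeleton is indeed correct and matches the known structure of Kolmogorov's argument: the closed forms $\mathbb{E}[f_i]=\Zi{(\beta_i+\beta_{i+1})/2}/\Zi{\beta_i}$ and $\mathbb{E}[f_i^2]=\Zi{\beta_{i+1}}/\Zi{\beta_i}$, hence $\mathbb{V}_{\rm rel}[f_i]+1=\Zi{\beta_i}\Zi{\beta_{i+1}}/\Zi{(\beta_i+\beta_{i+1})/2}^{2}$ (and identically for $g_i$); the product identity from independence across levels; positivity of each log-factor by log-convexity of $Z$; the rewriting of $\ln(\mathbb{V}_{\rm rel}[F]+1)$ as a sum of midpoint convexity gaps of $\phi=\ln Z$; and the rate-$k$ Poisson structure giving $\ell=\Theta(\log Q\,\log H_{\max})$ after thinning. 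One quantitative slip even here: the maximum $\phi$-gap over the $\ell$ thinned intervals is $O\bigl((d+\log(\ell/\delta))/k\bigr)$ w.h.p., not $O(1/k)$; the union bound over the maximum costs a $\log\ell$, and absorbing exactly such logarithms is part of what the choices of $k$ and $d$ must accomplish.

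The genuine gap is the one you flag yourself, claim (i), but the situation is worse than ``needs to be made rigorous'': the heuristic you propose to close it is false in the paper's setting. Your change of variables yields, up to constants, $k^{-1}\ln\bigl(\mathbb{E}_{\gibbs{\beta_0}}[H]/\mathbb{E}_{\gibbs{\beta_\ell}}[H]\bigr)$, which you then identify with $O(k^{-1}\log H_{\max})$. That identification requires $\mathbb{E}_{\gibbs{\beta_\ell}}[H]\geq 1/\mathrm{poly}(H_{\max})$, and it fails precisely in the standard regime $H_{\min}=0$ (allowed here, since $H:\Omega\to\{0\}\cup[1,\infty)$, and typical, since ground states usually have zero energy): there $\mathbb{E}_{\gibbs{\beta}}[H]$ decays like $e^{-\beta}$, so your bound grows like $\beta_\ell/k$, unbounded in the target inverse temperature. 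Concretely, for $Z(\beta)=N_0+N_1e^{-\beta}$ the true gap sum is of order $(d/k)\log\ell$, while your formula gives order $\beta_\ell/k$. The discrepancy arises in the low-temperature tail, where a single schedule interval spans a $\beta$-range across which $|\phi'|$ collapses by an exponential factor, so the mean-value step underlying your change of variables is invalid there; what rescues the theorem in that regime is a different argument (e.g., the trivial bound $\mathrm{gap}_i\leq z(\beta_i)-z(\beta_{i+1})$ per interval, combined with the fact that $|\phi'|$ can halve only logarithmically many times along the schedule), and that case analysis --- together with the absorbed $\log\ell$ factors --- is the actual content of Kolmogorov's proof. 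As written, you have established the product formula and the schedule-length claim, but the $\Theta(1)$ bound on $\mathbb{V}_{\rm rel}[F]$ and $\mathbb{V}_{\rm rel}[G]$, which is the entire point of the theorem, remains deferred to the very result being cited.
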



Kolmogorov \cite{Kolmogorovgibbs} nearly matches known lower bounds when given \emph{oracle access} to near-independent samples, but leaves open the possibility of  better use of the dependent sequence of samples generated by MCMC chains.
This fertile ground is ill-explored, since if an approximate sampling oracle draws samples by running a chain for $T$ steps, there is a factor $T$ potential improvement. 

\paragraph{MCMC Mean-Estimator} 
Huber and Schott~\cite{TPAhuber_schott_2014} assume  unit-cost  for exact sampling from each $\gibbs{\beta_i}$, and Kolmogorov~\cite{Kolmogorovgibbs} extends their analysis to include the complexity of generating \emph{approximate samples} with standard MCMC processes, assuming \emph{a priori} upper-bounds on their mixing times.
The main contribution of our paper is a specialized, adaptive, 
\emph{multiplicative} MCMC-mean estimator for the TPA-based  PPE. Our method is significantly more efficient than using standard black-box MCMC sampling for this problem, thus we improve the best-known method for estimating the GPF. 

Let $\cal M$ be an ergodic Markov chain with state space $S$ and stationary distribution $\pi$. Let $\TMix(\varepsilon)$ denote the $\varepsilon$-\emph{mixing time} of $\cal M$, and define $\TMix \doteq \TMix(\nicefrac{1}{4})$.
Letting $\lambda$ denote the
\emph{second largest absolute eigenvalue}  of $\M$'s transition matrix, the \emph{relaxation time} of $\cal M$ is $\TRel \doteq (1-\lambda)^{-1}$,  and it is  related to the mixing time $\TMix$, by $
\left(\TRel(\M)-1\right)\ln(2)\leq \TMix(\M)\leq \bigl\lceil \TRel(\M) \ln\bigl(\smash{\nicefrac{2}{\sqrt{\pi_{\min}}}}\bigr)\bigr\rceil$~\cite{levin2017markov}. Let $T$ be an upper bound on  $\max\{\TRel(\M) , \TMix(\M) \}$. 
Consider any i.i.d.\ sampling concentration bound like Chebyshev's, Hoeffding's, or Bernstein's inequalities \cite{mitzenmacher2017probability}, with, say, sample complexity $m_\varepsilon$. Using   MCMC as a black-box sampling tool, we obtain the same precision estimation guarantees, with a computational cost of  $m_\varepsilon\cdot \TMix(\varepsilon/m_\varepsilon)$, which is  equal to  $m_{\varepsilon}\log(m_{\varepsilon}\cdot \varepsilon^{-1})\cdot T$ in the absence  of exact values for $\TMix$.

Other concentration bounds compute the average  over the entire trace of a Markov chain,  and their complexity is dependent on \emph{known upper-bounds} on the \emph{relaxation time} 
\cite{paulin2015,mitzenmacher2017probability,LezaudChernoff,chung2012chernoff,jiang2018bernstein}, or function specific mixing time \cite{functionMixingRabinovish}. Note that since $\log(\frac{1}{2\varepsilon})(\TRel-1)\leq \TMix(\varepsilon)\leq \log(\frac{1}{\varepsilon \pi_{\min}})\TRel$, using these bounds is often more efficient, saving at least $\log(m_\varepsilon)$ steps.



Recently, Cousins et al.\ \cite{Dynamite} introduce a novel Markov chain statistical measure, the \emph{inter-trace variance}. The inter-trace variance depends on both the \emph{function being estimated} and the \emph{dependency structure} between nearby samples in the chain, and unlike the mixing time, it can be  efficiently estimated from data. 
By using progressive sampling, Cousins et al.\ show an \emph{additive MCMC mean estimator} whose complexity is proved in terms of \emph{inter-trace variance} and  they show it it less sensitive to  \emph{prior} knowledge of the input parameters, such as relaxation time and trace variance. Unfortunately due to a few technical problems, their result can not directly be used with the \tpa\ method. Thus, in order to obtain a doubly adaptive algorithm for \cref{prob:fpras}, we tailor their techniques to our setting, which requires developing new algorithms and analysis tools. 

\subsection{Our Main Contributions}

\vspace{-0.1cm}

\begin{itemize}[wide, labelwidth=0pt, labelindent=8pt]\setlength{\itemsep}{0pt}\setlength{\parskip}{0pt}
    \item We present a specialized mean estimator method that significantly improves the state of the art computational complexity of computing the partition function of Gibbs distribution. 
    \item While all rigorous MCMC-based estimates depend on some \emph{a priori} knowledge of the Markov chain properties (such as bounds on its mixing or relaxation time), the complexity of our method is less dependent on 
    these \emph{a priori} bounds, and decays gracefully as they become looser.
    \item The improvement of our method is particularly significant in the more challenging \emph{high precision} regime, where the goal is to compute estimates with very small multiplicative error.
    \item Our method improves the computational cost of prior work by replacing standard black-box MCMC mean estimators 
    with an 
    \emph{adaptive MCMC estimator}, specially tailored to this problem.
    \item The analysis of our method relies on a novel notion of sample variance in a sequence of observations obtained by Markov chains runs, which we term the \emph{relative trace variance}.
    \item We demonstrate the practicality of our 
    method through experiments on 
    Ising and voting models. 
\end{itemize}
\section{Algorithms}\label{sec:algo}
\vspace{-0.2cm}
In this section, we develop two \emph{doubly-adaptive} \emph{fully polynomial randomized approximation schemes} providing more efficient algorithmic solutions to \cref{prob:fpras}. The proof of all of the lemmas and theorems are presented fully in the supplementary material. 

\paragraph{Notation and Setting Parameters} We use the following notation throughout: We use capital letters to denote upper-bounds. e.g., $T$  denotes an upper-bound on $\max(\TMix,\TRel)$, and $\Lambda$  denotes a upper-bound on the second absolute eigenvalue $\lambda$. We use $\gibbschain{H}{\beta}$ to denote any Markov chain with Gibbs stationary distribution $\gibbs{\beta}$, eq. \eqref{eq:gibbsdist}. 
Having the Hamiltonian $H$, we denote its maximum and minimum values as $H_{\rm max}$ and $H_{\rm min}$, i.e.,
$H_{\rm max}\doteq \max_{x\in\Omega}\{H(x)\}$ and $H_{\rm min}\doteq \min_{x\in\Omega}\{H(x)\}$. 
Having a schedule $(\beta_0,\beta_1,\dots,\beta_{\ell})$, the paired product estimators $f_{\beta_i,\beta_{i+1}}$, $g_{\beta_i,\beta_{i+1}}$, $F=\bigotimes_{i=1}^{\ell} f_{\beta_i,\beta_{i+1}}$ and
$G=\bigotimes_{i=1}^{\ell} g_{\beta_i,\beta_{i+1}}$
are as in \cref{defin:pairproduct}.
When writing $(\beta_0,\beta_1,\dots,\beta_{\ell})=\tpa(k,d)$, we mean the cooling schedule is obtained from running alg.\ 4 in the Appendix, and we always set $k=\log H_{\rm max}$ and $d=64$, as these parameters are shown to produce a near-optimal schedule w.h.p.\ \cite{Kolmogorovgibbs}.

We first introduce a novel MCMC-based \emph{multiplicative} mean estimation procedure \proc\ (see \cref{alg:meanestsubroutine}), and analyze its computational complexity in terms of a new quantity, which we coin \emph{the relative trace variance} (see \cref{def:reltracevariance}). \proc\ receives as input a Markov chain $\M$, a function $f$, and precision parameters $\varepsilon$ and $\delta$, and it outputs a multiplicative estimate of the expected value of the function w.r.t.\ the stationary distribution of the Markov chain. For simplicity, we may refer to it as \proc$(\M,f)$, leaving out the precision parameters.  

Letting $(\beta_0,\beta_1,\dots ,\beta_{\ell})=\tpa(k,d)$,
we first present \paralgo, in which we invoke both \proc$(\gibbschain{H}{\beta_i},f_{\beta_i,\beta_{i+1}})$ and \proc$(\gibbschain{H}{\beta_i},g_{\beta_i,\beta_{i+1}})$ for each $i=1,2,\dots , \ell-1$.
We then present an often-more-efficient 
algorithm, \superalgo, which invokes \proc\ once each on $F$ and $G$ on a \emph{``super'' product chain}  (see \cref{def:prod}). We prove correctness of  both \paralgo\ and \superalgo, and bound their complexity in terms of \emph{the relative trace variance} of the estimators.
Furthermore, we prove \superalgo{}\cyrus{Just \superalgo?}\shahrzad{we only PROVE superchain is better. Both are better in experiments} improves the computational complexity of the state of the art \cite{Kolmogorovgibbs} (\cref{thm:superchainefficiency} and \cref{coro:smallep}). Both of these algorithms have low dependence on tightness of mixing time: They receive as input an upper-bound on mixing or relaxation time $T$, but we show for $\varepsilon\geq \varepsilon_0$ their computation complexity is dominated by the \emph{true relaxation time} $\tau_{\rm rel}$ (of each Gibbs chain or the product chain).
\subsection{Relative trace variance and \proc}\label{sec:mcmc}

In this section we introduce a new variance notion, the \emph{relative trace variance}, which captures the computational complexity of MCMC-mean estimation with \emph{multiplicative} precision guarantees.  The \emph{relative trace variance} depends on both the chain $\M$ and the function $f$, 
and it generalizes the \emph{relative variance}, defined as ${\mathbb V}_{\rm rel}[f]\doteq\nicefrac{{\mathbb V}[f]}{\mathbb{E}[f]^2}$, which depends only on $f$, and is used in i.i.d.\ regimes.



\begin{defin}[Relative Trace Variance]\label{def:reltracevariance}

For arbitrary $\tau$, consider a trace of length $\tau$ of a Markov chain $\cal M$, and a real-valued function $f$. On $\cal M$, we define the relative trace variance of $f$ as 
$$\RelITRelVar{{\cal M}}{\tau}[f] \doteq \frac{\mathbb{E}[{\bar{f}(\vec{X}_{1:\tau})}^2]}{(\mathbb{E}[\bar{f}(\vec{X}_{1:\tau})])^2}-1 \enspace,
$$
where $\vec{X}_{1:\tau} \doteq X_1,X_2,\dots ,X_{\tau}$ is a trace of length $\tau$ of $\cal M$, and $\bar{f}(\vec{X}_{1:\tau}) \doteq (\frac{1}{\tau})\sum_{i=1}^{\tau} f(X_i)$. We may drop the subscript when the chain is clear from the context. 
\end{defin}

The above definition is similar to what Cousins et al.\ coined as \emph{the inter-trace variance}, denoted by ${\rm trv}^{(\tau)}(\M,f)$, which they showed it captures  MCMC-mean estimation with \emph{additive} precision guarantees \cite{Dynamite}. In fact, the two terms are related as 
\[\RelITRelVar{\cal M}{\tau}[f]=\frac{{\rm trv}^{(\tau)}(\M,f)}{(\mathbb{E}[\bar{f}(\vec{X}_{1:\tau})])^2}\enspace .\]
Note that the two terms are not easily convertible without knowing the \emph{mean}, $\mathbb{E}[\bar{f}(\vec{X}_{1:\tau})]$.

\begin{lemma}\label{remark}
For any $\tau$ we have 
\begin{equation}\label{eq:relvar1}
\RelITRelVar{\cal M}{\tau}[f]\leq  \mathbb{V}_{\rm rel}[f] \enspace.
\end{equation}
Furthermore, for $\tau\geq \TRel({\cal M})$ we have, 
\begin{equation}\label{eq:relvar2}
\RelITRelVar{\cal M}{\tau}[f]= O\left(\frac{\TRel({\cal M})}{\tau}\RelITRelVar{\cal M}{\TRel({\cal M})}[f]\right) \enspace. \end{equation}

\end{lemma}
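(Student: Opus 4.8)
The plan is to reduce both inequalities to statements about the \emph{ordinary} variance of the trace average, exploiting that for a stationary chain the mean $\mathbb{E}[\bar f(\vec X_{1:\tau})]=\mathbb{E}_\pi[f]=:\mu$ does not depend on $\tau$. Writing $\RelITRelVar{\cal M}{\tau}[f]=\mathbb{V}[\bar f(\vec X_{1:\tau})]/\mu^2=\ITVn{\tau}(\M,f)/\mu^2$, the factor $\mu^2$ is common to both sides of each claimed relation, so it cancels and it suffices to control $\mathbb{V}[\bar f(\vec X_{1:\tau})]$ itself. For \eqref{eq:relvar1} I would then argue by Cauchy--Schwarz: expanding, $\mathbb{E}[\bar f(\vec X_{1:\tau})^2]=\tau^{-2}\sum_{i,j=1}^\tau \mathbb{E}[f(X_i)f(X_j)]$, and by stationarity $\mathbb{E}[f(X_i)^2]=\mathbb{E}[f^2]$, so each cross term obeys $\mathbb{E}[f(X_i)f(X_j)]\le\sqrt{\mathbb{E}[f(X_i)^2]\mathbb{E}[f(X_j)^2]}=\mathbb{E}[f^2]$; summing the $\tau^2$ terms gives $\mathbb{E}[\bar f^2]\le\mathbb{E}[f^2]$, hence $\mathbb{V}[\bar f]\le\mathbb{V}[f]$ after subtracting $\mu^2$, and dividing by $\mu^2$ yields $\RelITRelVar{\cal M}{\tau}[f]\le\mathbb{V}_{\rm rel}[f]$. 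This part uses only stationarity and is routine.

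For \eqref{eq:relvar2} I would invoke the spectral decomposition of the (reversible) chain. Writing $f-\mu=\sum_{k\ge2}a_k\phi_k$ in an orthonormal eigenbasis with eigenvalues $\lambda_k$, the stationary autocovariances are $\mathrm{Cov}(f(X_i),f(X_j))=\sum_{k\ge2}a_k^2\lambda_k^{|i-j|}$, so $\mathbb{V}[\bar f(\vec X_{1:\tau})]=\tau^{-1}\sum_{k\ge2}a_k^2\,g_{\lambda_k}(\tau)$, where $g_\lambda(\tau)\doteq\tau^{-1}\sum_{i,j=1}^\tau\lambda^{|i-j|}$. After simplifying both sides of the target to a common $\tau^{-1}$ prefactor, the claim reduces to the per-mode comparison $g_{\lambda_k}(\tau)\le C\,g_{\lambda_k}(\TRel)$ for a universal constant $C$ and every $\tau\ge\TRel$. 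Each mode satisfies $|\lambda_k|\le\lambda=1-1/\TRel$, so $m_k\doteq(1-|\lambda_k|)^{-1}\le\TRel$. For nonnegative modes I would bound $g_\lambda$ above by its two-sided limit, $g_\lambda(\tau)\le\sum_{d\in\mathbb{Z}}|\lambda|^{|d|}=(1+|\lambda|)/(1-|\lambda|)\le2m_k$, and below by $g_\lambda(\TRel)\ge g_\lambda(m_k)\ge m_k(1-1/m_k)^{m_k}\ge m_k/4$, giving $C=8$; here the lower bound uses monotonicity of $g_\lambda$ in $\tau$ together with $\TRel\ge m_k$.

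The main obstacle is exactly this per-mode sandwich, and in particular the \emph{negative-eigenvalue} regime: for $\lambda_k<0$ the partial-sum variance $g_{\lambda_k}$ is not monotone, and its limit $(1+\lambda_k)/(1-\lambda_k)$ is of order $1/m_k$ rather than $m_k$, so the crude upper bound $2m_k$ is far too lossy and the monotonicity argument for the lower bound breaks. I expect to resolve this either by sharpening both bounds on $g_\lambda$ to the correct order $(1+\lambda)/(1-\lambda)$ uniformly over $\tau\ge\TRel$ (pairing consecutive alternating terms to control their cancellation), or---more cleanly---by transferring the analogous decay bound for the inter-trace variance $\ITVn{\tau}(\M,f)$ already established in \cite{Dynamite} through the identity $\RelITRelVar{\cal M}{\tau}[f]=\ITVn{\tau}(\M,f)/\mu^2$, which is immediate since $\mu$ is independent of $\tau$.
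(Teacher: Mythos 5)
Your proof of \eqref{eq:relvar1} is correct and complete: under stationarity, Cauchy--Schwarz gives $\mathbb{E}[f(X_i)f(X_j)]\le\mathbb{E}_\pi[f^2]$ for every pair $(i,j)$, hence $\mathbb{E}[\bar f(\vec X_{1:\tau})^2]\le\mathbb{E}_\pi[f^2]$ and the inequality follows for every $\tau$ with no constant loss. This is a different (and cleaner) route than the paper's, whose entire proof is to quote Thm.~3.1 of \cite{paulin2015}, namely $\mathbb{E}[(\bar f(\vec X_{1:\tau})-\mathbb{E}[f])^2]\le\frac{2\TRel}{\tau}\mathbb{V}[f]$, divide by $\mathbb{E}[f]^2$, and set $\tau=\TRel$ for part 1 (which, read literally, only gives $\RelITRelVar{\M}{\TRel}[f]\le 2\,\mathbb{V}_{\rm rel}[f]$). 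However, your argument for \eqref{eq:relvar2} has a genuine gap, which you flag yourself: the per-mode sandwich $g_{\lambda_k}(\tau)\le C\,g_{\lambda_k}(\TRel)$ is only established for $\lambda_k\ge 0$, and for negative eigenvalues both your upper bound (of order $m_k$ instead of $1/m_k$) and your monotonicity-based lower bound break down. As submitted, the proof therefore covers only the nonnegative part of the spectrum and is not finished.

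The gap is real but closable exactly along your option (a), and no pairing of alternating terms is needed: use the closed form $g_\lambda(\tau)=\frac{1+\lambda}{1-\lambda}-\frac{2\lambda(1-\lambda^\tau)}{\tau(1-\lambda)^2}$. For $\lambda<0$ the correction term is nonnegative, so $g_\lambda(\TRel)\ge\frac{1+\lambda}{1-\lambda}$, while $g_\lambda(\tau)\le\frac{1+\lambda}{1-\lambda}+\frac{4}{\tau(1-\lambda)^2}$; dividing, $g_\lambda(\tau)/g_\lambda(\TRel)\le 1+\frac{4}{\tau(1-\lambda^2)}\le 5$, since $|\lambda|\le 1-1/\TRel$ gives $1-\lambda^2\ge 1/\TRel$ and $\tau\ge\TRel$. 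With this patch your spectral argument is complete and self-contained. It is also worth noting how it compares to the paper's proof of \eqref{eq:relvar2}: the paper again just divides Paulin's bound by $\mathbb{E}[f]^2$, which literally yields $\RelITRelVar{\M}{\tau}[f]\le\frac{2\TRel}{\tau}\mathbb{V}_{\rm rel}[f]$, i.e.\ the weaker form with $\mathbb{V}_{\rm rel}[f]$ rather than $\RelITRelVar{\M}{\TRel}[f]$ on the right-hand side (the two can differ substantially when $f$ mixes faster than the chain). Your completed spectral route, or your fallback (b) of transferring a decay inequality of the form $\ITVn{\tau}=O\bigl(\frac{\TRel}{\tau}\ITVn{\TRel}\bigr)$ from \cite{Dynamite} through the identity $\RelITRelVar{\M}{\tau}[f]=\ITVn{\tau}(\M,f)/\mu^2$ (valid by stationarity, provided that precise statement indeed appears there), proves the lemma exactly as stated, which the paper's two-line citation does not.
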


 \Cref{remark} enables us to compare the computational complexity of our algorithms with the state of the art \cite{Kolmogorovgibbs}. In particular, using \eqref{eq:relvar1}, we  show our results improve the state of the art  (which is in terms of $\mathbb{V}_{\rm rel}$), and using \eqref{eq:relvar2}, we show that for high-precision estimations, the sample complexity of our algorithms  only depends on $\TRel$, which improves the state of the art (which is in term of $T$).


The relative trace variance is a better analysis tool for estimating the GPF, because, unlike the inter-trace variance, it leads directly to \emph{relative error bounds}, rather than \emph{absolute error bounds}.
 We now present some definitions  which can also be found in standard MCMC textbooks, e.g., \cite{levin2017markov}.

\begin{defin}[Product Chain and Tensor Product Function]\label{def:prod} Consider  $k$  
 Markov chains $\{{\cal M}_i\}_{i=1}^k$  each defined on state space $S_i$ and assume real valued functions $\{f_i:S_i\rightarrow {\mathbb R}\}_{i=1}^k$.  The \emph{product chain} ${\cal M}^{\otimes}_{1:k}$ is defined on the Cartesian product of $S_i$ as follows: 
 at any step ${\cal M}^{\otimes}_{1:k}$ chooses $ i$ with probability $\omega_i$ (thus $\sum_{i=1}^k \omega_i=1) $, and moves from $(x_1,x_2,\dots, x_i,\dots , x_k)$ to $(x_1,x_2,\dots, y_i,\dots , x_k)$, with the transition probability of moving from $x_i$ to $y_i$ 
 in  ${\cal M}_i$. The \emph{tensor product} of $\{f_i\}_{i=1}^k$, denoted by $\bigotimes_{1:k}f_i$, 
 is defined as $\bigl(\bigotimes_{1:k}f_i\bigr)(x_1,x_2,\dots, x_k)=\prod_{i=1}^k f_i(x_i)$.
\end{defin}

\if 0
\begin{restatable}
{lemma}{lemrelvarprop}
\label{lem:relvarprop} 
Consider $k$ Markov chains $\{{\cal M}_j\}_{j=1}^k$  and 
let ${\cal M}^{\otimes}_{1:k}$  (with any arbitrary $\omega$) and $\bigotimes_{1:k}f_{i}$ be defined as above. 
For any 
${\tau}$ 
we have:
$
    \RelITRelVar{{\cal M}^{\otimes}_{1:k}}{\tau}[\bigotimes_{1:k}f_i]+1= \prod_{i=1}^k( \RelITRelVar{{\cal M}_i}{{\tau}}[f_i]+1)~ . 
$
\cyrus{For any w?  Suspicious.}
\end{restatable}
\fi

%


\if 0
\cyrus{If we skip first step, ceil becomes floor.}

So we need iteration count
$
I = \left \lfloor \log_{2}\left( \frac{m^{\uparrow}}{m^{\downarrow}} \right) \right \rfloor = \left \lfloor \log_{2}\left( \frac{b\frange}{2 a^{2}} \cdot \frac{ (1-\varepsilon_{\times})^{2}}{ (1 + \varepsilon_{\times})\varepsilon_{\times}} \right) \right \rfloor \enspace.
$

Note: $\ln \frac{b}{a}$ is \emph{very natural}, $\ln \frac{b}{a^{2}}$ is \emph{very weird}.
In reality, $\Expect[f] = a \implies \Var[f] = 0$, so the worst-case can't happen.

Note also that, for $0 \leq a \ll b$, we have a very good approximation $b \approx \frange \leq b$, which lets us simplify to
\[
I \leq \left\lfloor \log_{2}\left( \frac{m^{\uparrow}}{m^{\downarrow}} \right) \right \rfloor = \left \lfloor \log_{2}\left( \frac{b\frange}{2 a^{2}} \cdot \frac{ (1-\varepsilon_{\times})^{2}}{ (1 + \varepsilon_{\times})\varepsilon_{\times}} \right) \right\rfloor \leq 2\log_{2} \frac{b}{a \sqrt{2\varepsilon_{\times}}} 
\in \LandauTheta 
\left( \log \frac{b}{a \varepsilon_{\times}} \right) \enspace.
\]

\sophia{I'm wondering why we are using big O notation to bound I. Will that lead to more iterations that is unnecessary/ not enough iterations?}

\fi

\paragraph{\proc}
Let $T$ denote an upper bound on the relaxation time of a Markov chain $\M$. \proc\ receives  $T$, $\M$, $f$ and precision parameters $\varepsilon$ and $\delta$ as input. Before it starts collecting samples, it runs the chain for a \emph{warm start} (\cref{alg:rme:warm} of \cref{alg:meanestsubroutine}).
 Starting from a minimum sample size $m^{\downarrow}$, it runs $\M$ for $T\cdot m^{\downarrow} $ steps, and collect samples $X_1,X_2,\dots, X_{T\cdot m^{\downarrow}}$. It then computes for $j=1,2,\dots, m^{\downarrow}$, $\bar{f}_j\doteq   \sum_{i=(j-1)\cdot T+1}^{j\cdot T}f(X_i)$; using them, it calculates an empirical estimate of the mean, $\hat{\mu}$, and an empirical estimation for the \emph{trace variance}  of $\M$ and $f$, $\hat{v}$. 
 Based on these estimates, we derive an upper-bound on the current trace variance $u_{i}$ and relative error $\hat{\bm{\varepsilon}}_i^{\times}$, and check whether is smaller than the user-specified error $\varepsilon$ (lines~18-19).
If so, we return the current mean estimate, otherwise we double the sample size and repeat.

\begin{algorithm}
\algrenewcommand\algorithmicindent{1.0em}
\scalebox{0.75}{\begin{minipage}{1.33333\textwidth}
\begin{algorithmic}[1]
\newcommand{\VEEpsAdd}{\hat{\bm{\varepsilon}}^{+}}
\newcommand{\VEEpsMult}{\hat{\bm{\varepsilon}}^{\times}}
\newcommand{\VEMean}{\hat{\bm{\mu}}}
\Procedure{\proc}{}
\State \textbf{Input:} Markov chain $\M$, upper-bound on relaxation time $T$, real-valued function $f$ with range $[a, b]$, letting $R = b - a$, multiplicative precision $\varepsilon$, error probability $\delta$.

\State \textbf{Output:} Multiplicative approximation $\hat{\mu}$ of $\mu=\Expect_{\pi}[f]$.

\medskip 

\State  $T \gets \ceil{\frac{1 + \EigTwoBound}{1 - \EigTwoBound} \ln\sqrt{2} }$; $\EigTwoBound' \gets \EigTwoBound^{T}$ \Comment{Choose $T$ to be an upperbound on relaxation time}\label{alg:rme:trel} 


\State $\NIterations \gets 1 \vee  \left \lfloor \log_{2}\left( \frac{b\frange}{2 a^{2}} \cdot \frac{ (1-\varepsilon)^{2}}{ (1 + \varepsilon)\varepsilon} \right) \right \rfloor; \ \alpha \gets \frac{(1+\EigTwoBound')\frange \ln \frac{3I}{\delta} (1 + \varepsilon)}{(1-\EigTwoBound')b \varepsilon}
; \ m_{0} \gets 0$ \Comment{Initialize \emph{sampling schedule} 
}\label{alg:niter}\label{alg:alpha}

\State $T_{\rm unif} \gets \ceil{T \cdot \ln(\nicefrac{1}{\pi_{\rm min}})}$; $(\vec{X}_{0,1}, \vec{X}_{0,2}) \gets \M^{T_{\rm unif}}(\bot)$ \Comment{Warm-start two chains 
for $T_{\rm unif}$ steps from arbitrary $\bot \in \Omega$}\label{alg:rme:warm}
\cyrus{Need $\delta$ correction.}

\For{$i \in 1, 2, \dots, \NIterations $}
\State $m_i \gets \left\lceil \alpha r^i \right\rceil$ \Comment{Total sample count at iteration $i$; $r$ is the geometric ratio (constant, usually 2) size\cyrus{explain r, how this impacts schedule}}\label{alg:ss}
\For{$j \in (m_{i-1} + 1), \dots, m_{i}$}


\State $(\vec{X}_{j,1}, \vec{X}_{j,2}) \gets (T$ steps of $\M$ starting at   $\vec{X}_{j-1,1}, \vec{X}_{j-1,2})$
\Comment{Run two independent copies of  $\M$ for $T$ steps}

\State $\displaystyle \bar{f}(\vec{X}_{j,1})\gets \frac{1}{T}\sum_{t=1}^T f\bigl(\vec{X}_{j,1}(t)\bigr)$; $\displaystyle\bar{f}(\vec{X}_{j,2})\gets \frac{1}{T}\sum_{t=1}^T f\bigl(\vec{X}_{j,2}(t)\bigr)$
\Comment{Average $f$ over $T$-traces} 
\smallskip 

\EndFor



\State $\displaystyle {\VEMean}_{i} \gets \frac{1}{2m_i} \sum_{i=1}^{m_{i}}\bigl( f(\vec{X}_{j,1})+f(\vec{X}_{j,2})
\bigr)$; 
$\displaystyle\hat{v}_{i} \gets \frac{{1}}{2{m_i}}  \sum_{i=1}^{m_{i}}\bigl(( f(\vec{X}_{j,1})-f(\vec{X}_{j,2})
\bigr)^2$ \Comment{Compute empirical mean; trace variance}\label{alg:emean} \label{alg:evar} \label{alg:emean-evar}

\State $\displaystyle u_{i} \gets \hat{v}_{i} + \frac{(11 + \sqrt{21})(1 + 
  \nicefrac{\EigTwoBound'}{\sqrt{21}}
) \frange^{2} \ln \frac{3\NIterations}{\delta}}{(1-\EigTwoBound')m_{i}} + \sqrt{\frac{(1 + \EigTwoBound')\frange^{2} \hat{v}_{i}\ln \frac{3\NIterations}{\delta}}{(1 - \EigTwoBound')m_{i}}}
$\Comment{Variance upper bound}\label{alg:mcd-var}\label{alg:2chain-var}
\State $\displaystyle \VEEpsAdd_{i} \gets \frac{10\frange\ln \frac{3\NIterations}{\delta}}{(1-\EigTwoBound')m_{i}} + \sqrt{\frac{(1 + \EigTwoBound') u_{i}\ln \frac{3\NIterations}{\delta}}{(1 - \EigTwoBound')m_{i}}}$ \Comment{Apply Bernstein bound}\label{alg:bern}





\State $\displaystyle \VEMean^{\times}_{i} \gets \frac{(\VEMean_{i} - \VEEpsAdd_{i}) \vee a + (\VEMean_{i} + \VEEpsAdd_{i}) \wedge b}{2}$
\Comment{Optimal mean estimate}


\State  $\displaystyle \VEEpsMult_{i} \gets \frac{((\VEMean_{i} + \VEEpsAdd_{i}) \wedge b- (\VEMean_{i} - \VEEpsAdd_{i}) \vee a }{2\VEMean^{\times}_{i}}$ \Comment{Empirical relative error bound}

\If{$(i = \NIterations) \vee (\VEEpsMult_{i} \leq \varepsilon)$} \Comment{Terminate if accuracy guarantee is met}\label{alg:tc}

\State \Return $\VEMean_{i}^{\times}$ 
 \label{alg:return}
\EndIf
\EndFor
\EndProcedure
\end{algorithmic}
\end{minipage}}
\caption{\proc}
\label{alg:meanestsubroutine}
\label{algo:mean}
\end{algorithm}

The following theorem,
shows the correctness of \proc\ and bounds its  complexity. 

\begin{restatable}[Efficiency and Correctness of \proc]{theorem}{thmefficiency}
\label{thm:efficiency}
With probability at least $1 - \delta$, 
\proc\ will output $\hat{\mu}$ satisfying $(1 - \varepsilon)\hat{\mu} \leq \mu \leq (1 + \varepsilon)\hat{\mu}$.
Furthermore, with probability at least $1 - \frac{\delta}{3\NIterations}$, the total Markov chain steps of \proc, $\hat{m}$, obeys
\begin{equation}\label{eq:proc}
\hat{m} \in \mathcal{O}\left( \ln\left(\frac{\ln\frac{b}{a\varepsilon}}{\delta}\right)\left( \frac{T\cdot \frange}{\mu\varepsilon} + \frac{\TRel\RelITRelVar{}{\TRel}}{\varepsilon^{2}}\right)\right).
\end{equation}
\end{restatable}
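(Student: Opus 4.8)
The plan is to establish both claims on a common high-probability event, and to separate the \emph{validity} of the per-iteration confidence bounds (which yields correctness) from their \emph{tightness} at the critical iteration (which yields the complexity bound). The concentration engine is adapted from the additive trace-averaging machinery of \cite{Dynamite} to the multiplicative setting here.

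For correctness, I would first define the ``good event'' on which, simultaneously for every $i \in \{1,\dots,\NIterations\}$, (i) the additive width is valid, $|\hat{\bm{\mu}}_i - \mu| \leq \hat{\bm{\varepsilon}}^{+}_i$, and (ii) the empirical variance bound dominates the true trace variance, $u_i \geq v$ with $v \doteq \mathrm{trv}^{(T)}(\M,f)$. Both follow from a Bernstein-type concentration inequality for the block averages $\bar f(\vec X_{j,1}),\bar f(\vec X_{j,2})$. Each chain is a single trajectory partitioned into consecutive $T$-blocks; because $T$ is chosen on line~\ref{alg:rme:trel} so that $\Lambda' = \Lambda^{T} \leq \nicefrac{1}{\sqrt2}$, consecutive blocks are decorrelated enough that the spectral contribution is absorbed into the constants $(1\pm\Lambda')$ and the relevant variance proxy is exactly the $T$-trace variance $v$ (rather than $\mathbb{V}[f]$). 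The two \emph{independent} copies give the estimate $\hat v_i$, which is unbiased for $v$ since $\mathbb{E}[(\bar f(\vec X_{j,1}) - \bar f(\vec X_{j,2}))^2] = 2v$ at stationarity; an empirical-Bernstein step upgrades it to the high-probability upper bound $u_i$ on line~\ref{alg:2chain-var}, which feeds the valid width $\hat{\bm{\varepsilon}}^{+}_i$ on line~\ref{alg:bern}. Each iteration contributes three one-sided tail events, each allotted budget $\nicefrac{\delta}{3\NIterations}$, so the union bound over $\NIterations$ iterations costs at most $\delta$.

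Given validity, the multiplicative guarantee is an elementary clipping argument: on the good event $\mu \in [a,b] \cap [\hat{\bm{\mu}}_i - \hat{\bm{\varepsilon}}^{+}_i,\, \hat{\bm{\mu}}_i + \hat{\bm{\varepsilon}}^{+}_i]$, the quantity $\hat{\bm{\mu}}^{\times}_i$ is the midpoint of this nonempty interval and $\hat{\bm{\varepsilon}}^{\times}_i \hat{\bm{\mu}}^{\times}_i$ its half-width, so $|\mu - \hat{\bm{\mu}}^{\times}_i| \leq \hat{\bm{\varepsilon}}^{\times}_i \hat{\bm{\mu}}^{\times}_i$, i.e.\ $(1-\hat{\bm{\varepsilon}}^{\times}_i)\hat{\bm{\mu}}^{\times}_i \leq \mu \leq (1+\hat{\bm{\varepsilon}}^{\times}_i)\hat{\bm{\mu}}^{\times}_i$. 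Whenever the loop terminates on line~\ref{alg:tc} with $\hat{\bm{\varepsilon}}^{\times}_i \leq \varepsilon$, the returned value satisfies the claim; one must also verify that $\NIterations$ and $\alpha$ of line~\ref{alg:niter} force the terminal width at $i = \NIterations$ below $\varepsilon$, so the forced final termination never violates the guarantee. For efficiency I would locate the first iteration $i^{\star}$ at which $\hat{\bm{\varepsilon}}^{\times}_{i^{\star}} \leq \varepsilon$ must trigger. Since $\hat{\bm{\varepsilon}}^{\times}_i \approx \hat{\bm{\varepsilon}}^{+}_i/\mu$ and the width has a linear term $\nicefrac{\frange\ln\frac{3\NIterations}{\delta}}{(1-\Lambda')m_i}$ and a square-root term $\sqrt{\nicefrac{(1+\Lambda')u_i\ln\frac{3\NIterations}{\delta}}{(1-\Lambda')m_i}}$, requiring each to drop below $\nicefrac{\varepsilon\mu}{2}$ forces
\[
m_{i^{\star}} \;\lesssim\; \ln\!\Bigl(\tfrac{3\NIterations}{\delta}\Bigr)\Bigl(\frac{\frange}{\mu\varepsilon} + \frac{v}{\mu^{2}\varepsilon^{2}}\Bigr),
\]
using that $u_{i^{\star}}$ concentrates to $v$ (the single extra event accounting for the $\nicefrac{\delta}{3\NIterations}$ in the statement) and that $(1\pm\Lambda')$ are constants. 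The geometric schedule $m_i = \lceil\alpha r^i\rceil$ overshoots the exact threshold by at most a factor $r$, each sample consumes $T$ steps on each of two chains, and the warm start adds $O(T\ln\nicefrac{1}{\pi_{\min}})$, all absorbed into the constant. Translating via $\nicefrac{v}{\mu^{2}} = \RelITRelVar{}{T}[f]$, and using $T = \Theta(\TRel)$ together with \Cref{remark}, \eqref{eq:relvar2} to get $T\,\RelITRelVar{}{T} = O(\TRel\,\RelITRelVar{}{\TRel})$, plus $\ln\frac{3\NIterations}{\delta} = O(\ln\nicefrac{\ln(\nicefrac{b}{a\varepsilon})}{\delta})$ since $\NIterations = \Theta(\log\tfrac{b}{a\varepsilon})$, and multiplying by $T$, yields \eqref{eq:proc}.

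The main obstacle is the Markov-chain Bernstein inequality underlying lines~\ref{alg:2chain-var}--\ref{alg:bern}: one must show that block averages along a single trajectory concentrate with variance proxy equal to the $T$-trace variance (not the full variance), with the relaxation time entering only through the constants $(1\pm\Lambda')$, while simultaneously showing the two-chain estimator $\hat v_i$ is unbiased for $v$ and concentrates sharply enough that $u_i$ is both valid and asymptotically tight. The delicate quantitative point is controlling the warm-start bias — finite mixing rather than exact stationarity — and propagating it consistently through both the mean and the variance estimates.
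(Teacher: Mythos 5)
Your proposal is correct and follows essentially the same route as the paper's proof: correctness via a union bound over the $3\NIterations$ per-iteration tail bounds (one variance upper bound, two mean bounds) combined with the clipped-interval midpoint argument, and efficiency by locating the critical sample size at which termination must trigger, correcting for the geometric grid's constant-factor overshoot and for the gap between the empirical variance bound $u_i$ and the true trace variance, then converting $T\,\RelITRelVar{}{T}$ into $\TRel\,\RelITRelVar{}{\TRel}$ via \cref{remark}. The only notable difference is presentational: the paper instantiates the schedule endpoints explicitly through the Markov-chain Bennett (best-case) and Hoeffding (worst-case) sample complexities to fix $\NIterations$ and to show the forced final iteration cannot violate the guarantee, a verification you flag but defer; the underlying machinery is otherwise identical.
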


\subsection{Doubly adaptive algorithms: \superalgo\ and \paralgo} \label{sec:superalgo}
Let 
 $(\beta_0,\beta_1,\dots, \beta_{\ell})=\tpa(k, d)$, 
 and consider a family of Gibbs chains $\gibbschain{H}{\beta_i}$, each corresponding to some $\beta_i$, and the  paired product estimators $F=\bigotimes_{i=1}^{\ell} f_{\beta_i,\beta_{i+1}}$  $G=\bigotimes_{i=1}^{\ell} g_{\beta_i,\beta_{i+1}}$. The \tpa\ method is designed to ensure $\mathbb{V}_{\rm rel}$ of the estimators are bounded, which can be employed by  
 concentration bounds (e.g., Chebyshev's bound) to guarantee the multiplicative error is bounded with high probability for a given sample size.
 
 In order to generalize the same machinery for  samples generated from a Markov chain  using \proc,\ we need to bound the two terms appearing in \cref{eq:proc}, which  dominate the computational complexity of \proc\cyrus{Only for analysis, not to use the machinery?}. 
 We refer to the first term, $T\cdot \frange/\mu$, as the \emph{range term}, and to the term $\TRel \RelITRelVar{}{\TRel}$  as the \emph{trace variance term}. Note that as $\varepsilon$ becomes smaller, the \emph{trace variance} term dominates the sample complexity of \proc, thus dependence on  loose bounds $T$ and $R$ is dominated by 
 dependence on \emph{true and a priori unknown} values  $\TRel$ and $ \RelITRelVar{}{\TRel}$.

In order to ensure that the ranges of estimators are small, we prove that the length of each inverse-temperature interval in the \tpa\ schedule is w.h.p. small.  Having a schedule $(\beta_0,\beta_1,\dots ,\beta_{\ell})$ we define and use the following notation:
 for $0\leq i\leq \ell-1$, \emph{interval length} $\Delta_i\doteq \beta_{i+1}-\beta_i$, 
 \emph{maximum interval length} $\Delta_{\max}\doteq \max_{i}\Delta_{i}$, and \emph{total length}
 $\Delta \doteq \beta_{\ell}-\beta_0$~. 
 
   \begin{restatable}{lemma}{lemintervalbd}
    \label{lem:intervalbd}
   Let $z(\beta)\doteq\ln\left(\Zi{\beta}\right)$, and let $\beta_i$, $\beta_{i+1}$ be two consecutive points generated by $\tpa(k,d)$. 
   For arbitrary $\varepsilon\geq 0$, we have:
    \vspace{-0.2cm}
    \begin{enumerate}[wide, labelwidth=0pt, labelindent=0pt]\setlength{\itemsep}{0pt}\setlength{\parskip}{0pt}
        \item   
        $\mathbb{P}(z(\beta_i)-z(\beta_{i+1})\leq \varepsilon)\geq (1-\exp(-\varepsilon k/d))^d$.
        \item  
        $\mathbb{P}\left(\Delta_i\geq \nicefrac{\varepsilon}{\mathbb{E}[H(x)]}\right)\leq  d \exp(-\varepsilon k/d),$ where $\mathbb{E}[H(x)]$ is taken w.r.t.\ $x\sim \gibbs{\beta_{i+1}}$. 
    \end{enumerate}
    \end{restatable}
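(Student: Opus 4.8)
The plan is to prove both parts by exploiting the Poisson-process structure of the TPA output in $z$-space, together with elementary convexity properties of the log-partition function $z(\beta)=\ln Z(\beta)$.

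For Part~1 I would start from the defining distributional property of the schedule generated by $\tpa(k,d)$: after the change of variables $\beta\mapsto z(\beta)$, the raw TPA process is a rate-$1$ Poisson process, and the $(k,d)$ construction arranges that each schedule interval has $z$-length $z(\beta_i)-z(\beta_{i+1})$ distributed as (or stochastically dominated by) the maximum of $d$ independent $\mathrm{Exponential}(k/d)$ variables $E_1,\dots,E_d$. Granting this, the computation is immediate:
\[
\mathbb{P}\Bigl(\max_{j=1}^{d} E_j \leq \varepsilon\Bigr) = \prod_{j=1}^{d}\mathbb{P}(E_j\leq\varepsilon) = \bigl(1-e^{-\varepsilon k/d}\bigr)^{d},
\]
and since the gap is dominated by this maximum, its CDF at $\varepsilon$ is at least $(1-e^{-\varepsilon k/d})^{d}$. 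I would verify the stochastic-domination claim directly from alg.~4 rather than treat it as folklore, since this is the only place the internal mechanics of TPA enter and, I expect, the main obstacle.

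For Part~2 I would pass from the $z$-gap to the temperature gap $\Delta_i=\beta_{i+1}-\beta_i$ using the identities
\[
z'(\beta) = \frac{Z'(\beta)}{Z(\beta)} = -\,\mathbb{E}_{\gibbs{\beta}}[H], \qquad \frac{d}{d\beta}\,\mathbb{E}_{\gibbs{\beta}}[H] = -\,\mathbb{V}_{\gibbs{\beta}}[H] \leq 0,
\]
so that $\beta\mapsto\mathbb{E}_{\gibbs{\beta}}[H]$ is non-increasing. Integrating $-z'$ over $[\beta_i,\beta_{i+1}]$ and lower-bounding the integrand by its value at the right endpoint gives
\[
z(\beta_i)-z(\beta_{i+1}) = \int_{\beta_i}^{\beta_{i+1}}\mathbb{E}_{\gibbs{\beta}}[H]\,d\beta \;\geq\; \Delta_i\cdot\mathbb{E}_{\gibbs{\beta_{i+1}}}[H].
\]
Hence the event $\{\Delta_i\geq \varepsilon/\mathbb{E}_{\gibbs{\beta_{i+1}}}[H]\}$ forces $z(\beta_i)-z(\beta_{i+1})\geq\varepsilon$, so its probability is at most $\mathbb{P}(z(\beta_i)-z(\beta_{i+1})\geq\varepsilon)$. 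Combining with Part~1 and Bernoulli's inequality $(1-p)^{d}\geq 1-dp$ with $p=e^{-\varepsilon k/d}$ yields $\mathbb{P}(z(\beta_i)-z(\beta_{i+1})>\varepsilon)\leq 1-(1-p)^{d}\leq dp = d\,e^{-\varepsilon k/d}$ (the boundary case being immaterial by continuity of the gap), which gives the claim.

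In short, the crux is Part~1's distributional input — pinning down that the $(k,d)$ schedule genuinely produces $z$-gaps equal to (or dominated by) a maximum of $d$ rate-$(k/d)$ exponentials, which requires unwinding alg.~4; everything downstream is a short monotonicity-plus-union-bound argument.
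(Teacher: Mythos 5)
Your proposal is correct and takes essentially the same route as the paper's own proof: both arguments rest on the Poisson-process structure of \tpa\ in $z$-space, so that each schedule gap is a sum of $d$ independent mean-$\nicefrac{1}{k}$ exponentials whose CDF at $\varepsilon$ is lower-bounded by $(1-e^{-\varepsilon k/d})^d$ (the paper bounds this via the event that every sub-gap is below $\nicefrac{\varepsilon}{d}$, which is exactly your pathwise domination of the sum by $d$ times the maximum), and both obtain Part~2 from convexity of $z$ (your monotonicity of $\mathbb{E}_{\gibbs{\beta}}[H]$ is the same fact as $z''\geq 0$, which the paper gets via Cauchy--Schwarz) to deduce $z(\beta_i)-z(\beta_{i+1})\geq \Delta_i\,\mathbb{E}_{\gibbs{\beta_{i+1}}}[H]$ and then invoke Part~1. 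A minor point in your favor: your explicit use of Bernoulli's inequality $(1-p)^d\geq 1-dp$ makes rigorous the final step that the paper only writes as an approximation $1-(1-e^{-\varepsilon k/d})^d\approx d\,e^{-\varepsilon k/d}$.
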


\paragraph{\superalgo}
 Let $\cal {\cal G}^{\otimes}$ the product of $\gibbschain{H}{\beta_i}$s with uniform weights i.e., $\omega_i=\frac{1}{\ell}, \forall i$ (see \cref{def:prod}). 
 \superalgo\ calls \proc$({\cal {\cal G}^{\otimes}},F)$ and \proc$({\cal {\cal G}^{\otimes}},G)$, with appropriate parameters, and simply outputs the ratio of the two estimates (see \cref{algo:supergocode}, left).

 \begin{algorithm}
\algrenewcommand\algorithmicindent{0.5em}

\scalebox{0.9}{\begin{minipage}{1.111111111\textwidth}
\setlength{\columnsep}{-0.6cm}
\begin{multicols}{2}
\centering

\begin{algorithmic}[1]
\small
\Procedure{\superalgo}{\dots}

\renewcommand*{\thefootnote}{\fnsymbol{footnote}}

\State $(\beta_0, \beta_1 ,\dots ,\beta_\ell) \gets \tpa(k,d)$\footnote{ $k=\Theta(\log H_{\max})$ and $d=64$ as in \cite{Kolmogorovgibbs}}
\State $\varepsilon'\gets \frac{\varepsilon}{2 + \varepsilon}$; $\delta'\gets \frac{\delta}{2}$
\For{$i \in 1,2,\dots, \ell $}
\State $f_i(x) \doteq \exp(-\frac{\beta_{i+1}-\beta_i}{2}H(x))$
\State $g_{i}(x) \doteq \exp(\frac{\beta_{i}-\beta_{i-1}}{2}H(x))$

\EndFor
\State $F \doteq \bigotimes_{i=1}^\ell f_i$; $G \doteq \bigotimes_{i=1}^\ell g_i$

\State ${\mathcal G}^{\otimes} \gets \bigotimes_{i=1}^\ell \gibbschain{H}{\beta_i}$, with $\omega_i= \frac{1}{\ell}, \forall i$
\State $R_f \gets \exp(-\frac{\beta-\beta_0}{2}H_{\min})-\exp(-\frac{\beta-\beta_0}{2}H_{\max})$
\State $R_g \gets \exp(\frac{\beta-\beta_0}{2}H_{\max})-\exp(\frac{\beta-\beta_0}{2}H_{\min})$

\State 
$\hat{\mu} \gets \proc({\mathcal G}^{\otimes},  R_f,T, F, \varepsilon', \delta')$
\State$\hat{\nu} \gets \proc({\mathcal G}^{\otimes}, R_g, T, G, \varepsilon', \delta')$
\State \textbf{return} $\hat{Z} \gets \frac{\hat{\nu}}{\hat{\mu}}$

\EndProcedure

\vfill\columnbreak

\Procedure{\paralgo}{\dots} \small


\State $(\beta_0, \beta_1 ,\dots ,\beta_\ell)= \tpa(k,d)$
\State $ \varepsilon' \gets \frac{\sqrt[\ell]{1 + \varepsilon} - 1}{\sqrt[\ell]{1 + \varepsilon} + 1}$; $\delta'\gets \frac{\delta}{2\ell}$
\For{$i \in 1,2,\dots \ell $}
\State $f_i(x) \doteq \exp(-\frac{\beta_{i+1}-\beta_i}{2}H(x))$
\State $g_{i-1}(x) \doteq \exp(\frac{\beta_{i}-\beta_{i-1}}{2}H(x))$
\State $R_f \gets \exp(-\frac{\beta_{i+1}-\beta_i}{2}H_{\min})-\exp(-\frac{\beta_{i+1}-\beta_i}{2}H_{\max})$
\State $R_g \gets \exp(\frac{\beta_{i+1}-\beta_i}{2}H_{\max})-\exp(\frac{\beta_{i+1}-\beta_i}{2}H_{\min})$
\State  $\hat{\mu}_i \gets \proc({\cal G}_i,   R_f, T_i, f_i, \varepsilon', \delta')$
\State $\hat{\nu}_{i} \gets \proc({\cal G}_i,  R_g,T_i, g_{i}, \varepsilon', \delta')$
\EndFor
 \State \textbf{return} $\hat{Z} \gets 
 {\prod_{i=1}^{\ell} \frac{\hat{\nu}_i}{\hat{\mu}_i}}$

\EndProcedure
\end{algorithmic}

\cyrus{Check for flipped order?}
\cyrus{alg arguments, check order, all algos}

\end{multicols}
\end{minipage}
}

\caption{{\superalgo~ and \paralgo} }
\label{algo:supergocode}
\end{algorithm}


 We now show the correctness and efficiency of \superalgo. 
 Let $\tau_{\rm prx}$ denote ${\mathcal G}^{\otimes}$'s true (and unknown) relaxation time and $T$ a known upper-bound on it ($T\geq \tau_{\rm prx}$), $\varepsilon$ and $\delta$ are user specified precision parameters. For simlicity of presentation we use the following notation to refer to relative ranges: ${\rm relR}={\rm Range}(F)/\mu+{\rm Range}(G)/\nu$, where $\mu=\Expect[F]$ and $\nu=\Expect[G]$~.

\begin{theorem}
\label{thm:superchainefficiency}
With probability at least $1 - \delta$, 
it holds that the total number $\hat{m}$ of Markov chain steps  taken by \superalgo\ is upper-bounded by
\[
 \mathcal{\tilde{O}}\Biggl( \ln\biggl(\frac{1}{\delta}\biggr)\Biggl( \frac{T\cdot{\rm relR} }{\varepsilon} + \frac{{\tau_{\rm prx}}\cdot \bigl(\RelITRelVar{{\cal G}^\otimes}{\tau_{\rm prx}}(F)+\RelITRelVar{{\cal G}^{\otimes}}{\tau_{\rm prx}}(G)\bigr)}{\varepsilon^{2}}\Biggr)\Biggr)
~.\]
\end{theorem}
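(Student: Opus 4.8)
The plan is to derive the bound directly from the correctness-and-efficiency guarantee for \proc\ (\cref{thm:efficiency}), since \superalgo\ is essentially two calls to \proc\ on the same product chain ${\cal G}^{\otimes}$ with the tensor-product functions $F$ and $G$. First I would establish correctness: each call is run with precision $\varepsilon'=\frac{\varepsilon}{2+\varepsilon}$ and confidence $\delta'=\frac{\delta}{2}$, so by \cref{thm:efficiency} we have $\hat{\mu}\in(1\pm\varepsilon')\mu$ and $\hat{\nu}\in(1\pm\varepsilon')\nu$, each failing with probability at most $\delta'$; a union bound makes both hold with probability at least $1-\delta$. The returned ratio then satisfies $\frac{\hat{\nu}}{\hat{\mu}}\in\bigl[\frac{1-\varepsilon'}{1+\varepsilon'},\frac{1+\varepsilon'}{1-\varepsilon'}\bigr]\cdot\frac{\nu}{\mu}$, and a short calculation shows that the choice $\varepsilon'=\frac{\varepsilon}{2+\varepsilon}$ is exactly what makes $\frac{1+\varepsilon'}{1-\varepsilon'}=1+\varepsilon$ (and $\frac{1-\varepsilon'}{1+\varepsilon'}\ge 1-\varepsilon$), so $\hat{Z}$ is a multiplicative $(1\pm\varepsilon)$-approximation of $\nicefrac{\nu}{\mu}$.

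For the efficiency bound I would sum the two step counts supplied by \cref{thm:efficiency}, applied with $f=F$ (mean $\mu$, range $R_f$) and $f=G$ (mean $\nu$, range $R_g$) on the chain ${\cal G}^{\otimes}$ whose true relaxation time is $\tau_{\rm prx}$ and whose supplied upper bound is $T$. Since $\varepsilon'=\frac{\varepsilon}{2+\varepsilon}=\Theta(\varepsilon)$ for $\varepsilon\in(0,1]$ and $\delta'=\frac{\delta}{2}$, we have $\frac{1}{\varepsilon'}=\Theta(\frac{1}{\varepsilon})$, $\frac{1}{\varepsilon'^2}=\Theta(\frac{1}{\varepsilon^2})$, and $\ln\frac{1}{\delta'}=\Theta(\ln\frac{1}{\delta})$. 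Adding the two bounds, the range terms combine as $\frac{T R_f}{\mu\varepsilon'}+\frac{T R_g}{\nu\varepsilon'}=\frac{T}{\varepsilon'}\bigl(\frac{R_f}{\mu}+\frac{R_g}{\nu}\bigr)=\Theta\bigl(\frac{T\cdot{\rm relR}}{\varepsilon}\bigr)$ by the definition ${\rm relR}=\nicefrac{{\rm Range}(F)}{\mu}+\nicefrac{{\rm Range}(G)}{\nu}$, while the trace-variance terms combine as $\frac{\tau_{\rm prx}}{\varepsilon'^2}\bigl(\RelITRelVar{{\cal G}^\otimes}{\tau_{\rm prx}}(F)+\RelITRelVar{{\cal G}^\otimes}{\tau_{\rm prx}}(G)\bigr)$, matching the claimed second term up to constants. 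The remaining doubly-logarithmic prefactor $\ln\bigl(\nicefrac{\ln(\nicefrac{b}{a\varepsilon'})}{\delta'}\bigr)$ from each call I would bound using $\ln\frac{b_F}{a_F}=\frac{\Delta}{2}(H_{\max}-H_{\min})$ (and analogously for $G$), which is polynomial in the problem parameters, so the prefactor is polylogarithmic and is absorbed into the $\tilde{\mathcal O}$.

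The last piece is to account for the randomness of the schedule itself: the $\beta_i$, the length $\ell$, the functions $F,G$, and hence the ranges $R_f,R_g$ are all produced by $\tpa(k,d)$. I would invoke \cref{remark2} to guarantee $\ell=\Theta(\log Q\log H_{\max})$ and bounded relative variances w.h.p., and \cref{lem:intervalbd} to control the interval lengths (and thus keep $\ln\frac{b}{a}$ polynomial). Combining the schedule-quality event, the two correctness events, and the two step-count events (each of the latter holding with probability $1-\frac{\delta'}{3I}$ in \cref{thm:efficiency}) by a single union bound, with the per-event confidence parameters inflated by constant factors, yields a net success probability of at least $1-\delta$.

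I expect the main obstacle to be the bookkeeping that ties all of these high-probability events together at the stated confidence $1-\delta$: \cref{thm:efficiency} gives correctness at confidence $\delta$ but the step-count bound only at confidence $\frac{\delta}{3I}$, and on top of that the schedule randomness must be folded in, so care is needed to choose the constants (and to confirm that $I=\Theta(\log\frac{b}{a\varepsilon})$ is itself controlled) so that the union bound closes. A secondary subtlety is verifying that the ranges passed to \proc\ are exactly the true ranges of $F$ and $G$ — i.e.\ that $R_f=b_F-a_F$ with $b_F=\exp(-\frac{\Delta}{2} H_{\min})$ and $a_F=\exp(-\frac{\Delta}{2} H_{\max})$ — so that the range term collapses cleanly to $\nicefrac{T\cdot{\rm relR}}{\varepsilon}$.
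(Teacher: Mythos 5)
Your proposal follows the paper's own route exactly: the paper's proof of \cref{thm:superchainefficiency} is a one-liner stating that it ``follows immediately from \cref{thm:efficiency} and plugging in the values for paired product estimators and the product chain,'' which is precisely your strategy of invoking \cref{thm:efficiency} twice on ${\cal G}^{\otimes}$ with $F$ and $G$ and summing. Your write-up is in fact more careful than the paper's, since you verify the $\varepsilon'=\frac{\varepsilon}{2+\varepsilon}$ conversion, the union bound over the correctness and step-count events, and the schedule randomness that the paper leaves implicit.
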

 \begin{lemma}\label{lem:TPAboundsSuperchain}
Defining 
$\alpha_1=\sqrt{\frac{Z(\beta_{0})}{Z(\beta_{0}-\Delta_{\max})}} $,  
we have: 
    $\hspace{1.5 cm}\frac{{\rm Range}(F)}{\mu}\leq \alpha_{1}\sqrt{\frac{Q}{\exp(\Delta H_{\min})}}$ and $   \frac{{\rm Range}(G)}{\nu}\leq \alpha_{1}\sqrt{\frac{\exp(\Delta H_{\max}) }{Q}} $~.
\end{lemma}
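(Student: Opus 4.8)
The plan is to funnel everything through the partition function $\Zi{\cdot}$ and reduce the two claims to two independent ingredients: (i) exact formulas for the ranges of the tensor-product estimators $F$ and $G$, and (ii) lower bounds on their means $\mu=\mathbb{E}[F]$ and $\nu=\mathbb{E}[G]$, with the factor $\alpha_1$ emerging as a single \emph{maximal-interval boundary correction}.

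First I would compute the ranges exactly. Because $F=\bigotimes_i f_{\beta_i,\beta_{i+1}}$ factorizes over independent coordinates, and each factor $f_{\beta_i,\beta_{i+1}}=\exp(-\tfrac{\beta_{i+1}-\beta_i}{2}H(X_i))$ takes values in $[\exp(-\tfrac{\Delta_i}{2}H_{\rm max}),\,\exp(-\tfrac{\Delta_i}{2}H_{\rm min})]$, the extremes of the product are attained coordinatewise and the exponents telescope via $\sum_i\Delta_i=\Delta$. Hence $\Range(F)=\exp(-\tfrac{\Delta}{2}H_{\rm min})-\exp(-\tfrac{\Delta}{2}H_{\rm max})\le\exp(-\tfrac{\Delta}{2}H_{\rm min})$, and symmetrically $\Range(G)\le\exp(\tfrac{\Delta}{2}H_{\rm max})$; these are exactly the quantities $R_f,R_g$ that \superalgo\ passes to \proc.

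Next I would rewrite the means. By \cref{defin:pairproduct} and independence, $\mu=\prod_i \Zi{m_i}/\Zi{\beta_i}$ and $\nu=\prod_i \Zi{m_i}/\Zi{\beta_{i+1}}$ with $m_i=\tfrac{\beta_i+\beta_{i+1}}{2}$, so the midpoints cancel in the ratio and $\mu/\nu=\prod_i \Zi{\beta_{i+1}}/\Zi{\beta_i}=\Zi{\beta}/\Zi{\beta_0}=Q$. The structural fact I would exploit is that $z(\beta)=\ln\Zi{\beta}$ is convex and decreasing, since $z''(\beta)=\mathbb{V}_{\gibbs{\beta}}[H]\ge0$ and $z'(\beta)=-\mathbb{E}_{\gibbs{\beta}}[H]\le0$. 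Midpoint convexity gives $\Zi{m_i}\le\sqrt{\Zi{\beta_i}\Zi{\beta_{i+1}}}$, so writing $c_i=\Zi{m_i}/\sqrt{\Zi{\beta_i}\Zi{\beta_{i+1}}}\le1$ I get the clean factorizations $\mu=\sqrt{Q}\prod_i c_i$ and $\nu=\tfrac{1}{\sqrt{Q}}\prod_i c_i$, which collapses both bounds to a single task: lower-bounding $\prod_i c_i$.

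This last step is the main obstacle. I would upper-bound the accumulated convexity deficit $-\sum_i\ln c_i=\sum_i\bigl(\tfrac12(z(\beta_i)+z(\beta_{i+1}))-z(m_i)\bigr)$ using the standard convex estimate $\tfrac12(z(\beta_i)+z(\beta_{i+1}))-z(m_i)\le\tfrac{\Delta_i}{4}\bigl(z'(\beta_{i+1})-z'(\beta_i)\bigr)$ together with $\Delta_i\le\Delta_{\max}$, whose smallness is guaranteed w.h.p.\ by \cref{lem:intervalbd}. The sum then telescopes in $z'$ to the boundary quantity $\tfrac{\Delta_{\max}}{4}\bigl(\mathbb{E}_{\gibbs{\beta_0}}[H]-\mathbb{E}_{\gibbs{\beta}}[H]\bigr)$, which I would in turn dominate by $\tfrac12\bigl(z(\beta_0)-z(\beta_0-\Delta_{\max})\bigr)=\tfrac12\int_{\beta_0-\Delta_{\max}}^{\beta_0}\mathbb{E}_{\gibbs{t}}[H]\,dt$, i.e.\ by the single maximal-interval log-ratio defining $\alpha_1$. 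This yields $\prod_i c_i\ge\alpha_1$, and hence $\mu\ge\alpha_1\sqrt{Q}$ and $\nu\ge\alpha_1/\sqrt{Q}$. Dividing the range bounds of the first step by these mean bounds and simplifying the $H_{\rm min},H_{\rm max}$ exponentials produces the two stated inequalities. The delicate bookkeeping I expect to fight with is exactly which mean pairs with which range and the precise form of $\alpha_1$: carrying the deficit estimate through, the range of $F$ naturally divides against $\nu$ and that of $G$ against $\mu$, and the coefficient that survives is $1/\alpha_1$ rather than $\alpha_1$ — so I would double-check the orientation of the $\Zi{\beta_0}/\Zi{\beta_0-\Delta_{\max}}$ ratio and the $\mu\!\leftrightarrow\!\nu$ labeling before fixing the final constants, making sure no spurious factor of the schedule length $\ell$ leaks in from the telescoping.
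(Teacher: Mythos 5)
Your proof is correct and lands on exactly the same intermediate estimates as the paper's appendix proof --- $\mu \geq \alpha_1 Q^{-1/2}$ and $\nu \geq \alpha_1 Q^{1/2}$, writing $Q=\Zi{\beta_0}/\Zi{\beta}\geq 1$ as the appendix does --- but by a genuinely different route. The paper lower-bounds $\ln\mu$ with the tangent-line (first-order Taylor) bound $z\bigl(\tfrac{\beta_i+\beta_{i+1}}{2}\bigr)\geq z(\beta_i)-\tfrac{\Delta_i}{2}\Expect_{x\sim\gibbs{\beta_i}}[H(x)]$, converts the resulting sum into $\tfrac12\int_{\beta_0-\Delta_{\max}}^{\beta-\Delta_{\max}}\Expect_{x\sim\gibbs{t}}[H(x)]\,\mathrm{d}t$ by monotonicity of $t\mapsto\Expect_{x\sim\gibbs{t}}[H(x)]$ (a Riemann-sum comparison after shifting every interval left by $\Delta_{\max}$), evaluates the integral by the fundamental theorem of calculus, and recovers $\nu$ from $\nu=\mu Q$; the factor $\alpha_1$ appears as the overhang of the shifted integration range. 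You instead factor out $\sqrt{Q}$ exactly via midpoint convexity, writing $\mu=Q^{-1/2}\prod_i c_i$ and $\nu=Q^{1/2}\prod_i c_i$ with $c_i=\Zi{m_i}/\sqrt{\Zi{\beta_i}\Zi{\beta_{i+1}}}\leq 1$, $m_i=\tfrac{\beta_i+\beta_{i+1}}{2}$, so both claims collapse to the single estimate $\prod_i c_i\geq\alpha_1$, which you get from the second-order deficit bound $\tfrac12(z(\beta_i)+z(\beta_{i+1}))-z(m_i)\leq\tfrac{\Delta_i}{4}\bigl(z'(\beta_{i+1})-z'(\beta_i)\bigr)$ and telescoping in $z'$. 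This is tidier and more symmetric bookkeeping, at the price of one extra step: dominating the boundary term $\tfrac{\Delta_{\max}}{4}\bigl(\Expect_{x\sim\gibbs{\beta_0}}[H(x)]-\Expect_{x\sim\gibbs{\beta}}[H(x)]\bigr)$ by $\ln(1/\alpha_1)$, which uses $H\geq 0$ in addition to the same monotonicity the paper relies on (that step is valid, since $\Expect_{x\sim\gibbs{t}}[H(x)]\geq\Expect_{x\sim\gibbs{\beta_0}}[H(x)]$ on $[\beta_0-\Delta_{\max},\beta_0]$). Three small repairs: (i) your expression $\tfrac12(z(\beta_0)-z(\beta_0-\Delta_{\max}))$ has its sign flipped; the nonnegative quantity equal to your integral is $\tfrac12\bigl(z(\beta_0-\Delta_{\max})-z(\beta_0)\bigr)=\ln(1/\alpha_1)$. (ii) Your closing worry about the pairing is unfounded: $\Range(F)$ divides against $\mu$ and $\Range(G)$ against $\nu$, exactly as the lemma states, and no factor of $\ell$ survives the telescoping. (iii) Your observation that the surviving constant is $1/\alpha_1$ rather than $\alpha_1$ is correct and worth insisting on: the paper's own proof likewise establishes $\mu\geq Q^{-1/2}\alpha_1$, hence $\Range(F)/\mu\leq\alpha_1^{-1}\sqrt{Q/\exp(\Delta H_{\min})}$, and since $\alpha_1\leq 1$ the lemma as printed claims something strictly stronger than what either argument proves; the statement should carry $1/\alpha_1$, or equivalently define $\alpha_1$ with the reciprocal ratio $\sqrt{\Zi{\beta_0-\Delta_{\max}}/\Zi{\beta_0}}$.
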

Using \cref{lem:TPAboundsSuperchain} and \cref{thm:superchainefficiency}, we identify $\varepsilon_0$ such that for $\varepsilon\leq \varepsilon_0$ the trace variance term in will the dominate computational complexity of \superalgo. In order to make a fair comparison with the state of the art \cite{Kolmogorovgibbs} we employ \cref{eq:relvar1} of \cref{remark}. Finally we use \cref{remark2} and conclude: 

\begin{coro}\label{coro:smallep} Let $\alpha_1$ 
be as in \cref{lem:TPAboundsSuperchain}, $\tau_{\max}\doteq \max_i \tau_i$ and $\varepsilon_0 \doteq ({\tau_{\rm prx}}/T)\cdot  \left(\sqrt{\frac{\exp(\Delta H_{\min})}{Q}}+\sqrt{\frac{Q}{\exp(\Delta H_{\max})}}\right)\cdot\alpha_1$.
When $\varepsilon \leq \varepsilon_0$, the number of Markov chain steps of \superalgo\ is dominated by $\tilde{O}(\ell  \tau_{\max})$~.
\end{coro}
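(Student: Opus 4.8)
The plan is to start from the two–term complexity bound of \cref{thm:superchainefficiency} and argue that, once $\varepsilon$ is small enough, the second (trace-variance) term dominates, and crucially that this term is governed by the \emph{true} relaxation time $\tau_{\rm prx}$ of the product chain rather than by the a priori bound $T$. Write the bound as the sum of a \emph{range term} $T\cdot{\rm relR}/\varepsilon$ and a \emph{trace-variance term} $\tau_{\rm prx}\bigl(\RelITRelVar{{\cal G}^\otimes}{\tau_{\rm prx}}(F)+\RelITRelVar{{\cal G}^{\otimes}}{\tau_{\rm prx}}(G)\bigr)/\varepsilon^{2}$. The trace-variance term overtakes the range term precisely when $\varepsilon$ falls below the ratio of the two coefficients, and the definition of $\varepsilon_0$ is exactly this crossover point, with the range coefficients replaced by the explicit bounds of \cref{lem:TPAboundsSuperchain}.

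First I would control the trace-variance term. By \eqref{eq:relvar1} of \cref{remark} we have $\RelITRelVar{{\cal G}^\otimes}{\tau_{\rm prx}}(F)\leq \mathbb{V}_{\rm rel}[F]$ and likewise for $G$, while \cref{remark2} (applied to the schedule produced by \tpa$(k,d)$) guarantees $\mathbb{V}_{\rm rel}[F],\mathbb{V}_{\rm rel}[G]=\Theta(1)$. Hence the combined trace-variance factor is $O(1)$, so the trace-variance term is $\tilde{O}(\tau_{\rm prx}/\varepsilon^{2})$. Next I would bound the relaxation time of the product chain: since ${\cal G}^{\otimes}$ is built with uniform weights $\omega_i=\nicefrac{1}{\ell}$, the second-largest eigenvalue of $\sum_i \omega_i P_i$ is $1-\frac{1}{\ell}\min_i \mathrm{gap}_i$, so its spectral gap is $\frac{1}{\ell}\min_i\mathrm{gap}_i$ and $\tau_{\rm prx}=\ell\,\tau_{\max}$. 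Substituting yields a trace-variance term of $\tilde{O}(\ell\tau_{\max}/\varepsilon^{2})$, i.e.\ $\tilde{O}(\ell\tau_{\max})$ once the $\varepsilon$ and $\delta$ dependence is absorbed into $\tilde{O}$, as the corollary intends.

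It then remains to pin down $\varepsilon_0$ and verify the range term is no larger. Balancing the range term against the trace-variance term gives a crossover threshold proportional to $\frac{\tau_{\rm prx}}{T}$ times the reciprocal of the relative ranges (the $\Theta(1)$ trace-variance factor contributing only a constant). Applying the two bounds of \cref{lem:TPAboundsSuperchain}, namely ${\rm Range}(F)/\mu\leq \alpha_1\sqrt{Q/\exp(\Delta H_{\min})}$ and ${\rm Range}(G)/\nu\leq \alpha_1\sqrt{\exp(\Delta H_{\max})/Q}$, turns these reciprocal range coefficients into the $\sqrt{\exp(\Delta H_{\min})/Q}$ and $\sqrt{Q/\exp(\Delta H_{\max})}$ terms appearing in $\varepsilon_0$, with the $\alpha_1$ and $\tau_{\rm prx}/T$ prefactors. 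Treating the two \proc\ calls (on $F$ and $G$, each run at adjusted precision $\varepsilon'=\Theta(\varepsilon)$ and $\delta'=\nicefrac{\delta}{2}$) separately is what produces the \emph{sum} of reciprocal square-roots. For $\varepsilon\leq\varepsilon_0$ the range term is then dominated, the total is governed by the trace-variance term, and the claimed $\tilde{O}(\ell\tau_{\max})$ bound follows.

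The main obstacle I expect is the middle step: establishing $\tau_{\rm prx}=\Theta(\ell\tau_{\max})$ rigorously, since the relaxation time is defined through the second-largest \emph{absolute} eigenvalue, so one must rule out large negative eigenvalues of the product chain contributing (e.g.\ by appealing to laziness of the Gibbs chains, or to the standard product-chain spectral identity that writes the spectrum of $\sum_i\omega_iP_i$ in terms of the individual spectra). A secondary technical point is the bookkeeping of the adjusted precisions and of the $\Theta(1)$ trace-variance constants carefully enough that the crossover threshold matches the stated $\varepsilon_0$ up to the intended constants; in particular one must read $\tilde{O}(\ell\tau_{\max})$ as hiding the $\mathrm{poly}(\nicefrac{1}{\varepsilon})$ and $\ln(\nicefrac{1}{\delta})$ factors, consistent with the corollary's emphasis that the dependence on mixing is on the \emph{true} relaxation time $\tau_{\rm prx}=\ell\tau_{\max}$ rather than on the loose input bound $T$.
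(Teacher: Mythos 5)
Your proposal is correct and follows essentially the same route as the paper: invoke \cref{thm:superchainefficiency}, bound the trace-variance factor by $\Theta(1)$ via \eqref{eq:relvar1} of \cref{remark} together with \cref{remark2}, identify $\varepsilon_0$ as the crossover point using the range bounds of \cref{lem:TPAboundsSuperchain}, and set $\tau_{\rm prx}=\ell\max_i\tau_i$. The only difference is cosmetic: where you derive the product-chain spectral identity (and flag the absolute-eigenvalue subtlety), the paper simply cites the standard reference for this fact.
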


\paragraph{\paralgo}

For $i=1,2,\dots, \ell-1$, \paralgo\ (\cref{algo:supergocode}, right) 
runs $\proc(\gibbschain{H}{\beta_i},f_{\beta_i,\beta_{i+1}})$ and 
$\proc(\gibbschain{H}{\beta_i},g_{\beta_i,\beta_{i+1}})$ independently. We show the computational complexity of \paralgo\ in \cref{thm:parchainefficiency}.

For $i=1,2,\dots, \ell$, assume $\tau_i$ is the true (unknown) relaxation time of $\gibbschain{H}{\beta_i}$ and $T_i$ is a known bound on it. For simplicity of presentation we use the following notations: ${\rm relR}_i\doteq {\rm Range}(f_{\beta_i,\beta_{i+1}})/\mu_i+{\rm Range}(g_{\beta_{i-1},\beta_i})/\nu_i$, where $\mu_i=\mathbb{E}(f_{\beta_i,\beta_{i+1}})$ 
and $\nu_i=\mathbb{E}(g_{\beta_i,\beta_{i+1}})$.

\begin{restatable}[Efficiency of \paralgo{}]{theorem}{thm:parchain}
\label{thm:parchainefficiency}
With probability at least $1 - \delta$, 
it holds that the total number  $\hat{m}$ of Markov chain steps taken by  \paralgo{} is upper-bounded by
\begin{align*}
 \mathcal{\tilde{O}}\Biggl( \log\left(\frac{\ell
}{\delta}\right)\sum_{i=1}^{\ell}\left( \frac{\ell\cdot{T_i\cdot \rm relR}_i}{\varepsilon} +\frac{\ell^2 }{\varepsilon^2}  \tau_{i}\cdot \left(\RelITRelVar{\gibbschain{H}{\beta_i}}{\tau_i}(f_{\beta_i,\beta_{i+1}})+\RelITRelVar{\gibbschain{H}{\beta_i}}{\tau_i}(g_{\beta_{i-1},\beta_{i}})\right) \right)\Biggr)~.\\
\end{align*}
%
Furthermore, for all $1\leq i\leq \ell$, ${\rm Range}(f_{\beta_i,\beta_{i+1}})/\mu_i\leq \ell^{1/\log (n)}$ and  
 ${\rm Range}(g_{\beta_{i-1},\beta_i})/\nu_i\leq \ell^{\alpha_0(i)/\log n}$, where $\alpha_0(i)=(\nicefrac{ H_{\max}}{2\mathbb{E}[H(x)]})-1,$ for $x\sim{\gibbs{\beta_i}}~$.
\end{restatable}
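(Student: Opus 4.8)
The plan is to reduce the analysis of \paralgo\ to the $2\ell$ independent invocations of \proc, one for each estimator $f_{\beta_i,\beta_{i+1}}$ and $g_{\beta_{i-1},\beta_i}$, and then to combine their guarantees through a union bound and a telescoping-product argument. For correctness, I would first invoke \cref{thm:efficiency}: each call returns a $(1\pm\varepsilon')$ multiplicative estimate with failure probability at most $\delta'=\nicefrac{\delta}{2\ell}$, so a union bound over all $2\ell$ calls shows that, with probability at least $1-\delta$, every estimate $\hat\mu_i,\hat\nu_i$ is simultaneously accurate. On this good event, the key algebraic fact is that the parameter choice $\varepsilon'=\nicefrac{(\sqrt[\ell]{1+\varepsilon}-1)}{(\sqrt[\ell]{1+\varepsilon}+1)}$ is exactly the value for which $\nicefrac{(1+\varepsilon')}{(1-\varepsilon')}=\sqrt[\ell]{1+\varepsilon}$. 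Since $\nicefrac{\hat\nu_i}{\hat\mu_i}$ lies within a factor $\nicefrac{(1+\varepsilon')}{(1-\varepsilon')}$ of $\nicefrac{\nu_i}{\mu_i}$ in each direction, the product $\hat Z=\prod_{i=1}^{\ell}\nicefrac{\hat\nu_i}{\hat\mu_i}$, which telescopes as in the paired product estimator (\cref{defin:pairproduct}) to the target ratio $Q$, deviates from $Q$ by at most a factor $(\sqrt[\ell]{1+\varepsilon})^{\ell}=1+\varepsilon$ above and $(1+\varepsilon)^{-1}\ge 1-\varepsilon$ below, giving the required multiplicative guarantee.

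For the complexity bound, I would apply the step-count guarantee of \cref{thm:efficiency} to each invocation and sum. Writing $V_i\doteq\RelITRelVar{\gibbschain{H}{\beta_i}}{\tau_i}(f_{\beta_i,\beta_{i+1}})+\RelITRelVar{\gibbschain{H}{\beta_i}}{\tau_i}(g_{\beta_{i-1},\beta_i})$ for brevity, the $i$-th pair of calls contributes $\tilde{\mathcal O}\bigl(\ln(\nicefrac{1}{\delta'})\bigl(\nicefrac{T_i\,\mathrm{relR}_i}{\varepsilon'}+\nicefrac{\tau_i V_i}{\varepsilon'^{2}}\bigr)\bigr)$ steps. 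The remaining work is to substitute the parameter settings: the crucial estimate is $\varepsilon'=\Theta(\nicefrac{\varepsilon}{\ell})$, which follows from the first-order expansion $\sqrt[\ell]{1+\varepsilon}=1+\Theta(\nicefrac{\varepsilon}{\ell})$ together with $\varepsilon'=\nicefrac{(s-1)}{(s+1)}$ for $s=\sqrt[\ell]{1+\varepsilon}\to 1$. Consequently $\nicefrac{1}{\varepsilon'}=\Theta(\nicefrac{\ell}{\varepsilon})$ produces the factor $\ell$ on the range term, $\nicefrac{1}{\varepsilon'^{2}}=\Theta(\nicefrac{\ell^{2}}{\varepsilon^{2}})$ produces the factor $\ell^{2}$ on the trace-variance term, and $\ln(\nicefrac{1}{\delta'})=\ln(\nicefrac{2\ell}{\delta})$ yields the $\log(\nicefrac{\ell}{\delta})$ prefactor; summing over $i$ gives precisely the claimed bound.

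For the range bounds I would argue pointwise for a fixed $i$, conditioning on the high-probability event of \cref{lem:intervalbd} that the interval length is small. Each mean is lower-bounded by Jensen's inequality: since $t\mapsto\exp(ct)$ is convex, $\mu_i=\Expect_{x\sim\gibbs{\beta_i}}[\exp(-\tfrac{\Delta_i}{2}H(x))]\ge\exp(-\tfrac{\Delta_i}{2}\Expect[H])$, and similarly $\nu_i\ge\exp(\tfrac{\Delta_{i-1}}{2}\Expect[H])$. Each range is upper-bounded by its extremal value, using $H_{\min}\ge 0$: $\Range(f_{\beta_i,\beta_{i+1}})\le 1$ while $\Range(g_{\beta_{i-1},\beta_i})\le\exp(\tfrac{\Delta_{i-1}}{2}H_{\max})$. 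Dividing, $\nicefrac{\Range(f_{\beta_i,\beta_{i+1}})}{\mu_i}\le\exp(\tfrac{\Delta_i}{2}\Expect[H])$ and $\nicefrac{\Range(g_{\beta_{i-1},\beta_i})}{\nu_i}\le\exp(\tfrac{\Delta_{i-1}}{2}(H_{\max}-\Expect[H]))$, where the presence of $H_{\max}$ in the second bound is exactly what introduces the extra factor $\alpha_0(i)=\nicefrac{H_{\max}}{2\Expect[H]}-1$. Finally I would invoke the second part of \cref{lem:intervalbd}, whose tail $d\exp(-\varepsilon k/d)$ is governed by $k=\Theta(\log H_{\max})$ and $d=64$, with a threshold of order $\nicefrac{\log\ell}{\log n}$ to certify $\Delta_i\,\Expect[H]=O(\nicefrac{\log\ell}{\log n})$, turning the two exponential bounds into $\ell^{\nicefrac{1}{\log n}}$ and $\ell^{\nicefrac{\alpha_0(i)}{\log n}}$.

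The main obstacle, I expect, is this last step: reconciling the threshold in \cref{lem:intervalbd} with the precise exponents $\nicefrac{1}{\log n}$ and $\nicefrac{\alpha_0(i)}{\log n}$, including the subtracted unit in $\alpha_0(i)$ and the change of reference distribution between $\gibbs{\beta_i}$ and $\gibbs{\beta_{i-1}}$ for consecutive means. Getting the constants to line up exactly, rather than merely up to an absolute factor, requires a careful choice of threshold and a union bound over all $\ell$ intervals, so that the range bounds hold simultaneously for every $i$ inside the same high-probability event used for the complexity guarantee.
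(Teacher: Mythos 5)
Your first two steps are exactly the paper's argument: it likewise treats correctness and the step count as immediate consequences of \cref{thm:efficiency} applied to the $2\ell$ calls of \proc{} with $\delta'=\delta/(2\ell)$ and $\varepsilon'=(\sqrt[\ell]{1+\varepsilon}-1)/(\sqrt[\ell]{1+\varepsilon}+1)=\Theta(\varepsilon/\ell)$, a union bound, and the telescoping of $\prod_{i}\hat{\nu}_i/\hat{\mu}_i$. The entire substance of the paper's proof is therefore the range-bound lemma (\cref{lem:rangeparalgo}), and that is where your route both differs and breaks.

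The paper never uses Jensen: it writes the means exactly as partition-function ratios, $\mu_i=Z(\beta_i+\Delta_i/2)/Z(\beta_i)$ and $\nu_i=Z(\beta_{i+1}-\Delta_i/2)/Z(\beta_{i+1})$, converts $\Delta_i$ into gaps of $z(\beta)=\ln Z(\beta)$ via convexity (using $z'(\beta)=-\Expect_{x\sim\gibbs{\beta}}[H(x)]$), and then applies part 1 of \cref{lem:intervalbd} with a union bound. Your Jensen argument fails for the $f$-bound: Jensen lower-bounds $\mu_i$ by $\exp\bigl(-\frac{\Delta_i}{2}\Expect_{x\sim\gibbs{\beta_i}}[H(x)]\bigr)$, the expected energy under the \emph{lower}-endpoint distribution $\gibbs{\beta_i}$ (the distribution $f_{\beta_i,\beta_{i+1}}$ is sampled from), whereas part 2 of \cref{lem:intervalbd} only certifies $\Delta_i\Expect_{x\sim\gibbs{\beta_{i+1}}}[H(x)]\le\epsilon$, i.e., control at the \emph{upper} endpoint. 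Since $\frac{d}{d\beta}\Expect_{\gibbs{\beta}}[H]=-\Var_{\gibbs{\beta}}[H]\le 0$, we have $\Expect_{\gibbs{\beta_i}}[H]\ge\Expect_{\gibbs{\beta_{i+1}}}[H]$ with no a priori bound on the ratio; convexity gives only $\Delta_i\Expect_{\gibbs{\beta_{i+1}}}[H]\le z(\beta_i)-z(\beta_{i+1})\le\Delta_i\Expect_{\gibbs{\beta_i}}[H]$, which is the wrong direction. So no threshold choice makes the lemma control your exponent $\frac{\Delta_i}{2}\Expect_{\gibbs{\beta_i}}[H]$: this is a structural mismatch, not the constant-matching issue you flag. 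The repair is the paper's device: using $H_{\min}\ge 0$, bound $\Range(f_{\beta_i,\beta_{i+1}})/\mu_i\le 1/\mu_i=\exp\bigl(z(\beta_i)-z(\beta_i+\Delta_i/2)\bigr)\le\exp\bigl(z(\beta_i)-z(\beta_{i+1})\bigr)$ and invoke part 1 directly. Your $g$-argument does go through, because there the sampling distribution $\gibbs{\beta_i}$ \emph{is} the upper endpoint of $[\beta_{i-1},\beta_i]$, but it yields exponent $\frac{H_{\max}}{2\Expect[H]}-\frac{1}{2}$ rather than the claimed $\alpha_0(i)=\frac{H_{\max}}{2\Expect[H]}-1$; this is asymptotically harmless (an extra factor $\ell^{1/(2\log n)}$) but weaker than the stated inequality, and recovering the full $-1$ again requires keeping the exact factor $\exp\bigl(z(\beta_{i+1}-\Delta_i/2)-z(\beta_{i+1})\bigr)$ in the denominator so that the subtracted unit comes from the $z$-gap itself.
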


\paralgo\ and \superalgo\ make different computational complexity tradeoffs. \paralgo\ is  usually slower  than \superalgo,\ because in each iteration $i=1,2,\dots,\ell$, the mean estimator must acquire a higher-precision estimate so that \emph{all estimators together} achieve an $\varepsilon$-$\delta$ relative-error guarantee.   Relaxation times (true values and their upper-bounds) appear \emph{in a sum} in the complexity of \paralgo, whereas they appear \emph{in a maximum} in \superalgo\ ($\sum_{i=1}^{\ell}\tau_i$ vs.\ $\max_{i=1, \dots, \ell} \tau_i$). Furthermore, dominance of the trace variance terms in both of these algorithms  occur at different values of $\varepsilon$. A comparison of the complexity of these algorithms, in the high-precision regime, with Kolmogorov's \TPA\ + PPE (which uses MCMC as a black box) is presented in \cref{tab:comparisioncon}.

\begin{table*}[t]

\scriptsize
\begin{tabular}{|c|c|c|}
\hline 
\paralgo&\superalgo& \tpa\ + {\rm PPE} \cite{Kolmogorovgibbs}\\
\hline 
\!\!\!$\displaystyle \ell^2\sum_{i=1}^\ell \tau_i 
\left({\rm Reltrv}^{\tau_i}_{\gibbschain{H}{\beta_i}}(f_{i}) + \null\right.$ \!\!\!
& \!\!\!$\displaystyle \tau_{\rm prx} \left( {\rm Reltrv}^{\tau_{\rm prx}} [F]+{\rm Reltrv}^{\tau_{\rm prx}} [G] \right)$\!\!\!&\!\!\!$\displaystyle\ln \frac{q \ln H_{\max}}{\varepsilon} \sum_{i=1}^\ell{T_i}\cdot  \left({\mathbb{V}}_{\rm rel}(F)+{\mathbb{V}}_{\rm rel}(G)\right)$\!\!\!\\
$\left.{\rm Reltrv}^{\tau_i}_{\gibbschain{H}{\beta_i}}(g_{i})  \right)$& $\displaystyle =O\left(\ell \max\{\tau_{i}\}_{i=1:\ell}  \right)$&$\displaystyle =O \left( \ln\frac{q\ln H_{\max}}{\varepsilon} \sum_{i=1}^\ell{T_i} \right)$\\
\hline 
\end{tabular}


\caption{\footnotesize{Comparison of the number of Markov chain steps, when  $\varepsilon$ is adequately small. In all columns, a multiplicative factor of $\nicefrac{1}{\varepsilon^2}$ is omitted to ease presentation, and $q=\ln Q$.
Note that computational complexity of both \paralgo\ and \superalgo\ only depends on true relaxation times, denoted by $\tau_i$, and the 
\TPA\ + PPE method's complexity is dependent on their upper bounds, denoted by $T_i$. }
}\label{tab:comparisioncon}
\end{table*}

\label{sec:superalgo}

\if 0
\begin{algorithm}[ht]
\algrenewcommand\algorithmicindent{1.0em}
\begin{algorithmic}[1]

\Procedure{Parallel}{\dots}

\State \textbf{input}
\State \textbf{output}

\State $(\beta_0, \beta_1 ,\dots ,\beta_{\ell})= \tpa(\log n,1)$
\State $\Delta_i\gets \beta_{i+1}-\beta_i$

\State $ \varepsilon' \gets \frac{\sqrt[l]{1 + \varepsilon} - 1}{\sqrt[l]{1 + \varepsilon} + 1}$, and $\delta'\gets \delta/2l$.
\For{$i \in 1,2,\dots l $}

\State $f_i(x)= \exp(-\frac{\beta_{i+1}-\beta_i}{2}H(x))$;  $g_{i-1}(x)= \exp(\frac{\beta_{i}-\beta_{i-1}}{2}H(x))$
\State $\mu_i= \proc(\gibbschain{H}{\beta_i},  T_i, f_i, \varepsilon', \delta')$
\State $\nu_{i-1}= \proc(\gibbschain{H}{\beta_i},  T_i, g_{i-1}, \varepsilon', \delta')$. 
\EndFor
\State $Z= \prod_{i=1}^{\ell}\frac{\nu_i}{\mu_i}$ 
 \State \textbf{return} $Z$

\EndProcedure

\end{algorithmic}


\begin{algorithmic}

\Procedure{Super}{\dots}

\State \textbf{input}
\State \textbf{output}
\State $(\beta_0, \beta_1 ,\dots ,\beta_{\ell})= \tpa(\log n,1)$
\For{$i \in 1,2,\dots l $}
\State $f_i(x)= \exp(-\frac{\beta_{i+1}-\beta_i}{2}H(x))$; \ $g_{i-1}(x)= \exp(\frac{\beta_{i}-\beta_{i-1}}{2}H(x))$

\EndFor
\State $f=\bigotimes_{i=1}^{\ell} f_i$; \ $g=\bigotimes_{i=1}^{\ell} g_i$

\State ${\mathcal M}= \prod_{i=1}^{\ell} \gibbschain{H}{\beta_i}$
\State 
$\mu= \proc({\mathcal M},  T, f, \frac{\varepsilon}{2}, \frac{\delta}{2})$; \ $\nu= \proc({\mathcal M},  T, g, \frac{\varepsilon}{2}, \frac{\delta}{2})$ \cyrus{Need $\varepsilon' = \frac{\varepsilon}{2 + \varepsilon}$}
\State \textbf{return} $\frac{\nu}{\mu}$ 

\end{algorithmic}
\caption{\superalgo}\label{alg:superalgo}
\end{algorithm}
\fi

\iftrue

\section{Experimental Results}\label{sec:exp}
In this section we report our experiment results, comparing 
the performance of the two versions of our \emph{doubly adaptive} method (alg. 2), to the performance  of the state of the art algorithm in~\cite{Kolmogorovgibbs}. 

{\bf Setup.} We run the experiments using the single site Gibbs sampler (known also as the Glauber dynamics) on two different factor graph models:

{\bf (A) The Ising model on 2D lattices.} 
Having a 2-dimension lattice of size $n\times n$, the Hamiltonian is defined on $n^2$ random variables having values $\pm 1$ and their dependency is represented by the Hamiltonian:
$
H(x) = -\sum_{(i,j)\in E}{\mathbbm{1}(x(i)=x(j))}
$. 
We run the algorithms on lattices of sizes $2\times 2$, $3\times 3$, $4\times 4$, and $6\times 6$. For each lattice, the parameter $\beta \geq 0$ is chosen below the critical inverse temperature at which it  undergoes a phase transition. We use known mixing time bounds  for high temperature Ising models \cite{aldous2013probability} (see \cref{fig:ising2} and A.6. of supplementary material).

 {\bf
 (B) The logical voting model.} For a parameter $n$ \cyrus{$n$ parameter name?}, we have $2n+1$ random variables:  the query variable $Q\in\{-1,1\}$, and the voter variables $T_1,T_2,\dots, T_n$ and $F_1,F_2,\dots, F_n$ all in $\{0,1\}$. The factors have $2n+1$ weights, $\omega, \omega_{T_i}, \omega_{F_i}, i=1,\dots, n$. 
The Hamiltonian is:
\vspace{-0.25 cm}
\begin{align*}
H(Q, T, F) = \omega Q \max_{i}{T_i} - \omega Q \max_{i}{F_i} +
\sum_{i=1}^{n}\omega_{T_i}T_i + \sum_{i=1}^{n}\omega_{F_i}F_i 
\textrm{ , where } \omega, \omega_{T_i}, \omega_{F_i} \in [-1, 1]
\end{align*}
The parameters are reported in \cref{fig:voting}.
We follow De Sa et al.\ \cite{DeSa2015GIBBSHW} and use \emph{hierarchy width} to derive upper bounds on mixing times.
To make a fair comparison, we always run the \tpa\ algorithms once, and with the parameters given in \cite{Kolmogorovgibbs}. At each iteration of \proc, the sample size is extended with geometric ratio $1.1$ (see~ \cref{alg:meanestsubroutine}~line~\ref{alg:ss}).
All code is available at \url{https://github.com/zysophia/Doubly_Adaptive_MCMC}.

\begin{figure*}[h]
  \scalebox{0.875}{
  \begin{subfigure}{0.3\textwidth}
      \centering
    \includegraphics[width=6cm,height=6cm]{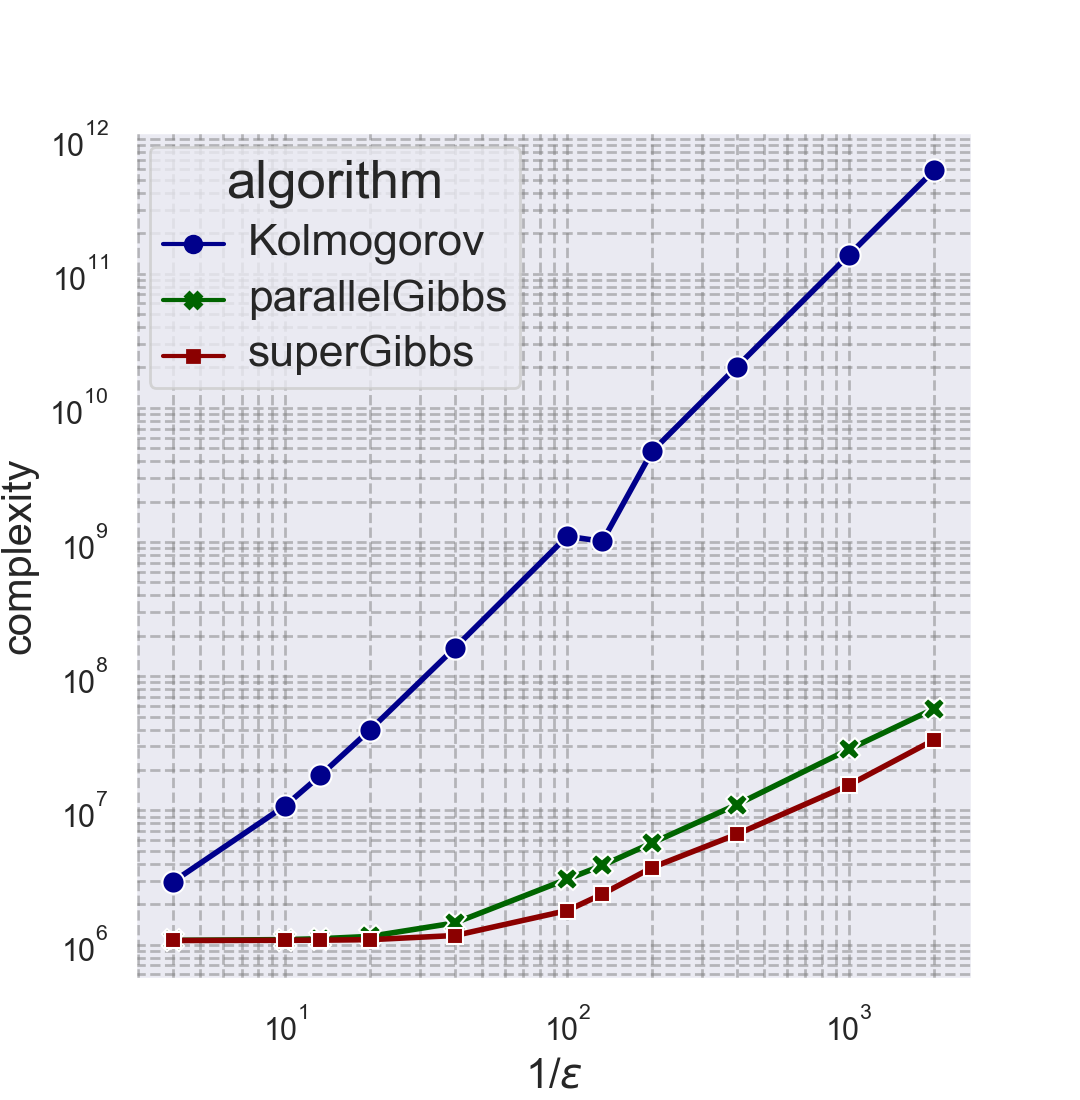}
    \caption{$\beta = .01$, 3$\times$3 lattice} \label{fig:1b}
  \end{subfigure}
  }%
  \hspace*{\fill}
  \scalebox{0.875}{
  \begin{subfigure}{0.3\textwidth}
      \centering
    \includegraphics[width=6cm,height=6cm]{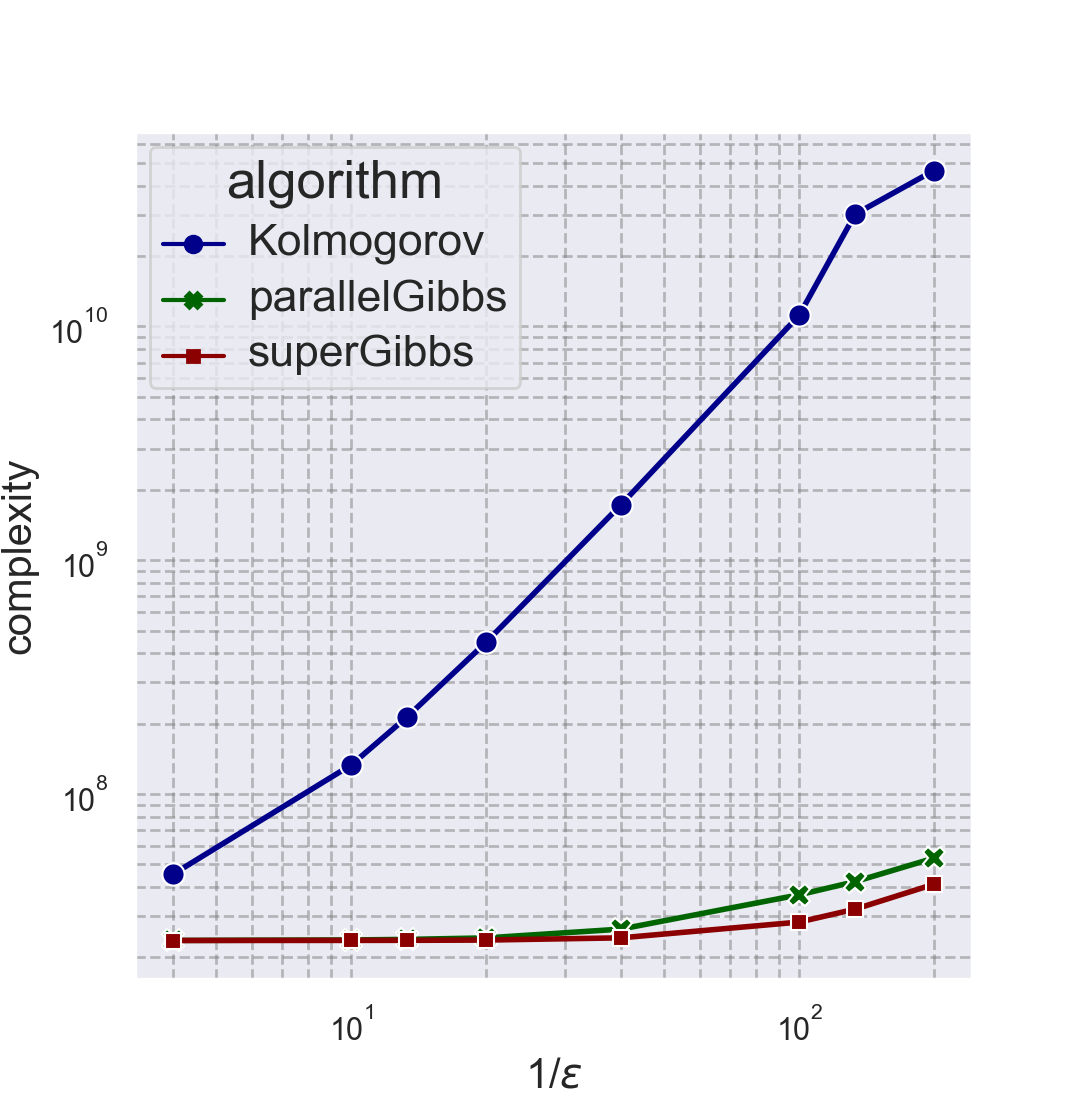}
    \caption{$\beta = .002$, 6$\times$6 lattice} \label{fig:1d}
  \end{subfigure}}%
  \hspace*{\fill}
  \scalebox{0.875}{
    \begin{subfigure}{0.33\textwidth}
    \includegraphics[width=6cm,height=6cm]{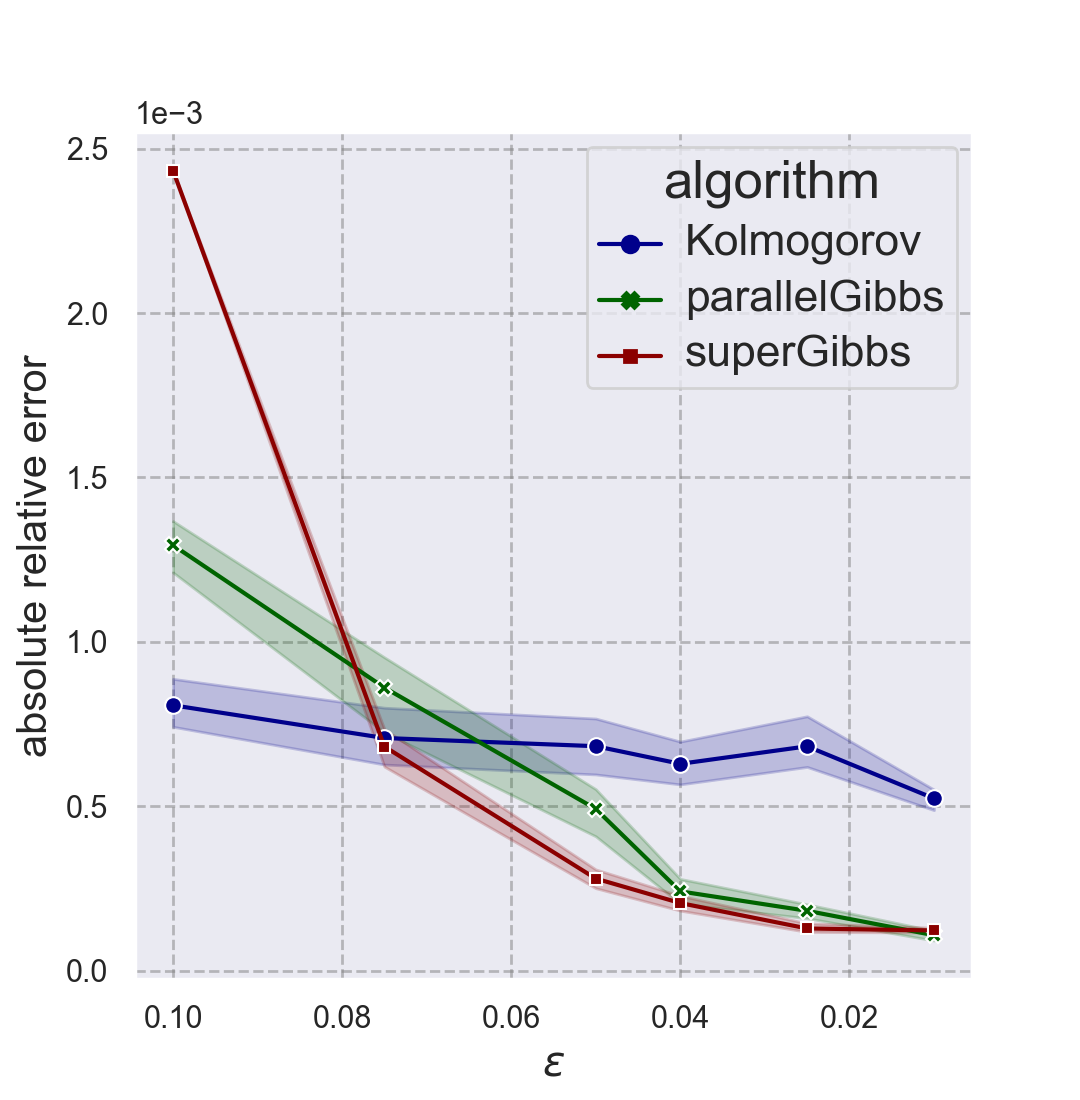}
    \caption{relative errors} \label{fig:error}
  \end{subfigure}%
  }
  \hspace*{\fill}

\caption{\footnotesize Comparison of sample complexity and precision $\frac{1}{\varepsilon}$ on Ising models. See also the A.6. of the supplementary material} \label{fig:ising2}
\end{figure*}


{\bf Results:} Our experiments demonstrate the practical advantages of our \emph{doubly adaptive method}, validating our theoretical analysis.   

(1) We first compare the complexity of our algorithms to Kolmogorov's algorithm. 
 Our experiments  show the superiority of both versions of our methods on different models and various sets of parameters. \Cref{fig:ising2}  demonstrates the superiority of our methods on the Ising model for various sets of parameters, and in \cref{fig:2aa,fig:2cc}  for the voting model,  when $\varepsilon$ is fixed and $\Zi{\beta}$ is varying (\cref{fig:2aa}), and  when $Z$ is fixed and $\varepsilon$ is varying (\cref{fig:2cc}). 
 All of these 
hold while the precision of our algorithms beats \cite{Kolmogorovgibbs} as $\varepsilon\rightarrow 0$ (\cref{fig:error})\cyrus{confusing}.

(2) To demonstrate the advantage of using the relative \emph{trace} variance, in contrast to the relative variance, we run both of our algorithms using a simpler mean estimator which only uses progressive sampling, and we compare the results. This is done by setting $T\gets 1$ in line~\ref{alg:rme:trel} of \proc.  In \Cref{fig:2a}, we show the effectiveness of \emph{trace averaging}, since both \superalgo\ and \paralgo\ beat their simplified versions ($T\gets 1$) after $\nicefrac{1}{\varepsilon}$ passes a certain threshold. This is consistent for different parameters of the voting model.

(3) Comparing the performance of \superalgo\ and \paralgo, we observe that in
all of our experiments \superalgo\ has better performance than \paralgo\cyrus{Always or eventually?}. In \cref{fig:2a}, 
we show the  trace variance term  \paralgo\ becomes dominant earlier as $\nicefrac{1}{\varepsilon}$ 
grows, 
thus it performs better in this perspective. 
This is consistent with our theoretical findings, because the ranges of estimators in \paralgo\ are smaller than the ranges 
used in \superalgo.

\begin{figure*}
\captionsetup[figure]{font=tiny}
\scalebox{0.87}{
  \begin{subfigure}{0.3\textwidth}
    \hspace{-0.1cm}\includegraphics[width=5.1cm,height=5cm]{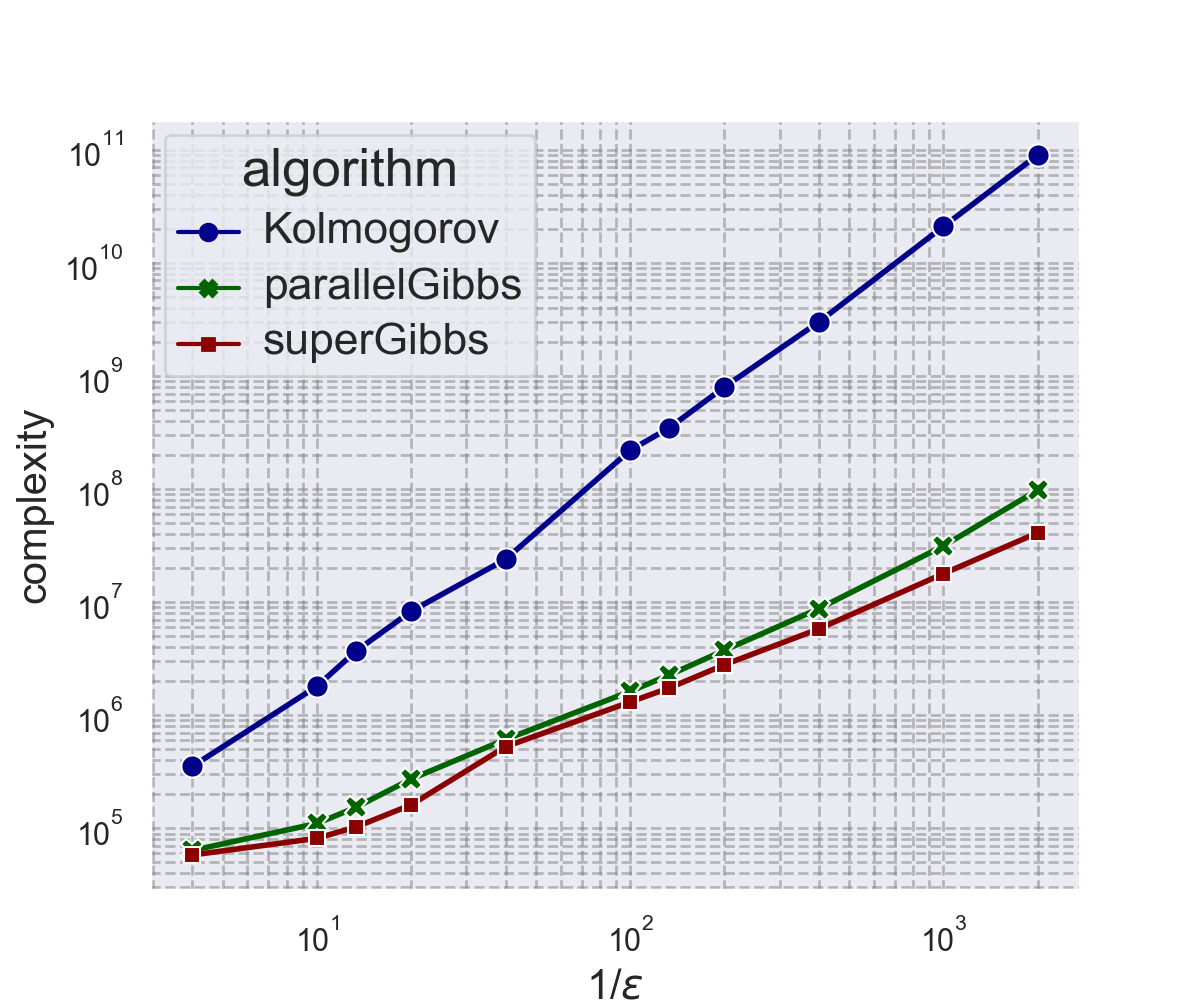}
    \caption{Complexity vs.\ $\varepsilon$ comparison against \tpa\ + PPE \cite{Kolmogorovgibbs}.} \label{fig:2cc}
  \end{subfigure}%
  }  \hspace*{\fill}
    \scalebox{0.84}{
  \begin{subfigure}{0.33\textwidth}
    \vspace{-0.0cm}
    \hspace{-1cm}\includegraphics[width=7.7cm,height=5cm,trim={1.5cm 0cm 0cm 0.25cm},clip]{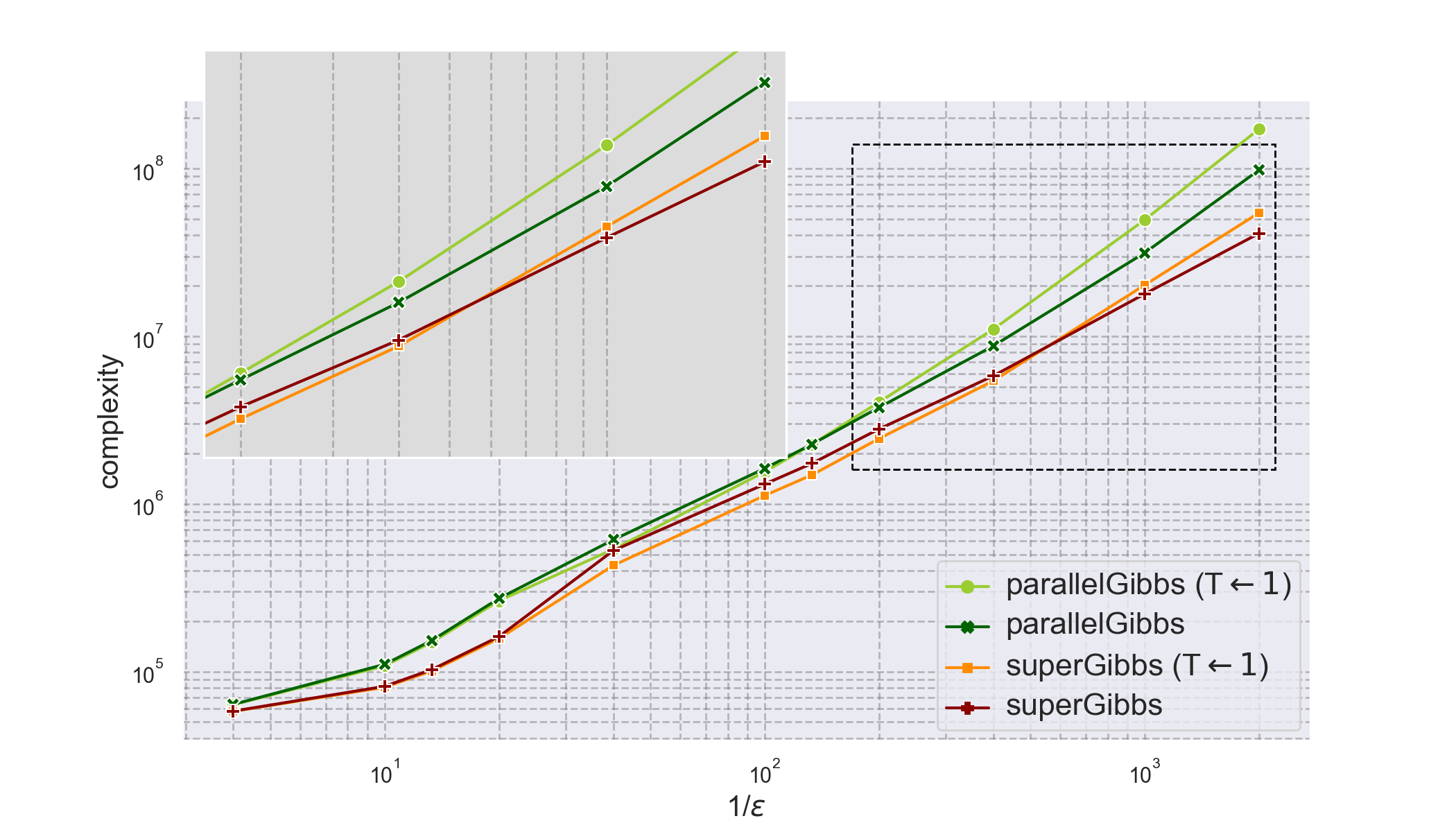}
    \vspace{-0.55cm}
    \caption{\small{Comparison of our algorithms and the effect of trace variance}.} \label{fig:2a}
    \vspace{-0.0cm}
  \end{subfigure} } 
  \hspace*{\fill}  \scalebox{0.86}{
   \begin{subfigure}{0.25\textwidth}
    \hspace{0.3cm}
    \includegraphics[width=5cm,height=5cm]{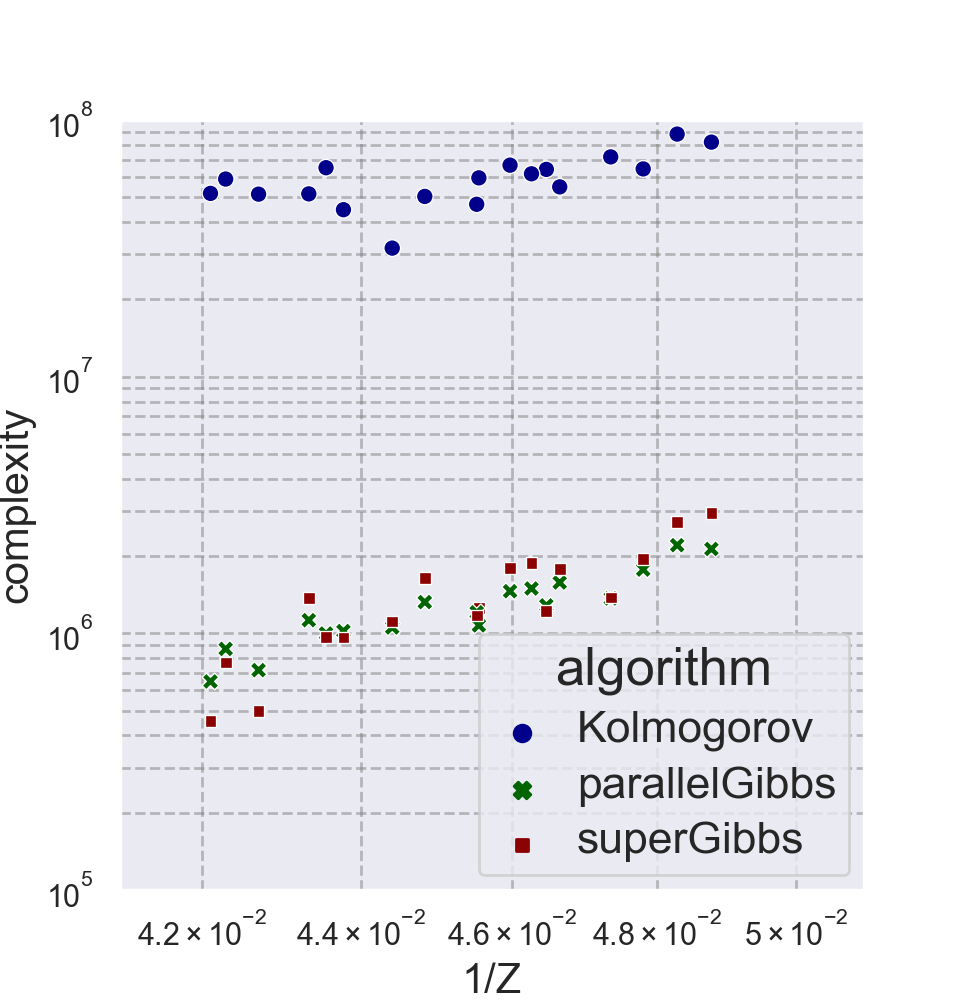}
    \vspace{-0.5cm}
    \caption{\small{}Complexity vs.\ $Z(\beta)$; 
    $ \varepsilon=0.025$ comparison against \tpa\ + PPE \cite{Kolmogorovgibbs}. } \label{fig:2aa}
  \end{subfigure}%
  }
  \hspace*{\fill}
 
\vspace{-0.1cm}
\caption{\small
Experiments on voting models. In (a) and (b) the parameters  are $\beta = 0.1, n=3, \omega = 0.9, 
    \omega_{T} = \langle 0.2, 0.5, 0.1 \rangle$ and $\omega_{F} = -\langle 0.8, 0.2, 0.9 \rangle$. In (c), we have $n=5$, and the weights and $\beta$ are picked randomly\cyrus{uniformly?} to generate models with various values of $Z(\beta)$.
}\label{fig:voting}
\end{figure*}

 \vspace{-0.2cm}
\section{Conclusions: advantages and limitations of proposed algorithms}

We develop a doubly-adaptive MCMC-based estimator for the partition function of Gibbs distributions, which resolves a major impediment of prior methods that use MCMC as a black-box sampler.
We show, both theoretically and experimentally, that our method requires substantially fewer MCMC steps than the state-of-the-art method.
The better performance is due to several factors, which all stem from the use of an \emph{adaptive MCMC mean estimator} instead of a standard "black-box" MCMC estimate. The complexity of the adaptive MCMC process
depends on the (smaller) \emph{trace}, rather than \emph{stationary}, relative variances, and on \emph{relaxation times} instead of \emph{mixing times}. It is also less sensitive to weak upper-bounds on mixing and relaxation times.

In particular, Kolmogorov's method  requires $\Theta(\nicefrac{\ell}{\varepsilon^2})$ approximately independent samples, where $\ell$ is the length of cooling schedule. This requires tight convergence (total variance distance of $O(\nicefrac{\varepsilon^{2}}{\ell})$ from stationary) for each sample, which adds a multiplicative $\ln \frac{\ell}{\varepsilon^{2}}$, with $\ell=\Theta(\ln Q \ln H_{\max})$, to its complexity (see column 3 of \cref{tab:comparisioncon} and \cite{Kolmogorovgibbs}, theorem 9).
In contrast, our doubly adaptive method \emph{only} depends on relaxation times, which do not depend on $\varepsilon$. 

{\bf Limitations.} While significantly improving the state of the art, our methods suffer from a several limitations. In \superalgo, the major limitation is the dependence  on the \emph{relative ranges} of $F$ and $G$, which can be large,  especially when the Hamiltonian range is large. Another issue is that the product chain's mixing time is dominated by $\ell \max\{\tau_i\}_{i=1}^\ell$, as opposed to $\sum_{i=1}^\ell \tau_i$.
While 
\paralgo\ circumvents these issues by estimating each factor of the telescoping product independently, it fails to beat \superalgo's efficiency in general, 
due both to the union bound and the higher-precision guarantees required for each subproblem.
Improving performance further will likely require 
new estimators with smaller ranges and relative trace variances. 

{\bf Statement of Broader Impact.}
While probabilistic graphical models as other machine learning methods that rely on MCMC estimations continue to grow in importance and popularity. But running the MCMC to theoretical convergence guarantees is often prohibitively expensive, while running it to \emph{apparent convergence} is methodologically unsound, particularly in the modern context, where public confidence in machine learning systems is continuously eroded by ethical, accuracy, and safety failures.
Our work attempts to bridge the gap between the definite, elegant and theoretically sound analytic methods, and efficiency-focused practical utility, as we seek to reduce \emph{proof-burden}, while maintaining theoretical guarantees of accuracy, with adaptive methods that bound efficiency in terms of (potentially unknown) convergence rate metrics and variances.

{\bf Acknowledgements.}
Shahrzad Haddadan is supported by NSF Award CCF-1740741. Cyrus Cousins and Eli Upfal are supported by NSF grant RI-1813444 and DARPA/AFRL grant FA8750. The authors are thankful to anonymous reviewers of NeurIPS 2021 for several valuable inputs. 

\vfill
\pagebreak[1]

\vfill
\pagebreak[4]

\vfill
\pagebreak[4]

\appendix

\section{Appendix}

\subsection{
Algorithms used in the literature} 
\subsubsection{The \tpa\ method \cite{HuberTPA2010,Kolmogorovgibbs}}\label{app:TPA}

We refer to Huber and Schott's algorithm as the original \tpa, and Kolmogorov's, which is used in our algorithms and referred to as \tpa$(k,d)$ in the main manuscript,  as the \tpa\ method.

\cyrus{Call these $\tpa(k)$ and $\tpa(k,d)$?}

\begin{algorithm}[ht]
\algrenewcommand\algorithmicindent{1.0em}
\begin{algorithmic}[1]
\State \textbf{output} a schedule $(\beta_1,\dots , \beta_l)$ of values in the interval $[\beta_{\min}, \beta_{\max}]$.
\State $\beta_0 \gets \beta_{\min}$
\For{$i=0:\infty$}
\State sample $X\sim \gibbs{\beta_i}$ draw $U\in [0,1]$ uniformly, $\beta_{i+1}=\beta_i- \log U/H(X)$ (or $+\infty$ if $H(X)=0$.)
\If{$\beta_{i+1}\notin [\beta_{\min},\beta_{\max}]$} Terminate 
\EndIf 
\EndFor
\end{algorithmic}
\caption{\textsc{The Original Tpa-method} \cite{HuberTPA2010}}\label{alg:tpahuber}
\end{algorithm}

\begin{algorithm}[ht]
\algrenewcommand\algorithmicindent{1.0em}
\begin{algorithmic}[1]
\State \textbf{input} integers $k$ and $d$
\State \textbf{output} a schedule $(\beta_0,\beta_1,\dots , \beta_l)$ of values in the interval $[\beta_{\min}, \beta_{\max}]$.

\For{$i=1:k$}
\State ${\cal B}_i\gets \textsc{The Original TPA-method}()$.
\State let ${\cal B}\gets {\cal B}\cup {\cal B}_i$
\EndFor 
\State sort $\cal B$, keep one sample uniformly from the initial $d$ elements, and keep every $d$th successive value in the remaining sequence. 
\State add $\beta_{\min}$ and $\beta_{\max}$ to $\cal B$ 
\Return ${\cal B}$ 
\end{algorithmic}
\caption{\textsc{ Tpa-Method} \cite{Kolmogorovgibbs}}\label{alg:tpakol}
\end{algorithm}

\subsubsection{Single site Gibbs sampler (Glauber dynamics chain)}\label{App:gibbschain}

Consider  $\beta$ and $H$ defined as above. 
Let $X=(X_1,X_2,\dots, X_n)$ be the set of all variables in the Gibbs distribution with inverse temperature $\beta$ and Hamiltonian $H$, thus, the domain of $H$ is $\Omega=\Omega_1\times \Omega_2\times \dots \Omega_n$, and each $\Omega_i$ is the range of random variable $X_i$. At each time step $t$, assume the current state is $x^{(t)}=(x_1,x_2,\dots ,x_n)$. Take $i\sim 1, \dots, n$ uniformly at random. Sample $y$ from the following distribution: 

\begin{equation}
     \gibbs{\beta}(y\vert x^{(t)}_{-i})=\frac{\exp(-\beta H(x^{(t)};x_i\gets y))}{\sum_{\omega\in\Omega_i}\exp(-\beta H(x^{(t)};x_i\gets \omega))} \enspace,
\end{equation}
where for an arbitrary $\omega \in \Omega_i$ we define $(x^{(t)};x_i\gets \omega)$ be the vector in which all the elements except the $i$th element are equal to $x_i$ and the $i$th element is replaced with $\omega$.

In other words, for any arbitrary vectors $x^{(t)}$ and  $x^{(t+1)}$, the transition probability is: 

$$
\gibbschain{H}{\beta}(x^{(t)}, x^{(t+1)})=\begin{cases}
 (\nicefrac{1}{n})\gibbs{\beta}(y\vert x^{(t)}_{-i}), &  \exists y,i \text{ such that }x_i\neq y \text{ and } x^{(t+1)}= (x^{(t)};x_i\gets y),\\
 \sum_{i=1}^n (\nicefrac{1}{n})\gibbs{\beta}( x_i\vert x_{-i}^{(t)}) & \text{if } x^{(t)}=x^{(t+1)},\\
 
 0 &\text{otherwise~.}

\end{cases}
$$

\subsection{Missing proofs: TPA  and relative trace variance properties}

\cyrus{sort proofs; in weird order}

\cyrus{restate here?}
    
    \begin{lemma}\label{lem:intervalbd} Let $z(\beta)\doteq\log\left(\Zi{\beta}\right)$, $d$ and $k$  the parameters of the \tpa\ method, and $\beta_i$ and $\beta_{i+1}$ two consecutive points generated by $\tpa(k,d)$, we have:
    \vspace{-0.2cm}
    \begin{enumerate}[wide, labelwidth=0pt, labelindent=0pt]\setlength{\itemsep}{0pt}\setlength{\parskip}{0pt}
        \item   For any $\varepsilon\geq 0$, we have $\mathbb{P}(z(\beta_j)-z(\beta_{j+1})\leq \varepsilon)\geq (1-\exp(-\varepsilon k/d))^d\simeq 1-d\exp(-\varepsilon k/d) $~,
        \item  For any $\varepsilon\geq 0$, $\mathbb{P}\left(\Delta_i\geq \nicefrac{\varepsilon}{\mathbb{E}[H(x)]}\right)\leq  d \exp(-\varepsilon k/d),$ where the expectation of $H(x)$ is taken with respect to distribution $x\sim \gibbs{\beta_{i+1}}$.
    \end{enumerate}
    \end{lemma}

\begin{proof}[Proof of \cref{lem:intervalbd}]
Note that $\tpa(k,d)$ of \cite{Kolmogorovgibbs} consists of $k$ parallel runs of the original \tpa\ of \cite{HuberTPA2010} and outputting a sub-sequence of elements which are $d$ apart.

\cyrus{Fix notation.}

Let  $(b_i)$ be the sequence generated by $k$ parallel copies of the original \tpa, thus $\Delta_j=\beta_{j+1}-\beta_{j}=b_{j+d}-b_{j} $.

We first show item 1 by bounding $\mathbb{P}\left(z(b_{j})-z(b_{j+d})\geq \varepsilon\right)$,
and using 
$$
     \mathbb{P}(b_{j+d}-b_{j}<\varepsilon)  \geq \prod_{i=1}^d\mathbb{P}(b_{j+i}-b_{j+i-1} < \varepsilon/d) ~.
$$ 

With the definition of the \cyrus{Poisson point process?}PPP, and using \cite{KolmoHarrisGibbs} \cyrus{which result?} we have $z{(b_{i})}-z{(b_{i+1})}$ follows the exponential distribution with mean $1/k$,  thus
$
    \mathbb{P}(z{(b_{i})}-z{(b_{i+1})}\geq\varepsilon/d) = \exp(-\varepsilon k/d )~.
$ Therefore, 

$$
     \mathbb{P}(z(b_{j+d})-z(b_{j})<\varepsilon)  \geq \prod_{i=1}^d\mathbb{P}\left(z(b_{j+i})-z(b_{j+i-1}\right) < \varepsilon/d) =
     (1-\exp(-\varepsilon k/d))^d~.
$$

To see item 2 of the Lemma let $z'(\beta)$ be the derivative of $z(\cdot)$\cyrus{Do we define $z(\cdot)$?} with respect to $\beta$, which is $z'(\beta) = \sum_{x\in \Omega} -H(x) \exp(-\beta H(x))/Z(\beta)$, thus $z'(\beta)\leq 0$. Using the Cauchy–Schwarz inequality we have $z''(\beta) = (\sum_{x\in \Omega} H^2(x) \exp({-\beta H(x))} \sum_{x\in \Omega} \exp({-\beta H(x))} -
(\sum_{x\in \Omega} -H(x) \exp({-\beta H(x)})^2) /Z^2(\beta) \geq 0$.
 Therefore,
$$
    z'(\beta_i) < \frac{z(\beta_{i+1})-z(\beta_i)}{\beta_{i+1}-\beta_i} < z'(\beta_{i+1}), 
$$
Thus,
$
    \beta_{i+1}-\beta_{i} < \frac{z(\beta_{i})-z(\beta_{i+1})}{-z'(\beta_{i})} ~$. Note that    $-z'(\beta_i)=\mathbb{E}[H(x)], x\sim \gibbs{\beta_{i}}$.
    Therefore,  we have:

\begin{align*}
  \mathbb{P}\left(\Delta_i\leq \frac{\epsilon}{ \mathbb{E}[H]}\right)
  &\geq \mathbb{P}\left(\frac{z(\beta_i)-z(\beta_{i+1})}{-z'(\beta_{i+1})} \leq \frac{\epsilon}{ \mathbb{E}[H]}\right) & \\
  &=\mathbb{P}\left({z(\beta_i)-z(\beta_{i+1})} \leq {\epsilon}\right) & \\
  &\geq   (1-\exp(-\epsilon k/d))^d \\
\end{align*}

Thus $\mathbb{P}\left(\Delta_i\geq \frac{\epsilon}{ \mathbb{E}[H]}\right)\geq 1- (1-\exp(-\epsilon k/d))^d \approx d \exp(-\epsilon k/d)) \enspace.$
\end{proof}

\begin{proof}[Proof of Lemma 2.1.]

Note that by Thm 3.1. of \cite{paulin2015} we have, $\mathbb{E}[(\bar{f}(\vec{X}_{1:\tau})-\mathbb{E}(f))^2]\leq \frac{2\TRel}{\tau}\mathbb{V}[f]$. Dividing both sides by $\left(\mathbb{E}(f)\right)^2$   we get the second part of the premise. The first part concludes from setting $\tau=\TRel$.

\end{proof}

\cyrus{Sort order of proofs}

\subsection{\proc}
{\bf \proc\ in summary}
To employ progressive sampling, we  start by a small sample size and calculate the \emph{empirical estimation} of the variance at each iteration.
We estimate an upper bound on the trace variance based on its empirical estimation, and using that we check a termination condition.

Our variance estimator is what  Cousins et al.\ introduced, and is based on running two independent chains. 
Each sample is obtained by 
taking a trace of length $T$ (given upper-bound on relaxation time) and taking the average over all observed values on that trace. 
Thus, \emph{half the square difference} of the averages on the two chains is an \emph{unbiased estimate} of the \emph{trace variance}.

Before showing the result, we state two key theorems from the literature, which describe how our tail bounds work.
\begin{theorem}[Hoeffding-Type Bounds for Mixing Processes, ({see Thm.~2.1 of \cite{fan2018hoeffding}})]\label{thm:hoeffding}
For any $\delta \in (0, 1)$, 
we have
\begin{equation}
\label{eq:Hoeffding}
\Prob\!\left( \lvert \hat{\mu} - \mu \rvert \geq \sqrt{\frac{2(1+\EigTwo)(\mathsmaller{\frac{\frange^{2}}{4}})\ln(\frac{2}{\delta})}{(1-\lambda)m}} \right) \leq \delta \enspace.
\end{equation}

This implies sample complexity

\[
m_{H}(\EigTwo, \frange, \varepsilon, \delta)
  = \frac{1+\lambda}{1-\lambda} \ln(\mathsmaller{\frac{2}{\delta}}) \frac{\frange^{2}}{2\varepsilon^2}
  \in \Theta\Bigl(\TRel\ln(\mathsmaller{\frac{1}{\delta}})\frac{\frange^{2}}{\varepsilon^{2}}\Bigr)
  \enspace.
\]
\end{theorem}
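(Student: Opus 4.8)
The plan is to prove the concentration bound \eqref{eq:Hoeffding} by the exponential (Chernoff) method adapted to reversible chains, and then to read off the sample complexity $m_H$ by inverting that tail. Throughout, let $X_1, \dots, X_m$ be a stationary trace of $\M$, write $\sigma^2 \doteq \Var_\pi[f]$, and center the sum as $S_m \doteq \sum_{i=1}^m\bigl(f(X_i) - \mu\bigr)$, so that $\hat\mu - \mu = S_m/m$.

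First I would reduce the two-sided event to a one-sided one: applying the same argument to $f$ and to $-f$ shows it suffices to bound $\Prob(S_m/m \geq \varepsilon)$ and then double. For the one-sided tail, Markov's inequality applied to $e^{\theta S_m}$ with $\theta > 0$ gives $\Prob(S_m \geq m\varepsilon) \leq e^{-\theta m \varepsilon}\Expect[e^{\theta S_m}]$, so everything hinges on an upper bound for the moment generating function $\Expect[e^{\theta S_m}]$.

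The heart of the argument is a spectral bound on this MGF. Exploiting reversibility, I would express $\Expect[e^{\theta S_m}]$ through powers of the perturbed transition operator, writing it in terms of the symmetrized operator $E_\theta^{1/2} P E_\theta^{1/2}$, where $E_\theta$ is the diagonal operator with entries $e^{\theta(f(x) - \mu)}$ and $P$ acts self-adjointly on $L^2(\pi)$ with spectrum in $[-1,1]$ and second-largest value $\EigTwo$. Bounding the top eigenvalue of this operator, and sharpening the estimate by the Le\'on--Perron reduction --- which shows the extremal $f$ at fixed range and variance is two-valued, collapsing the problem onto a two-state chain of spectral gap $1 - \EigTwo$ --- yields a per-step cumulant bound of the form $\ln \Expect[e^{\theta S_m}] \leq \tfrac12 m\,\theta^2 \sigma^2\,\tfrac{1+\EigTwo}{1-\EigTwo}$ to leading order in $\theta$. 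This is where the signature factor $\tfrac{1+\EigTwo}{1-\EigTwo}$ (the asymptotic-variance inflation of a reversible chain, of order $\TRel$) enters, and it is the main obstacle: obtaining the sharp constant rather than a loose spectral-gap estimate requires the careful matrix-perturbation / two-state argument of \cite{fan2018hoeffding} (cf.\ Lezaud). I would then apply Popoviciu's inequality, $\sigma^2 = \Var_\pi[f] \leq \frange^2/4$, to eliminate the unknown variance in favor of the range.

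Finally I would assemble the pieces. Substituting the MGF bound gives $\Prob(S_m/m \geq \varepsilon) \leq \exp\bigl(-\theta m \varepsilon + \tfrac12 m \theta^2 c\bigr)$ with $c \doteq \tfrac{1+\EigTwo}{1-\EigTwo}\cdot\tfrac{\frange^2}{4}$; optimizing over the free parameter at $\theta^\ast = \varepsilon/c$ produces the sub-Gaussian tail $\exp\bigl(-m\varepsilon^2/(2c)\bigr)$. Doubling for the two-sided event and setting the result equal to $\delta$ gives $\varepsilon = \sqrt{2(1+\EigTwo)(\frange^2/4)\ln(2/\delta)\big/\bigl((1-\EigTwo)m\bigr)}$, which is exactly \eqref{eq:Hoeffding}. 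Solving the same equation for $m$ instead yields $m_H(\EigTwo, \frange, \varepsilon, \delta) = \tfrac{1+\EigTwo}{1-\EigTwo}\ln(2/\delta)\tfrac{\frange^2}{2\varepsilon^2}$, and since $\tfrac{1+\EigTwo}{1-\EigTwo} = \Theta\bigl((1-\EigTwo)^{-1}\bigr) = \Theta(\TRel)$, this is $\Theta\bigl(\TRel \ln(1/\delta)\frange^2/\varepsilon^2\bigr)$, completing the claim.
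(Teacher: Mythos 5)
First, a point of comparison: the paper does not prove this theorem at all. It is imported verbatim from the literature (Thm.~2.1 of Fan--Jiang--Sun, \cite{fan2018hoeffding}) and used as a black-box tail bound inside the proof of Theorem 2.2, so there is no in-paper proof to measure your attempt against. Your reconstruction does follow the same architecture as the cited source --- Chernoff's exponential method, a spectral bound on the moment generating function via the perturbed operator $E_\theta^{1/2} P E_\theta^{1/2}$, and the Le\'on--Perron reduction to a two-state chain --- and your final algebra (optimizing $\theta$, doubling for the two-sided event, and inverting to get $m_H = \frac{1+\lambda}{1-\lambda}\ln(\frac{2}{\delta})\frac{R^2}{2\varepsilon^2} = \Theta(\TRel \ln(\frac{1}{\delta})\frac{R^2}{\varepsilon^2})$) is correct.

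However, there is a genuine gap in your key step. You assert the cumulant bound $\ln \Expect[e^{\theta S_m}] \leq \tfrac12 m\,\theta^2 \sigma^2\,\tfrac{1+\lambda}{1-\lambda}$ only ``to leading order in $\theta$,'' and then substitute Popoviciu's inequality $\sigma^2 \leq R^2/4$. A leading-order (small-$\theta$) estimate cannot be fed into the Chernoff optimization: the optimizer $\theta^\ast = \varepsilon/c$ is not infinitesimal, and the uncontrolled higher-order terms are precisely what the inequality must dominate. Worse, the exact version of your variance-based bound is false in general: if $\ln \Expect[e^{\theta S_m}] \leq \tfrac12 m\theta^2 \sigma^2 \tfrac{1+\lambda}{1-\lambda}$ held for all $\theta$, it would yield a Hoeffding-type inequality with $\sigma^2$ in place of $R^2/4$, which fails already for i.i.d.\ samples (e.g., a skewed two-point distribution, where the log-MGF grows exponentially in $\theta$ rather than quadratically); this is exactly why Bernstein-type bounds carry a linear correction term in $\varepsilon$. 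The rigorous route --- and the one taken in \cite{fan2018hoeffding} --- never passes through the true variance: after the Le\'on--Perron reduction one bounds the largest eigenvalue of the perturbed two-state operator directly in terms of the \emph{range}, in the spirit of Hoeffding's lemma, obtaining the sub-Gaussian bound with variance proxy $\frac{R^2}{4}\cdot\frac{1+\lambda}{1-\lambda}$ valid for \emph{all} $\theta$. Replacing your ``variance first, Popoviciu second'' step with that range-based eigenvalue estimate closes the gap and recovers the stated bound.
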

\begin{theorem}[Bernstein-Type Bound for Mixing Process {\cite[Thm.~1.2]{jiang2018bernstein}}]
For any $\delta \in (0, 1)$, we have
\label{thm:bernstein}

\begin{equation}\label{eq:Bernstein}
 \Prob\left(\vert \hat{\mu}-\mu\vert\geq \frac{10\frange\ln(\frac{2}{\delta})}{(1-\lambda)m} + \sqrt{\frac{2(1+\EigTwo)\SVar\ln(\frac{2}{\delta})}{(1-\lambda)m}} \right) \leq \delta \enspace.
\end{equation}
\noindent
This implies sample complexity
\vspace{-0.2cm}
\[
m_{B}(\EigTwo, \frange, v, \varepsilon, \delta)
  = \frac{2}{1-\lambda}\ln(\mathsmaller{\frac{2}{\delta}})\Bigl(\frac{5\frange}{\varepsilon} + \frac{(1+\EigTwo)\SVar}{\varepsilon^2}\Bigr)
  \in \Theta\Bigl(\TRel\ln(\mathsmaller{\frac{1}{\delta}})\Bigl(\frac{\frange}{\varepsilon} + \frac{\SVar}{\varepsilon^{2}}\Bigr)\Bigr) \enspace.
\]
\end{theorem}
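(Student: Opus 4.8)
The plan is to establish the concentration inequality \eqref{eq:Bernstein} by the Cram\'er--Chernoff method, with the Markov-chain dependence handled through spectral perturbation of the transition operator; the accompanying sample-complexity expression $m_{B}$ then follows by inverting the tail bound. Throughout I take $\M$ to be reversible, so that its transition operator $P$ is self-adjoint on $L^{2}(\pi)$ with real spectrum $1=\EigTwo_{1}>\EigTwo_{2}\geq\dots\geq\EigTwo_{\abs S}\geq -1$, and $\EigTwo=\max\{\EigTwo_{2},\lvert\EigTwo_{\abs S}\rvert\}$ is the second-largest absolute eigenvalue (the non-reversible case reduces to this one via the multiplicative reversibilization $P^{*}P$). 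Since $\Prob(\lvert\hat\mu-\mu\rvert\geq t)\leq \Prob(\hat\mu-\mu\geq t)+\Prob(\mu-\hat\mu\geq t)$, it suffices to prove the one-sided bound at level $\nicefrac{\delta}{2}$ for the centered function $g\doteq f-\mu$ (and separately for $-g$); the union bound over the two sides produces the $\ln\nicefrac{2}{\delta}$ appearing in the statement.

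For the one-sided bound I would control the exponential moment of $S_{m}\doteq\sum_{i=1}^{m}g(X_{i})$. The standard device is the \emph{tilted operator} $P_{\theta}$ with kernel $P_{\theta}(x,y)\doteq P(x,y)\exp(\theta g(y))$; a telescoping argument shows that, started from $\pi$, $\Expect_{\pi}[\exp(\theta S_{m})]\leq \rho(\theta)^{m}$, where $\rho(\theta)$ is the Perron eigenvalue of the symmetrized version of $P_{\theta}$. The heart of the argument is a second-order estimate of $\log\rho(\theta)$: by Kato-style perturbation theory (cf.\ \cite{LezaudChernoff,paulin2015}) one obtains, for $\theta$ in a neighbourhood of $0$,
\[
\log\rho(\theta)\ \leq\ \frac{\tfrac12\sigma^{2}\theta^{2}}{1-c\,\frange\,\theta}\enspace,
\]
where $c$ is an absolute constant and $\sigma^{2}$ is an asymptotic-variance proxy. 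Expanding $g$ in the eigenbasis as $g=\sum_{k\geq 2}\hat g_{k}\phi_{k}$ and using that $t\mapsto\nicefrac{(1+t)}{(1-t)}$ is increasing on $(-1,1)$ gives $\sigma^{2}=\sum_{k\geq 2}\hat g_{k}^{2}\tfrac{1+\EigTwo_{k}}{1-\EigTwo_{k}}\leq\tfrac{1+\EigTwo}{1-\EigTwo}\sum_{k\geq 2}\hat g_{k}^{2}=\tfrac{1+\EigTwo}{1-\EigTwo}\SVar$, since $\EigTwo_{k}\leq\lvert\EigTwo_{k}\rvert\leq\EigTwo$ for $k\geq 2$ and $\sum_{k\geq2}\hat g_{k}^{2}=\Var_{\pi}[f]=\SVar$. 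This is exactly the variance factor in \eqref{eq:Bernstein}.

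Combining the exponential-moment bound with Markov's inequality yields the Chernoff bound $\Prob(S_{m}\geq m t)\leq\exp\bigl(-m t\theta+m\log\rho(\theta)\bigr)$; optimizing over admissible $\theta$ (the standard Bernstein optimization for a rate of the form $\tfrac{\sigma^{2}\theta^{2}}{1-c\frange\theta}$) produces a tail of the shape $\exp\bigl(-\tfrac{m t^{2}}{2\sigma^{2}+c'\frange t}\bigr)$. Setting this equal to $\nicefrac{\delta}{2}$, solving the resulting quadratic in $t$, and splitting the two regimes via $\sqrt{x+y}\leq\sqrt x+\sqrt y$ gives the additive error $\tfrac{10\frange\ln(\nicefrac2\delta)}{(1-\EigTwo)m}+\sqrt{\tfrac{2(1+\EigTwo)\SVar\ln(\nicefrac2\delta)}{(1-\EigTwo)m}}$ of \eqref{eq:Bernstein}. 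The sample-complexity claim follows by reading the inequality in reverse: requiring each of the two additive contributions to be at most $\varepsilon$ and solving for $m$ shows $m\geq m_{B}(\EigTwo,\frange,\SVar,\varepsilon,\delta)$ suffices, and the asymptotic form $\LandauTheta\bigl(\TRel\ln(\nicefrac1\delta)(\nicefrac{\frange}{\varepsilon}+\nicefrac{\SVar}{\varepsilon^{2}})\bigr)$ is immediate from $\TRel=(1-\EigTwo)^{-1}$.

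The main obstacle is the perturbative estimate of $\log\rho(\theta)$ with the correct constants: obtaining the sharp variance proxy $\tfrac{1+\EigTwo}{1-\EigTwo}\SVar$ (rather than a looser multiple of $\SVar$) together with the explicit linear-in-$\frange$ correction requires careful operator-norm bookkeeping, and matching the precise constants $10$ and $2$ of \eqref{eq:Bernstein} is delicate. A secondary difficulty is the non-stationary start: if $\hat\mu$ is computed from a chain launched from an arbitrary state rather than from $\pi$, the exponential-moment bound acquires a warm-start factor, which must either be absorbed by a burn-in (as \proc\ does through its $T_{\mathrm{unif}}$ warm start) or controlled via $\Vert\mathrm{d}\mu_{0}/\mathrm{d}\pi\Vert$.
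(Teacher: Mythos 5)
You should first note that the paper contains no proof of this statement to compare against: it is imported verbatim as Thm.~1.2 of \cite{jiang2018bernstein}, and is used in the proof of \cref{thm:efficiency} purely as a black-box tail bound. With that said, your sketch does reconstruct the proof route taken in that cited literature (Lezaud, Paulin, Jiang--Sun--Fan): tilting the kernel by $e^{\theta g}$, bounding $\Expect_{\pi}[e^{\theta S_{m}}]$ by the $m$-th power of the Perron eigenvalue of the symmetrized tilted operator, a Kato-type perturbation estimate of $\log\rho(\theta)$, and the standard Bernstein optimization; your eigenbasis computation giving the variance proxy $\frac{1+\EigTwo}{1-\EigTwo}\SVar$ is also the correct mechanism in the reversible case. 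So the approach is the right one, and the reduction of the two-sided bound to one-sided bounds at level $\nicefrac{\delta}{2}$ correctly accounts for the $\ln\frac{2}{\delta}$.

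As a standalone proof, however, there is a genuine gap exactly at the load-bearing step: the inequality $\log\rho(\theta)\leq\frac{\sigma^{2}\theta^{2}/2}{1-c\,\frange\,\theta}$ is asserted with an unspecified absolute constant $c$, and you concede that recovering the stated constants is ``delicate.'' Since the entire content of this theorem beyond older Chernoff-type bounds \cite{LezaudChernoff} lies in the explicit constants ($10$ on the range term, $\sqrt{2(1+\EigTwo)/(1-\EigTwo)}$ on the variance term) and in the range term decaying at rate $\nicefrac{1}{m}$ rather than $\nicefrac{1}{\sqrt{m}}$, deferring that estimate means the theorem as stated is not established by your argument. Two secondary points: (i) the claim that the non-reversible case ``reduces'' via $P^{*}P$ is not automatic when $\EigTwo$ is the second-largest absolute eigenvalue --- \cite{jiang2018bernstein} treats general chains through the spectral gap of the multiplicative reversibilization, and constants change under that reduction, so this needs an explicit argument rather than a parenthetical; (ii) your inversion sets each of the two error contributions to $\varepsilon$ separately, which certifies total error up to a constant factor larger than $\varepsilon$ --- harmless for the $\Theta\bigl(\TRel\ln(\nicefrac{1}{\delta})(\nicefrac{\frange}{\varepsilon}+\nicefrac{\SVar}{\varepsilon^{2}})\bigr)$ claim (and the paper's own display is equally loose), but worth flagging if $m_{B}$ is to be read as literally sufficient. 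Your final remark on non-stationary starts is apt and is indeed how the paper uses the bound, via the $T_{\rm unif}$ warm start in \cref{alg:meanestsubroutine}.
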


We now show the main result.
\begin{proof}[Proof of Theorem 2.2]

Suppose confidence interval $[a, b]$.
The interval endpoints, multiplicative error $\varepsilon_{\times}$, and additive error $\varepsilon_{+}$ are related as
$
2\varepsilon_+ = a\frac{1+\varepsilon_{\times}}{1-\varepsilon_{\times}}-a=a\frac{2\varepsilon_{\times}}{1-\varepsilon_{\times}}$, 
depicted graphically below.

\begin{tikzpicture}[xscale=4.5,yscale=0.5]

\node[thick,circle,draw=black,fill=gray,inner sep=1.25pt] (zero) at (0, 0) {};
\node[thick,circle,draw=black,fill=gray,inner sep=1.25pt] (a) at (1, 0) {};
\node[thick,circle,draw=black,fill=gray,inner sep=1.25pt] (mu) at (2, 0) {};
\node[thick,circle,draw=black,fill=gray,inner sep=1.25pt] (b) at (3, 0) {};

\node[below of={zero},yshift=0.5cm] {$0$};
\node[below of={a},yshift=0.5cm] {$a$};
\node[below of={mu},yshift=0.5cm] {$\mu$};
\node[below of={b},yshift=0.5cm] {$b$};

\draw (zero) -- (a) -- (b);

\node[thick,circle,draw=blue,fill=gray,inner sep=1.25pt] (ll) at (1.0, 0.1) {};
\node[thick,circle,draw=purple,dotted,fill=gray,inner sep=1.25pt] (cl) at (1.1, 0.1) {};
\node[thick,circle,draw=red,fill=gray,inner sep=1.25pt] (rl) at (1.21, 0.1) {};

\draw (ll) -- (cl) -- (rl);

\draw [decorate,decoration={brace,amplitude=10pt,raise=4pt},yshift=0pt] (ll) -- (rl) node [above,black,midway,yshift=1.0cm] {\begin{tabular}{c} \scriptsize{Worst Case: }
$\varepsilon_{+} = \frac{a\varepsilon_{\times}}{1-\varepsilon_{\times}}$
\\ \end{tabular}};

\node[thick,circle,draw=blue,fill=gray,inner sep=1.25pt] (lc) at (1.8, 0.1) {};
\node[thick,circle,draw=purple,dotted,fill=gray,inner sep=1.25pt] (cc) at (2, 0.1) {};
\node[thick,circle,draw=red,fill=gray,inner sep=1.25pt] (rc) at (2.2, 0.1) {};

\draw (lc) -- (cc) -- (rc);

\draw [decorate,decoration={brace,amplitude=10pt,raise=4pt},yshift=0pt] (lc) -- (rc) node [above,black,midway,yshift=0.5 cm] {\begin{tabular}{c} \scriptsize{Arbitrary Case:} 
$\varepsilon_+ = \mu \varepsilon_{\times}$ \\
\end{tabular}};

\node[thick,circle,draw=blue,fill=gray,inner sep=1.25pt] (lh) at (2.45454545, 0.1) {};
\node[thick,circle,draw=purple,dotted,fill=gray,inner sep=1.25pt] (ch) at (2.72727272, 0.1) {};
\node[thick,circle,draw=red,fill=gray,inner sep=1.25pt] (rh) at (3.0, 0.1) {};

\draw (lh) -- (ch) -- (rh);

\draw [decorate,decoration={brace,amplitude=10pt,raise=4pt},yshift=0pt] (lh) -- (rh) node [above,black,midway,yshift=1.0cm] {\begin{tabular}{c} \scriptsize{Best Case:} 
$\varepsilon_+  = \frac{b\varepsilon_{\times}}{1+\varepsilon_{\times}}$ \\
\end{tabular}};

\end{tikzpicture}

We derive a geometric progressive sampling schedule such that the algorithm draws sample sizes, ranging between optimistic and pessimistic (over unknown variance and mean) upper and lower bounds on the sufficient sample size.

Using the Markov chain Bennett inequality \cite{jiang2018bernstein}, the best-case  complexity, assuming maximal expectation, and minimal variance, is
\begin{align*}
m^{\downarrow} &\geq m_{B}(\EigTwoBound, \frange, 0, \varepsilon_{+}, \frac{2\delta}{3I})\\
&\geq \frac{(1+\EigTwoBound)\frange \ln \frac{3I}{\delta}}{(1-\EigTwoBound)\varepsilon_{+}} = \frac{(1+\EigTwoBound)\frange \ln \frac{3I}{\delta} (1 + \varepsilon_{\times})}{b(1-\EigTwoBound)\varepsilon_{\times}} \enspace.
\end{align*}

\vspace{-0.2 cm}

The worst-case  complexity, then assuming minimal expectation, and maximal variance, is 
\begin{align*}
m^{\uparrow} &\geq m_{H}(\EigTwoBound, \frange, \varepsilon_{+}, \frac{2\delta}{3I}) \\
&\geq \frac{(1+\EigTwoBound)\frange^{2} \ln \frac{3I}{\delta}}{2(1-\EigTwoBound)\varepsilon_{+}^{2}} = \frac{(1+\EigTwoBound)\frange^{2} \ln \frac{3I}{\delta}(1 - \varepsilon_{\times})^{2} }{2(1-\EigTwoBound)a^{2}  \varepsilon_{\times}^{2}} \enspace,
\end{align*}
\cyrus{With Huber bound on variance?}

via the Markov chain Hoeffding's inequality \cite{leonhoeffdingbound2004}.

Consequently, a doubling schedule requires
$
I = \left \lfloor \log_{2}\left( \frac{m^{\uparrow}}{m^{\downarrow}} \right) \right \rfloor = \left \lfloor \log_{2}\left( \frac{b\frange}{2 a^{2}} \cdot \frac{ (1-\varepsilon_{\times})^{2}}{ (1 + \varepsilon_{\times})\varepsilon_{\times}} \right) \right \rfloor
$ steps.

All tail bounds on variances and means are hold simultanously with probability at least $1 - \delta$ (by union bound), and the doubling schedule never overshoots the sufficient sample size by more than a constant factor, which yields the stated guarantees.

The proof consists of two parts, in both we make derive our new bounds by writing an  $\varepsilon_\times$-multiplicative approximation in terms of an $\varepsilon_{+}$-additive approximation.

In the worst-case, we \emph{underestimate} the true mean $\mu$ by a factor $(1 - \varepsilon_{\times})$, and thus require a radius $\varepsilon_{+} = \varepsilon_{\times}(1 - \varepsilon_{\times})\mu$ additive confidence interval.

We first show the \emph{correctness guarantee}.

Observe that the sampling schedule is selected such that the final iteration $I$ of the algorithm will draw a sufficiently large sample (size $m^{\uparrow}$) such that the Hoeffding inequality will yield such a confidence interval, even for worst-case (minimal) $\mu$.
Now observe that over the course of the algorithm, in each iteration, 3 tail bounds are applied; one to upper-bound the variance, and then two to upper and lower bound the mean in terms of the variance bound) as in \cite{Dynamite}.
By union bound, all $3I$ tail-bounds hold simultaneously with probability at least $1-\delta$, thus when the algorithm terminates, it produces a correct answer with at least said probability.

\bigskip

We now show the \emph{efficiency guarantee}.
Suppose we get $\hat{\mu}$ from \proc, by guarantee of correctness of the algorithm, we have a lower bound on $\hat{\mu}$, $\hat{\mu} \geq \mu (1-\varepsilon_{\times})$ with probability at least $1-\delta$. 

Furthermore, we have 
$
\varepsilon_{+} = \mu \varepsilon_{\times} 
$ and 
$
\ITRelVar = (\RelITRelVar{}{\TRel} - 1)\times \hat{\mu}^2 \geq 
(\RelITRelVar{}{\TRel} - 1) \mu ^2 (1-\varepsilon_{\times})^2
$.
For this ${\varepsilon_{+}}$, we have via the Bernstein inequality that
\[
m^{*} \in 
\mathcal{O}\left( \log\left(\frac{\log(\frange/(\mu\varepsilon_{\times}))}{\delta}\right)
\left( \frac{\frange/\mu}{(1 - \EigTwoBound)\varepsilon_{\times}} + \frac{\TRel(\RelITRelVar{}{\TRel} - 1) }{\varepsilon_{\times}^{2}}\right)\right)
\]
would be a sufficient sample size if (1) the algorithm were to draw a sample of this size, and (2) we were to use the \emph{true trace variance} instead of the \emph{estimated upper-bound on trace variance}.

Fortunately, correcting for (1) adds a constant factor to the sample complexity, as the first sample size $\alpha$ is selected to be twice the minimal sufficient sample size $m^{\downarrow}$ (i.e., the sample size such that no smaller sample size would be sufficient), and at each iteration the sample size selected is double the previous (line~\ref{alg:ss}).
In other words, this geometric grid will never overshoot any sample size by more than a factor 2.

Resolving (2) is a bit more subtle, but 
we now show that there is no asymptotic change in replacing the variance with the estimated variance upper bound (w.h.p.).
First, note that the Bernstein bound is \emph{bidirectional}, so it can just as well be used to upper-bound empirical variance with true variance as to upper-bound true variance with empirical variance.
We bound true variance in terms of empirical variance on line~\ref{alg:2chain-var}, and note that here we have
\[
v \leq u \in \hat{v} + \mathcal{O}\left(\frac{\frange^{2} \ln \frac{I}{\delta}}{m} + \sqrt{\frac{\frange^{2}\hat{v} \ln \frac{I}{\delta}}{m}}\right) \enspace.
\]
Fortunately, the latter terms are negligible, as in line~\ref{alg:bern}, we bound
\begin{align*}
\varepsilon_{+} &\in 
 \mathcal{O}\left(\frac{\frange \ln \frac{I}{\delta}}{m} + \sqrt{\frac{u \ln \frac{I}{\delta}}{m}}\right) \\
 &= \mathcal{O}\left(\frac{\frange \ln \frac{I}{\delta}}{m} + \sqrt{\frac{ \biggl(\hat{v} + \mathcal{O}\biggl(\mathsmaller{\frac{\frange \ln \frac{I}{\delta}}{m}} + \mathsmaller{\sqrt{\frac{\hat{v} \ln \frac{I}{\delta}}{m}}}\biggr)\biggr) \ln \frac{I}{\delta}}{m}}\right)  & \\
 &= \mathcal{O}\left(\frac{\frange \ln \frac{I}{\delta}}{m} + \sqrt{\frac{ \biggl(v + \mathcal{O}\biggl(\mathsmaller{\frac{\frange \ln \frac{I}{\delta}}{m}} + \mathsmaller{\sqrt{\frac{v \ln \frac{I}{\delta}}{m}}}\biggr) + \mathcal{O}\biggl(\mathsmaller{\frac{\frange \ln \frac{I}{\delta}}{m}} + \mathsmaller{\sqrt{\frac{\hat{v} \ln \frac{I}{\delta}}{m}}}\biggr)\biggr) \ln \frac{I}{\delta}}{m}}\right) & \text{(w.h.p.)} \\
 &= \mathcal{O}\left(\frac{\frange \ln \frac{I}{\delta}}{m} + \sqrt{\frac{{v} \ln \frac{I}{\delta}}{m}}\right) \enspace. & \text{(w.h.p.)} \\
\end{align*}

Putting these together, we thus have that, w.h.p., sample consumption is bounded as
\[
\hat{m} \in 2\mathcal{O}(m^{*}) = \mathcal{O}\left( \log\left(\frac{\log(\frange/(\mu\varepsilon_{\times}))}{\delta}\right)
\left( \frac{\frange/\mu}{(1 - \EigTwoBound)\varepsilon_{\times}} + \frac{\TRel(\RelITRelVar{}{\TRel} - 1) }{\varepsilon_{\times}^{2}}\right)\right) \enspace.
\]

To conclude, we need only relate $T(\RelITRelVar{}{T} - 1)$ and $\TRel(\RelITRelVar{}{\TRel} - 1)$.
Letting $T$ as in line~\ref{alg:tbound}, note that since $T \geq \TRel$, it holds that $T(\RelITRelVar{}{T} - 1) \geq \TRel(\RelITRelVar{}{\TRel} - 1)$, by the trace variance inequalities, which yields the result.
\end{proof}

\subsection{Missing proofs from analysis of \superalgo}
\begin{proof}[Proof of thm 2.4]
Follows immediately from thm. 2.2 and plugging in the values for paired product estimators and the product chain.  
\end{proof}

\begin{proof}[Full Proof of Lemma 2.8]
Let $\bar{\beta}_{i,i+1} \doteq \frac{\beta_i+\beta_{i+1}}{2}$, we have
${\mu_i}=  \frac{Z(\bar{\beta}_{i,i+1})}{Z(\beta_{i})} $ and 
${\nu_i}= \frac{Z(\bar{\beta}_{i,i+1})}{Z(\beta_{i+1})}$. Thus we have $ \nu = \frac{ \prod_{i=1}^{\ell-1} Z(\bar{\beta}_{i,i+1})}{ \prod_{i=1}^{\ell-1} Z(\beta_{i+1})} > 1$,  
$\mu = \frac{ \prod_{i=1}^{\ell-1} Z(\bar{\beta}_{i,i+1})}{ \prod_{i=1}^{\ell-1} Z(\beta_{i})} < 1$.

Note that $\nu = \mu \frac{Z(\beta_{0})}{Z(\beta_{\max})}$, thus we proceed by bounding $\mu$.

\begin{align*}
\log \prod_{i=1}^{\ell-1} Z(\bar{\beta}_{i,i+1}) &= \sum_{i=1}^{\ell-1} z(\bar{\beta}_{i,i+1})& \textsc{Taking $\log$}\\
& \geq  \sum_{i=1}^{\ell-1} z(\beta_{i}) - \frac{\Delta_{i}}{2} \Expect_{x \sim \pi_{\beta_{i}}}[H(x)]& \textsc{Taylor expansion \& that $\frac{\partial^{2}}{\partial \beta^{2}} z(\beta) > 0$}\\
\end{align*}
Thus, by taking exponents we get: 
\begin{align*}
\prod_{i=1}^{\ell-1} Z(\bar{\beta}_{i,i+1}) &\geq \exp \left( \sum_{i=1}^{\ell-1} z(\beta_{i}) - \frac{\Delta_{i}}{2} \Expect_{x \sim \pi_{\bar{\beta}_{i,i+1}}}[H(x)] \right)\\
&\geq \left( \prod_{i=1}^{\ell-1} Z(\beta_{i}) \right) \exp \left( -\sum_{i=1}^{\ell-1} \frac{\Delta_{i}}{2} \Expect_{x \sim \pi_{\beta_{i}}}[H(x)] \right)\\
\end{align*}

Therefore, $\mu= \frac{ \prod_{i=1}^{\ell-1} Z(\bar{\beta}_{i,i+1})}{ \prod_{i=1}^{\ell-1} Z(\beta_{i})} \geq \exp \left( -\sum_{i=1}^{\ell-1} \frac{\Delta_{i}}{2} \Expect_{x \sim \pi_{\beta_{i}}}[H(x)] \right)$. Using this form, we now employ the fundamental theorem of calculus to prove the premise: 

Let $\Delta_{\max}\dot{=}\max_{i}\Delta_{i}$.
\begin{align*}
\mu &\geq \exp \left( -\sum_{i=1}^{\ell-1} \frac{\Delta_{i}}{2} \Expect_{x \sim \pi_{\beta_{i}}}[H(x)] \right) & \\
 &= \exp \left( -\sum_{i=1}^{\ell-1} \frac{\Delta_{i}}{2} \Expect_{x \sim \pi_{\beta_{i}}}[H(x)] \right) & \\
 &\geq \exp \left(\frac{1}{2} \int_{\beta_{\min}-\Delta_{\max}}^{\beta_{\max}-\Delta_{\max}} -\Expect_{x \sim \pi_{\beta}}[H(x)] \, \mathrm{d} \beta \right) & \textsc{Increasing Integrand} \\
 &= \exp\left(\frac{1}{2} \bigl(z(\beta_{\max}-\Delta_{\max}) - z(\beta_{\min}-\Delta_{\max})\bigr)\right) & \textsc{FTOC and that $z'(\beta)=\mathbb{E}_{x\sim \gibbs{\beta}H}$} \\
 &\geq  \exp\left(\frac{1}{2} \bigl(z(\beta_{\max}) - z(\beta_{\min}-\Delta_{\max})\bigr)\right) & \textsc{$z$ is Decreasing} \\
 &=  \exp\left(\frac{1}{2} \bigl(z(\beta_{\max}) - z(\beta_{\min})+z(\beta_{\min}) -z(\beta_{\min}{-}\Delta_{\max})\bigr)\right) & \\
 &\geq Q^{-\frac{1}{2}} \sqrt{\frac{Z(\beta_{\min})}{Z(\beta_{\min}-\Delta_{\max})}}~.  \\
\end{align*}

\cyrus{Note that $\frac{Z(\beta_{\min})}{Z(\beta_{\min}-\Delta_{\max})} \geq \exp(-\Delta_{\max} H_{\max})$ for $\beta_{\min} = 0$.  General case, is it small?}

From the above we also conclude that $\nu\geq Q^{1/2}\sqrt{\frac{Z(\beta_{\min})}{Z(\beta_{\min}-\Delta_{\max})}}$.
Note that  ${\rm Range}(f)=\exp( -\frac{\Delta}{2} H_{\min})-\exp( -\frac{\Delta}{2} H_{\max})\leq \sqrt{\exp(-\Delta H_{\min})}$ and  ${\rm Range}(g)=\exp( \frac{\Delta}{2} H_{\max})-\exp( \frac{\Delta}{2} H_{\min})\leq \sqrt{\exp(\Delta H_{\max})}$~. Thus the lemma is concluded. 

\end{proof}

\begin{proof}[Proof of 
Corollary 2.6]
The corollary follows from thm 2.2 plugging in $R$ from lemma 2.5 and setting $\tau_{\rm prx}=\ell\max_{i=1}^\ell \tau_i$ (see, e.g.,  \cite{levin2017markov}).
\end{proof}

\subsection{Analysis of \paralgo}

Let 
 $(\beta_0,\beta_1,\dots \beta_l)$ be a cooling schedule generated by \tpa$(k,d)$, where $k$ and $d$ are chosen as in \cite{Kolmogorovgibbs}. For each $i$ let $f_{\beta_i,\beta_{i+1}}$, $g_{\beta_{i-1},\beta_{i}}$ be the paired estimators corresponding to this schedule, and $\mu_i=\mathbb{E}[f_{\beta_i,\beta_{i+1}}]$, $\nu_i=\mathbb{E}[g_{\beta_{i-1},\beta_{i}}]$ . 
\paralgo\ estimates $Q$ by running \proc on each $\gibbschain{H}{\beta_i}$, to estimate $\mu_i$ and $\nu_i$s each with precision  $\varepsilon' = \nicefrac{(\sqrt[l]{1 + \varepsilon} - 1)}{(\sqrt[l]{1 + \varepsilon} + 1)}$. Note that by this setting, $Q$ will be approximated within multiplicative factor of $ \left( \nicefrac{1 + \varepsilon'}{1 - \varepsilon'} \right)^{\ell}$. Assume $\tau_i$ is the true relaxation time of $\gibbschain{H}{\beta_i}$ and
suppose $\EigTwoBound_i$ is a known upper bound on the second eigenvalue of $\gibbschain{H}{\beta_i}$, thus $(\EigTwoBound_i-1)^{-1}\log (2)\geq \tau_i$. The following hold and thm 2.7 is immediately concluded from it:

\begin{lemma}\label{lem:rangeparalgo}
Let  $H_{\max} \doteq \max_{x\in \Omega}H(x)$. we have:
\begin{enumerate}[wide, labelwidth=0pt, labelindent=8pt]\setlength{\itemsep}{2pt}\setlength{\parskip}{0pt}
\vspace{-0.2 cm}
    \item 
 for all $1\leq i\leq \ell$, ${\rm Range}(f_{\beta_i,\beta_{i+1}})/\mu_i\leq \ell^{1/\log (n)}$, 
 \item for all $1\leq i\leq \ell$, ${\rm Range}(g_{\beta_{i-1},\beta_i})/\nu_i\leq \ell^{\alpha_0(i)/\log n}$, where $\alpha_0(i)=(\nicefrac{ H_{\max}}{2\mathbb{E}[H(x)]})-1 , ~ x\sim{\gibbs{\beta_i}}$.
 \end{enumerate}
 \end{lemma}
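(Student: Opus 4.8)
The plan is to reduce everything to the log-partition function $z(\beta) \doteq \ln \Zi{\beta}$ and exploit its analytic properties. First I would record that, writing $m(\beta) \doteq \Expect_{x\sim\gibbs{\beta}}[H(x)]$, we have $z'(\beta) = -m(\beta) \le 0$ (so $z$ is nonincreasing, since $H \ge 0$) and $z''(\beta) = \Var_{x\sim\gibbs{\beta}}[H(x)] \ge 0$ (so $z$ is convex and $m$ is positive and nonincreasing). Next I would compute the ranges explicitly: since $f_{\beta_i,\beta_{i+1}}(x) = \exp(-\tfrac{\Delta_i}{2}H(x))$ is decreasing in $H(x)$ and $g_{\beta_{i-1},\beta_i}(x) = \exp(\tfrac{\Delta_{i-1}}{2}H(x))$ is increasing, their ranges are ${\rm Range}(f_{\beta_i,\beta_{i+1}}) = \exp(-\tfrac{\Delta_i}{2}H_{\min}) - \exp(-\tfrac{\Delta_i}{2}H_{\max}) \le 1$ and ${\rm Range}(g_{\beta_{i-1},\beta_i}) = \exp(\tfrac{\Delta_{i-1}}{2}H_{\max}) - \exp(\tfrac{\Delta_{i-1}}{2}H_{\min}) \le \exp(\tfrac{\Delta_{i-1}}{2}H_{\max})$. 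Finally, the PPE identities of \cref{defin:pairproduct} give $\mu_i = \Zi{\bar\beta_{i,i+1}}/\Zi{\beta_i}$ and $\nu_i = \Zi{\bar\beta_{i-1,i}}/\Zi{\beta_i}$, where $\bar\beta$ denotes the relevant midpoint, so that $\ln(1/\mu_i)$ and $\ln\nu_i$ are integrals of $m$ over half-intervals.

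For the $f$ bound, I would combine ${\rm Range}(f_{\beta_i,\beta_{i+1}}) \le 1$ with $1/\mu_i = \exp\bigl(z(\beta_i) - z(\bar\beta_{i,i+1})\bigr)$. Since $z$ is nonincreasing and $\bar\beta_{i,i+1} \le \beta_{i+1}$, we get $z(\beta_i) - z(\bar\beta_{i,i+1}) \le z(\beta_i) - z(\beta_{i+1})$, hence ${\rm Range}(f_{\beta_i,\beta_{i+1}})/\mu_i \le \exp\bigl(z(\beta_i)-z(\beta_{i+1})\bigr)$. I would then invoke \cref{lem:intervalbd}(1) with $\varepsilon = \tfrac{\ln\ell}{\ln n}$, which forces $z(\beta_i) - z(\beta_{i+1}) \le \tfrac{\ln\ell}{\ln n}$ with the stated probability, so that $\exp(z(\beta_i)-z(\beta_{i+1})) \le \ell^{1/\ln n}$, exactly the claimed bound; here I use that the TPA parameter $k = \ln H_{\max}$ is $\Theta(\ln n)$ for these models.

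For the $g$ bound, I would use ${\rm Range}(g_{\beta_{i-1},\beta_i}) \le \exp(\tfrac{\Delta_{i-1}}{2}H_{\max})$ together with $\nu_i = \exp\bigl(\int_{\bar\beta_{i-1,i}}^{\beta_i} m\bigr) \ge \exp(\tfrac{\Delta_{i-1}}{2}m(\beta_i))$, the latter because $m$ is nonincreasing and attains its minimum on $[\bar\beta_{i-1,i},\beta_i]$ at the right endpoint $\beta_i$. Dividing gives ${\rm Range}(g_{\beta_{i-1},\beta_i})/\nu_i \le \exp\bigl(\tfrac{\Delta_{i-1}}{2}(H_{\max} - m(\beta_i))\bigr)$. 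I would control the interval length with \cref{lem:intervalbd}(2), whose right-endpoint expectation is precisely $m(\beta_i)$: with $\varepsilon = \tfrac{\ln\ell}{\ln n}$ this yields $\Delta_{i-1} \le \varepsilon/m(\beta_i)$, so the exponent is at most $\tfrac{\varepsilon}{2}\bigl(\tfrac{H_{\max}}{m(\beta_i)} - 1\bigr)$, which matches $\varepsilon\,\alpha_0(i)$ with $\alpha_0(i) = \tfrac{H_{\max}}{2m(\beta_i)} - 1$ up to the leading constant, giving ${\rm Range}(g_{\beta_{i-1},\beta_i})/\nu_i \le \ell^{\alpha_0(i)/\ln n}$.

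The main obstacle is the $g$ estimate, where the asymmetry bites: the range scales with $H_{\max}$ while the mean $\nu_i$ scales only with the much smaller $m(\beta_i)$, producing the $\tfrac{H_{\max}}{2m(\beta_i)}$ factor in $\alpha_0(i)$. Pinning the exponent down to exactly $\alpha_0(i)$ rather than the cruder $\tfrac{H_{\max}}{2m(\beta_i)} - \tfrac12$ that a direct division yields requires careful bookkeeping, namely retaining the subtracted $\exp(\tfrac{\Delta_{i-1}}{2}H_{\min})$ term and the slack in the $\nu_i$ lower bound; I expect this constant-tracking to be the delicate part of the argument. Both bounds are probabilistic through \cref{lem:intervalbd}, so I would close with a union bound over the $\ell$ intervals, taking $\varepsilon$ a constant factor larger if needed to drive the per-interval failure probabilities down, and absorb them into the $1-\delta$ guarantee of \cref{thm:parchainefficiency}.
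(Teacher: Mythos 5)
Your proposal follows essentially the same route as the paper's own proof: both reduce everything to $z(\beta)\doteq\ln\Zi{\beta}$, use $z'(\beta)=-\Expect_{x\sim\gibbs{\beta}}[H(x)]$ and $z''\geq 0$, bound each range by a single exponential, lower-bound $\mu_i$ and $\nu_i$ via monotonicity of $\Expect[H(x)]$, and close with \cref{lem:intervalbd} at $\varepsilon=\nicefrac{\ln\ell}{\ln n}$ plus a union bound over the $\ell$ intervals (including the same silent identification $k=\ln H_{\max}\approx\ln n$; the paper's proof moreover sets $d=1$ and only reaches constant success probability, so your plan to inflate $\varepsilon$ by a constant factor is the right repair, not an extra burden). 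For part~1 your argument is a clean shortcut to the same intermediate bound $\mathrm{Range}(f_{\beta_i,\beta_{i+1}})/\mu_i\leq\exp\bigl(z(\beta_i)-z(\beta_{i+1})\bigr)$ that the paper reaches by a longer factoring computation; this part is fully correct.

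The shortfall you flag in part~2 --- obtaining exponent $\frac{H_{\max}}{2\Expect[H]}-\frac12$ rather than $\alpha_0(i)=\frac{H_{\max}}{2\Expect[H]}-1$ --- is not a defect your argument has and the paper's avoids; it is a defect of the paper's proof. Writing $D\doteq z(\beta_{i+1}-\Delta_i/2)-z(\beta_{i+1})$, the paper expresses the bound as $\exp\bigl(D\cdot\bigl(\frac{\Delta_i H_{\max}}{2D}-1\bigr)\bigr)$ and then asserts $\frac{\Delta_i H_{\max}}{2D}\leq\frac{H_{\max}}{2\Expect_{x\sim\gibbs{\beta_i}}[H(x)]}$. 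But convexity only gives $\frac{\Delta_i}{2D}\leq\frac{1}{\Expect_{x\sim\gibbs{\beta_{i+1}}}[H(x)]}$; in fact, since $D\leq\frac{\Delta_i}{2}\Expect_{x\sim\gibbs{\beta_i}}[H(x)]$, one has $\frac{\Delta_i H_{\max}}{2D}\geq\frac{H_{\max}}{\Expect_{x\sim\gibbs{\beta_i}}[H(x)]}$, so the paper's asserted inequality is off by a factor of two in the wrong direction (and evaluates the expectation at $\beta_i$ where the convexity chain produces $\beta_{i+1}$). What the paper's route legitimately yields is the exponent $\frac{H_{\max}}{\Expect_{x\sim\gibbs{\beta_{i+1}}}[H(x)]}-1$, which is \emph{weaker} than your $\frac{H_{\max}}{2\Expect_{x\sim\gibbs{\beta_i}}[H(x)]}-\frac12$ whenever $\Expect[H]\leq H_{\max}$, i.e., always. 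Your suggested rescue (retaining the subtracted $\exp(\frac{\Delta_{i-1}}{2}H_{\min})$ term) cannot recover the stated constant in general, since that term may be negligible; but the point is moot: neither proof attains the advertised $\alpha_0(i)$ exactly, the discrepancy is a factor $\ell^{1/(2\ln n)}$ that is immaterial to how the lemma feeds into \cref{thm:parchainefficiency}, and among the bounds actually proven, yours is the tightest.
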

\begin{proof}

Let $\Delta_i=\beta_{i+1}-\beta_i$. Thus, $f_i(x)=\exp\left(\frac{-\Delta_i}{2}H(x)\right)$ and $g_i(x)=\exp\left(\frac{\Delta_i}{2}H(x)\right)$. So we have:
\[
{\rm Range}(f_i)=\exp\left(\frac{-\Delta_i}{2}\min_x H(x)\right)-\exp\left(\frac{-\Delta_i}{2}\max_x H(x)\right)\leq \exp\left(\frac{-\Delta_i}{2}\min_x H(x)\right)
\]

and 
\[
{\rm Range}(g_i)=\exp\left(\frac{\Delta_i}{2}\max_x H(x)\right)-\exp\left(\frac{\Delta_i}{2}\min_x H(x)\right)\leq \exp\left(\frac{\Delta_i}{2}\max_x H(x)\right)
\]

\[
\mu_i = Z(\beta_{i}+\Delta_{i}/2)/Z(\beta_i)\quad \& \quad
\nu_i = Z( \beta_{i+1}-\Delta_{i}/2)/Z(\beta_{i+1}) 
\]

\begin{align}
    \frac{{\rm Range}(f_i)}{\mu_i} 
    & \leq  \frac{\exp\left(\frac{-\Delta_i}{2}\min_x H(x)\right)}{\exp\left(z(\beta_{i}+\Delta_i/2)-z(\beta_i)\right)}~,
    \frac{{\rm Range}(g_i)}{\nu_i} 
    & \leq  \frac{\exp\left(\frac{\Delta_i}{2}\max_x H(x)\right)}{\exp\left(z(\beta_{i+1}-\Delta_i/2)-z(\beta_{i+1})\right)}
\end{align}

Writing $\Delta_i/2= \frac{\Delta_i/2}{z(\beta_i+\Delta_i/2)-z(\beta_i)}(z(\beta_i+\Delta_i/2)-z(\beta_i))$, we get:
\begin{align*}
\frac{{\rm Range}(f_i)}{\mu_i}&\leq \exp\left(-\frac{\Delta_i}{2}\min_{x}H(x)-\left(z(\beta_{i}+\Delta_i/2)-z(\beta_i)\right)\right)
\\
&\leq \exp\left(\left(z(\beta_{i}+\Delta_i/2)-z(\beta_i)\right)\left(\frac{-\Delta_i\cdot \min_x H(x)}{2\left(z(\beta_{i}+\Delta_i/2)-z(\beta_{i})\right)}-1\right)\right)
\end{align*}
and 
$$\frac{{\rm Range}(g_i)}{\nu_i}\leq \exp\left(\left(z(\beta_{i+1}-\Delta_i/2)-z(\beta_{i+1})\right)\left(\frac{\Delta_i\cdot \max_x H(x)}{2\left(z(\beta_{i+1}-\Delta_i/2)-z(\beta_{i+1})\right)}-1\right)\right)$$

Let $z'$ and $z''$ be the first and second derivative of $z$ with respect to $\beta$. Note that $z'(\beta)=\mathbb{E}_{x\sim \gibbs{\beta}}[-H(x)]$. Since $z''\geq 0$ we have:   
$$   z'(\beta_i) < \frac{z(\beta_{i}+\Delta_i/2)-z(\beta_i)}{\Delta_i/2} < z'(\beta_{i}+\Delta_i/2)
$$
and 
$$ z'(\beta_{i+1}-\Delta_i/2) < \frac{z(\beta_{i+1})-z(\beta_{i+1}-\Delta_i/2)}{\Delta_i/2} < z'(\beta_{i+1}).$$
Which are equivalent to 
$
\frac{1}{z'(\beta_{i}+\Delta_i/2)} \leq \frac{\Delta_i/2}{z(\beta_{i}+\Delta_i/2)-z(\beta_i)} \leq \frac{1}{z'(\beta_{i})}
$ and $
\frac{1}{z'(\beta_{i+1})} \leq \frac{\Delta_i/2}{z(\beta_{i+1})-z(\beta_{i+1}-\Delta_i/2)} \leq \frac{1}{z'(\beta_{i+1}-\Delta_i/2)}
$.

Therefore, 
\begin{align}\label{eq:10}
    \frac{{\rm Range}(f_i)}{\mu_i} 
    & \leq \exp\left(\left(z(\beta_{i}+\Delta_i/2)-z(\beta_i)\right)\left(\frac{-\min_x H(x)}{2}\frac{1}{z'(\beta_{i})}-1\right)\right)\\
    & = \exp\left(\left(z(\beta_{i}+\Delta_i/2)-z(\beta_i)\right)\left(\frac{\min_x H(x)}{2} \frac{1}{\mathbb{E}[H]}-1\right)\right)\\
    &\leq \exp\left(z(\beta_i)-z(\beta_{i}+\Delta_i/2)\right)
\end{align}

Similarly for range of $g_i$s we have:

\begin{align}\label{eq:13}
    \frac{{\rm Range}(g_i)}{\nu_i} 
    & \leq \exp\left(\left(z(\beta_{i+1}-\Delta_i/2)-z(\beta_{i+1})\right)\left(\frac{-\max_x H(x)}{2}\frac{1}{z'(\beta_{i})}-1\right)\right)\\
    &\leq \exp\left(\left(z(\beta_{i+1}-\Delta_i/2)-z(\beta_{i+1})\right)\left(\frac{\max_x H(x)}{2\mathbb{E}_{\gibbs{\beta_i}}[H(X)]}-1\right)\right)
\end{align}

We now use \eqref{eq:10} together with  \cref{lem:intervalbd}. Setting $d=1$  we have, 
\begin{align*}
    \mathbb{P}\left(z{(\beta_{i})}-z{(\beta_{i+1})}>\frac{\log (3l/4)}{\log n}\right)=\exp(-\frac{\log (3l/4)}{\log n} \cdot k) &= (3/4)\exp(-\log l/\log n (\log n))\\
    &=(3/4)(1/l).
\end{align*}

Using union bound over all $1\leq i\leq \ell$ and that $z(\beta_i)-z(\beta_{i+1})\geq z(\beta_i)-z(\beta_{i}+\Delta_i/2)$, we conclude that with probability at least $3/4$ we have that for all $f_i$, ${\rm Range}(f_i)/\mu_i\leq \ell^{1/\log (n)}$.

Similarly using \eqref{eq:13}, the union bound, lemma \ref{lem:intervalbd} and that $z(\beta_i)-z(\beta_{i+1})\geq z(\beta_i-\Delta_i/2)-z(\beta_{i+1})$,
we can show that with constant probability all  $g_i$s generated by the \tpa\ schedule obey:  $\forall g_i;1\leq i\leq \ell, ~ {\rm Range}(g_i)/\nu_i\leq \exp\left((\log l/\log  n)\cdot (\alpha)\right)=\ell^{\alpha_0/\log n}$, where $\alpha_0=\frac{\max_x H(x)}{2\mathbb{E}_{\gibbs{\beta_i}}[H(X)]}-1 $.
\end{proof}

The following corollary is concluded from \cref{lem:rangeparalgo} and relative trace variance bounds:

\begin{coro} When $\varepsilon\leq \ell^{1/\log (n)}(1+\ell^{\alpha_0(i)}) \cdot\frac{\ell\tau_{\beta_i}}{(1-\Lambda_i)^{-1}}$,  \proc\ invoked on the $i$th iteration will stop using sample consumption of $\tilde{O}\left({\ell^2}\tau_i {\rm Reltrv}_i\right)$ note that this is improvement over classic bounds which are $\tilde{O}\left((1-\Lambda_i)^{-1} \mathbb{V}{\rm rel}_i\right)$. In total the sample complexity of \paralgo\ for  $\varepsilon\leq \ell^{1/\log (n)}\min_i(1+\ell^{\alpha_0(i)}) \cdot\frac{\ell\tau_{\beta_i}}{(1-\Lambda_i)^{-1}}$ is dominated by 
$ \tilde{O}\left({\ell^2}\sum_{i=1}^{\ell}\tau_i {\rm Reltrv}_i\right)~.$

\end{coro}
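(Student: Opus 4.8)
The plan is to obtain the corollary as a direct specialization of the per-call guarantee of \cref{thm:efficiency} to the $2\ell$ invocations of \proc\ that \paralgo\ performs, controlling the two terms of \eqref{eq:proc} with the range bounds of \cref{lem:rangeparalgo} and the relative-trace-variance inequalities of \cref{remark}. First I would fix an index $i$ and record what \cref{thm:efficiency} gives for the pair $\proc(\gibbschain{H}{\beta_i},f_{\beta_i,\beta_{i+1}})$ and $\proc(\gibbschain{H}{\beta_i},g_{\beta_{i-1},\beta_i})$, each run at precision $\varepsilon'$ and confidence $\delta'=\delta/(2\ell)$: together they consume, w.h.p., $\tilde{\mathcal{O}}\bigl(T_i\,{\rm relR}_i/\varepsilon'+\tau_i\,{\rm Reltrv}_i/\varepsilon'^{2}\bigr)$ steps, with ${\rm relR}_i$ and ${\rm Reltrv}_i$ as defined around \cref{thm:parchainefficiency}, and the union bound over all $2\ell$ calls absorbed into the hidden logarithmic factors.

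Next I would deflate the precision. Because $\sqrt[\ell]{1+\varepsilon}=1+\Theta(\varepsilon/\ell)$, the per-call precision obeys $\varepsilon'=\frac{\sqrt[\ell]{1+\varepsilon}-1}{\sqrt[\ell]{1+\varepsilon}+1}=\Theta(\varepsilon/\ell)$, so $1/\varepsilon'=\Theta(\ell/\varepsilon)$ and $1/\varepsilon'^{2}=\Theta(\ell^{2}/\varepsilon^{2})$. Substituting turns the two terms into a \emph{range term} $\tilde{\mathcal{O}}(\ell\,T_i\,{\rm relR}_i/\varepsilon)$ and a \emph{trace-variance term} $\tilde{\mathcal{O}}(\ell^{2}\tau_i\,{\rm Reltrv}_i/\varepsilon^{2})$. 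I would then identify the crossover: the trace-variance term dominates precisely when $\varepsilon\lesssim \ell\,\tau_i\,{\rm Reltrv}_i/(T_i\,{\rm relR}_i)$. Plugging in ${\rm relR}_i\le \ell^{1/\log n}+\ell^{\alpha_0(i)/\log n}$ from \cref{lem:rangeparalgo}, $T_i\le(1-\Lambda_i)^{-1}$, and absorbing the $\Theta(1)$-order ${\rm Reltrv}_i$ gives the threshold governing the statement; below it the $i$th call runs $\tilde{\mathcal{O}}(\ell^{2}\tau_i\,{\rm Reltrv}_i/\varepsilon^{2})$ steps, which with the common $1/\varepsilon^{2}$ suppressed is exactly the advertised per-call bound.

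To finish, I would take the minimum of the per-index thresholds and sum the dominant terms over $i=1,\dots,\ell$, yielding total cost $\tilde{\mathcal{O}}\bigl(\ell^{2}\sum_{i=1}^{\ell}\tau_i\,{\rm Reltrv}_i\bigr)$, and then read off the comparison with black-box MCMC from \cref{remark}: each summand uses the \emph{true} relaxation time $\tau_i\le(1-\Lambda_i)^{-1}$ rather than its a priori upper bound, and \eqref{eq:relvar1} gives ${\rm Reltrv}_i\le \mathbb{V}_{\rm rel}[f_{\beta_i,\beta_{i+1}}]+\mathbb{V}_{\rm rel}[g_{\beta_{i-1},\beta_i}]=\mathbb{V}{\rm rel}_i$, so each term is dominated by the classical $\tilde{\mathcal{O}}((1-\Lambda_i)^{-1}\mathbb{V}{\rm rel}_i)$ quantity.

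The step I expect to be the main obstacle is the threshold bookkeeping rather than any deep argument: the quantities $\tau_i$, ${\rm Reltrv}_i$, $\mu_i$, $\nu_i$ are all unknown to \proc\ and are controlled only empirically through its adaptive doubling schedule, so I must invoke \cref{thm:efficiency} purely as a black box and resist re-deriving its internal stopping guarantees. The remaining care is in the $\varepsilon\mapsto\varepsilon'$ deflation (confirming the $\Theta(\varepsilon/\ell)$ estimate holds over the relevant range and that the factor-$\ell^{2}$ is genuinely the price of the union bound) and in verifying that the aggregate condition uses the correct, tightest $\min_i$ over the per-index thresholds, so that every call is simultaneously in its trace-variance-dominated regime.
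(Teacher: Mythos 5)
Your proposal is correct and follows essentially the same route as the paper: the paper's own justification is precisely to specialize the per-call guarantee of \cref{thm:efficiency} (with $\varepsilon'=\Theta(\varepsilon/\ell)$, $\delta'=\delta/2\ell$), plug in the range bounds of \cref{lem:rangeparalgo} together with $T_i\le(1-\Lambda_i)^{-1}$, locate the crossover where the trace-variance term dominates, and sum over the $\ell$ indices, invoking \eqref{eq:relvar1} for the comparison with the classical $\tilde{O}\bigl((1-\Lambda_i)^{-1}\mathbb{V}{\rm rel}_i\bigr)$ bound. The only caveat is that your carefully derived crossover places ${\rm relR}_i$ in the \emph{denominator} of the threshold, whereas the stated corollary places the range factor in the numerator; your version is the one that actually follows from the computation, so you should not paper over that discrepancy by asserting it is ``exactly'' the stated threshold.
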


\subsection{Further experimental results}\label{app:ex}

\begin{figure*}[h]
\hspace{1cm}
\scalebox{1}{
  \begin{subfigure}{0.3\textwidth}
    \includegraphics[width=6cm,height=6cm]{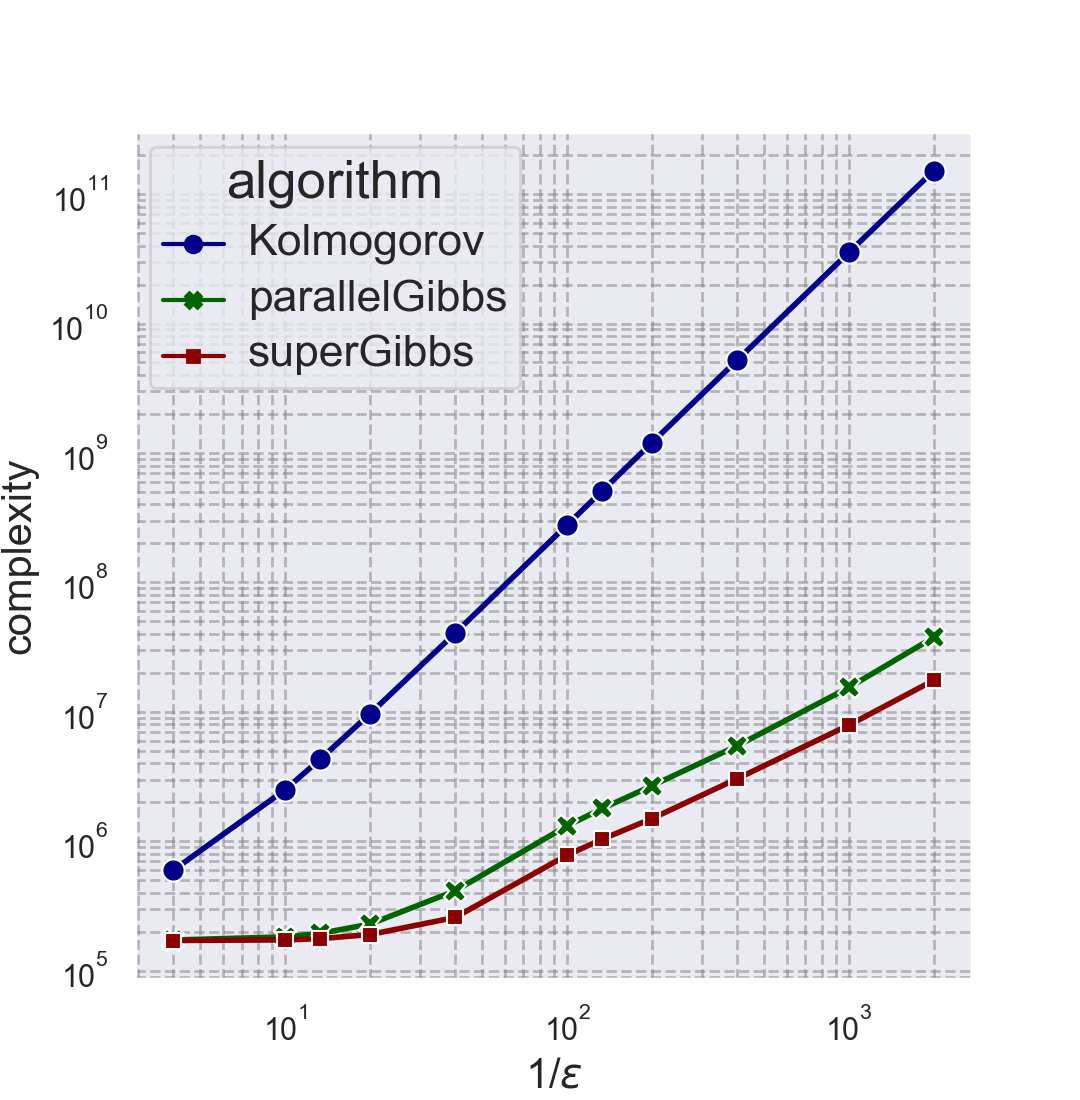}
    \caption{$\beta = .05$, 2$\times$2 lattice} \label{fig:1a}
   \end{subfigure}
  }%
  \hspace*{\fill}
  \scalebox{1}{
  \begin{subfigure}{0.3\textwidth}
    \includegraphics[width=6cm,height=6cm]{figures/complexity_vs_eps_Ising2_2_v1.png}
    \caption{$\beta = .01$, 3$\times$3 lattice} \label{fig:1b}
  \end{subfigure}
  }

\hspace{1cm}
  \scalebox{1}{
  \begin{subfigure}{0.3\textwidth}
    \includegraphics[width=6cm,height=6cm]{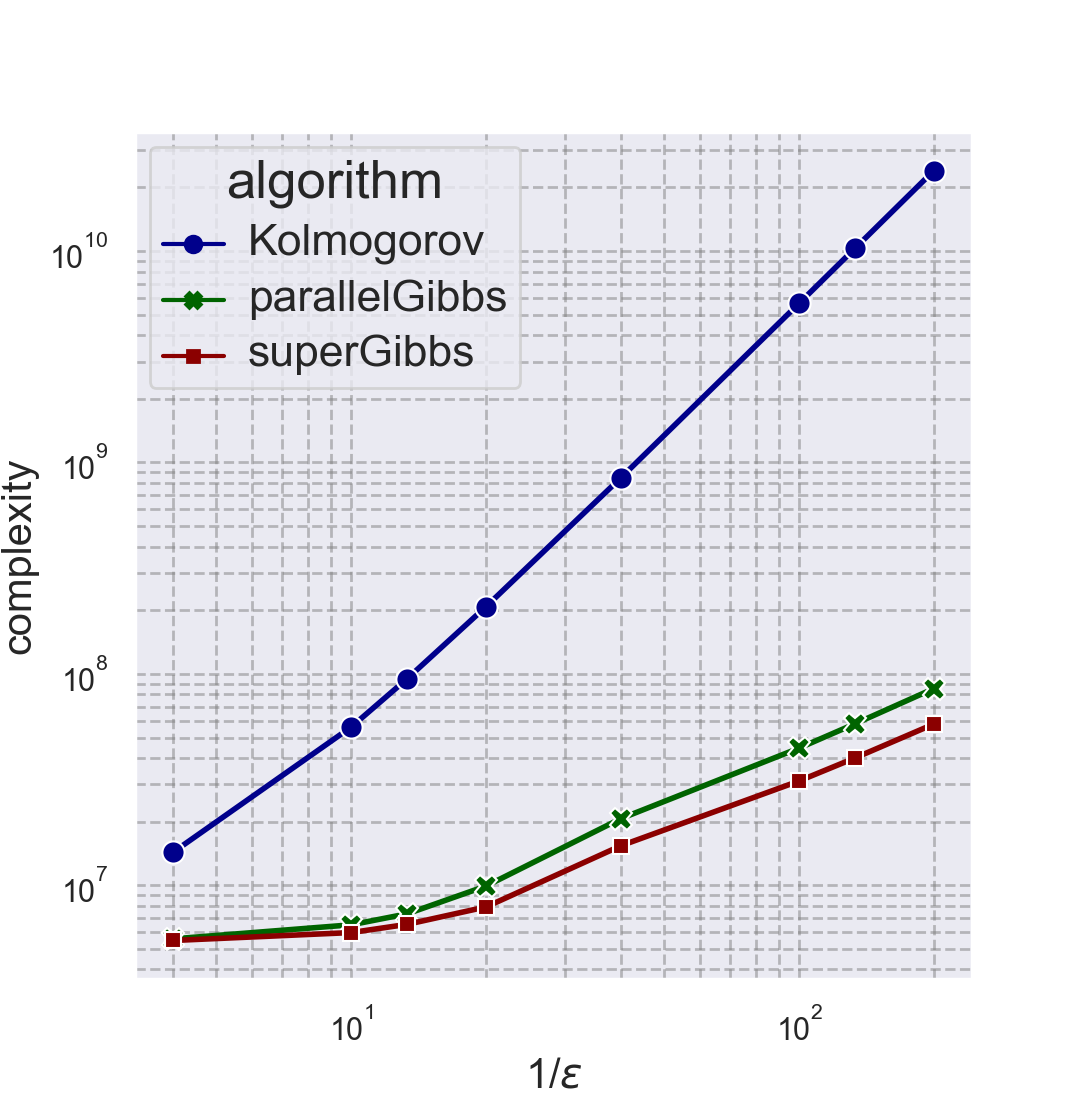}
    \caption{$\beta = .02$, 4$\times$4 lattice} \label{fig:1c}
  \end{subfigure}
  }%
  \hspace*{\fill}
  \scalebox{1}{
  \begin{subfigure}{0.3\textwidth}
    \includegraphics[width=6cm,height=6cm]{figures/complexity_vs_eps_Ising4_2_v1.png}
    \caption{$\beta = .002$, 6$\times$6 lattice} \label{fig:1d}
  \end{subfigure}}%
\caption{\footnotesize Comparison of sample complexity  on Ising models.}
\end{figure*}

 \bibliographystyle{abbrv}
\bibliography{ArXiv}
\end{document}